\documentclass{article}

\makeatletter
\def\input@path{{neurips_2026/}}
\makeatother
\usepackage[preprint]{neurips_2026}

\usepackage[utf8]{inputenc}
\usepackage[T1]{fontenc}
\usepackage{hyperref}
\hypersetup{colorlinks=true,
linkcolor=orange,
citecolor=black,
urlcolor=black}

\usepackage{url}
\usepackage{booktabs}
\usepackage{titletoc}
\usepackage{amsfonts}
\usepackage{amsmath}
\usepackage{amssymb}
\usepackage{nicefrac}
\usepackage{microtype}
\usepackage{xcolor}
\usepackage{graphicx}
\usepackage{subcaption}
\usepackage{appendix} 
\usepackage{hyperref} 
\usepackage{cleveref} 
\usepackage{amsthm}
\usepackage{enumitem}
\usepackage{thm-restate}
\usepackage{caption}
\usepackage{mathrsfs}
\newcommand{\syml}[1]{\begingroup\color{orange}#1\endgroup}
\newcommand{\symr}[1]{\begingroup\color{cyan}#1\endgroup}

\setlist[enumerate]{noitemsep, nolistsep}

\newcommand{\N}{\mathbb N}

\newcommand{\R}{\mathbb R}

\newcommand{\E}{\mathbb E}

\newcommand{\tr}{\operatorname{tr}}

\DeclareMathOperator*{\mat}{Mat}
\DeclareMathOperator*{\diag}{diag}

\newcommand{\inv}{^{-1}}

\usepackage[ruled,vlined]{algorithm2e}

\crefname{algocf}{alg.}{algs.}
\Crefname{algocf}{Algorithm}{Algorithms}

\newtheorem{theorem}{Theorem}[section]
\newtheorem{lemma}[theorem]{Lemma}

\theoremstyle{definition}
\newtheorem{nb}[theorem]{Note}
\newtheorem{definition}[theorem]{Definition}
\theoremstyle{remark}
\newtheorem{remark}[theorem]{Remark}

\usepackage{xspace}

\newcommand{\lmo}{\operatorname{lmo}_{\|\cdot\|}}
\newcommand{\Muon}{\texttt{Muon}\xspace}
\newcommand{\muon}{\texttt{Muon}\xspace}
\newcommand{\Freon}{\texttt{Freon}\xspace}
\newcommand{\freon}{\texttt{Freon}\xspace}
\newcommand{\Kaon}{\texttt{Kaon}\xspace}
\newcommand{\kaon}{\texttt{Kaon}\xspace}
\newcommand{\sgd}{\texttt{SGD}\xspace}
\newcommand{\tsgd}{\texttt{TruncatedSGD}\xspace}

\usepackage[most]{tcolorbox}

\newtcolorbox{takeaway}[1][]{
    colback=gray!10,             
    colframe=black,            
    boxrule=1.0pt,             
    arc=5pt,                  
    auto outer arc,            
    left=10pt, right=10pt,     
    top=10pt, bottom=10pt,     
    #1                         
}

\title{Muon is Not That Special: \\ Random or Inverted Spectra Work Just as Well}

\author{%
  Zakhar Shumaylov$^1$ \,
  Natha\"el Da Costa$^2$ \,
  Peter Zaika$^1$ \,
  B\'alint Mucs\'anyi$^2$ \,
  Alex Massucco$^1$ \\
  \textbf{Yoav Gelberg}$^3$ \,
  \textbf{Carola-Bibiane Sch\"onlieb}$^1$ \,
  \textbf{Yarin Gal}$^3$ \,
  \textbf{Philipp Hennig}$^2$ \\
    \\
     $^1$University of Cambridge\phantom{$^1$} \; $^2$University of T\"ubingen \phantom{$^2$} \;  $^3$University of Oxford\phantom{$^3$}  
}

\begin{document}

\maketitle

\begin{abstract}
The recent empirical success of the \texttt{Muon} optimizer has renewed interest in non-Euclidean optimization, typically justified by similarities with second-order methods, and linear minimization oracle (LMO) theory. In this paper, we challenge this geometric narrative through three contributions, demonstrating that precise geometric structure is not the key factor affecting optimization performance. First, we introduce \texttt{Freon}, a family of optimizers based on Schatten (quasi-)norms, powered by a novel, provably optimal QDWH-based iterative approximation. \Freon naturally interpolates between \texttt{SGD} and \texttt{Muon}, while smoothly extrapolating into the quasi-norm regime. Empirically, the best-performing Schatten parameters for GPT-2 lie strictly within the quasi-norm regime, and thus \textit{cannot} be represented by any unitarily invariant LMO.
Second, noting that \texttt{Freon} performs well across a wide range of exponents, we introduce \texttt{Kaon}, an absurd optimizer that replaces singular values with random noise. Despite lacking any coherent geometric structure, \texttt{Kaon} matches \texttt{Muon}'s performance and retains classical convergence guarantees, proving that strict adherence to a precise geometry is practically irrelevant.
Third, having shown that geometry is not the primary driver of performance, we demonstrate it is instead controlled by two local quantities: \textit{alignment} and \textit{descent potential}. Ultimately, each optimizer must tune its step size around these two quantities. While their dynamics are difficult to predict a priori, evaluating them within a stochastic random feature model yields a precise insight: \Muon succeeds not by tracking an ideal global geometry, but by guaranteeing step-size optimality.
\end{abstract}

\section{Introduction}

First-order optimization algorithms have become increasingly central to modern machine learning, fueled by their routine use in training models with billions of parameters on trillions of tokens. While adaptive gradient methods like \texttt{AdamW} \citep{kingma2017adammethodstochasticoptimization, duchi2011adaptive, loshchilov2019decoupledweightdecayregularization} have long served as the dominant baselines, recent years have seen a surge of interest in matrix-based and spectral optimizers \citep{gupta2018shampoo, vyassoap, jordan2024muon}.

Algorithms such as \texttt{Shampoo} and \Muon have demonstrated remarkable success at scale \citep{liu2025muonscalablellmtraining, 5team2025glm45agenticreasoningcoding, kimiteam2025kimik2openagentic, deepseekai2026deepseekv4}, sparking a renaissance in the study of non-Euclidean descent methods \citep{bernstein2024old} and an increasing suite of proposed modifications \citep{du2026newtonmuonoptimizer, ahn2025dion, riabinin2025gluon, si2025adamuon, amsel2025polarexpressoptimalmatrix, gong2026aronewlensmatrix}. Theoretically, this success is almost universally justified through the lens of Linear Minimization Oracles (LMOs) and strict geometric preconditioning \citep{pethick2025training, kovalev2025understanding, fan2025implicit}. In this prevailing narrative, \Muon's performance stems from its exact adherence to a specific target geometry via complete spectrum whitening. Analysis of simple models \citep{kim2026sharpcapacityscalingspectral, ma2026preconditioningbenefitsspectralorthogonalization} commonly confirms this narrative, especially under distributional heavy tails \citep{yu2026signbasedoptimizerseffectiveheavytailed, wang2025muonoutperformsadamtailend}.

However, recent benchmarking studies have begun to reveal cracks in this geometric facade, noting that advantages of spectral optimizers diminish under proper baseline tuning and are highly sensitive to batch size \citep{wen2025fantasticpretrainingoptimizers, semenov2025benchmarkingoptimizerslargelanguage}. Concurrently, \citet{su2025isotropiccurvaturemodelunderstanding} 
showed with an isotropic curvature model that optimal updates need only preserve the ordering of singular values, whereas \Muon's whitening imposes much stronger constraints. Motivated by this, we initially set off to determine an optimal spectral geometry. Yet, much like analyses of \Muon on quadratics \citep{gonon2026insightsmuonsimplequadratics}, we ultimately find no simple geometric recipe reliably improves performance. Instead, we are forced to ask: does the exact geometric formulation of spectral optimizers matter, or is the LMO framework masking a more fundamental driver of optimization success?

In this work, we argue that while viewing optimization through the lens of non-Euclidean geometry provides an elegant theoretical framework, it fails to capture the core phenomena underlying the performance of spectral optimizers in deep learning. By systematically relaxing the assumptions of geometric preconditioning, we make the following specific contributions:

\begin{figure*}[t]
    \centering
    \includegraphics[width=\textwidth]{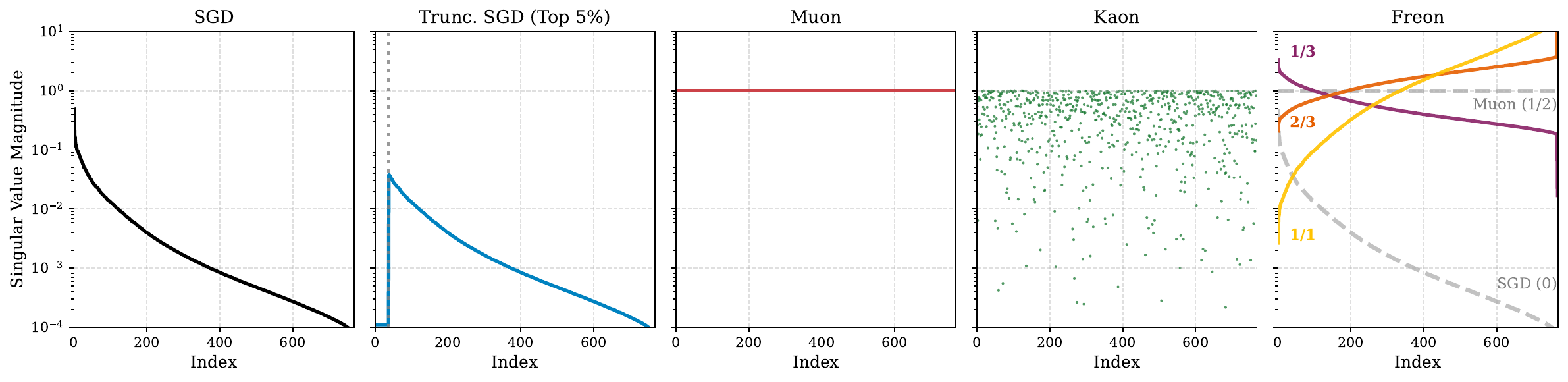}
    \caption{Transformations of the gradient singular value spectrum under various spectral optimizers for a GPT-2 layer. From left to right: \textbf{SGD} shows standard gradient spectrum. \textbf{Truncated SGD} explicitly zeros out the largest $5\%$ of singular values. \textbf{Muon} maps all singular values uniformly to $1.0$. \textbf{Kaon} sets the values randomly. Finally, \textbf{Freon} applies normalized updates of the form $(GG^\top)^{-c}G$, smoothly interpolating the spectral decay between standard SGD ($c=0$), Muon ($c=1/2$), and the pseudoinverse-like endpoint ($c=1$). Note the logarithmic scale on the y-axis.}
    \label{fig:optimizer_spectra}
\end{figure*}
\begin{enumerate}[leftmargin=*]
    \item To explore the broader space of order-preserving updates, we introduce \Freon, a family of Schatten (quasi-)norm updates of the form $(GG^\top)^{-c}G$. \Freon naturally interpolates between SGD ($c=0$) and Muon ($c=1/2$). Through a symmetry analysis analogous to \citet{yen2025lora}, we argue that extrapolation into the $c \geq 1/2$ regime is of immense theoretical interest (\Cref{thm:symm_informal}), despite plunging into a quasi-norm regime that fundamentally breaks standard LMO theory (\Cref{thm:preconditioned}). To compute these updates stably, we developed an optimal (\Cref{thm:best_approximation_fractional,thm:optimality}) QDWH-based iteration utilizing rational approximations (\Cref{alg:freon}) similar to \citet{nakatsukasa2010optimizing,zolopd}, extending the polar express theory of \citet{amsel2025polarexpressoptimalmatrix}. Empirically, unlike simple quadratics, we find that optimal exponents for a GPT-2 model concentrate in the quasi-norm region $c\in[1/2,1]$ (\Cref{fig:freon_layerwise_range_actual,fig:pq_lr_heatmap}), but varies significantly between batches. As this is strictly outside the range of any unitarily invariant norm, it demonstrates conclusively that standard LMO theory cannot be enough to explain performance gains.

    \item Our empirical sweeps in \Cref{fig:nanogpt_lr_sensitivity,fig:val_loss_all} revealed a profound conundrum: almost all \Freon updates with $c>0$ performed remarkably similarly, provided large singular values were sufficiently suppressed. This led us to hypothesize that the exact geometric structure of the update is largely irrelevant. To test this, we introduce \Kaon, an optimizer that computes no LMO, targets no specific Schatten norm, and simply replaces the gradient's singular values with noise (\Cref{alg:kaon}). Despite possessing zero coherent geometry, \Kaon closely matches \Muon's performance at scale (\Cref{fig:nanogpt_lr_sensitivity,fig:nanogpt_dense_val}) and curiously retains classical convergence guarantees (\Cref{thm:kaon_convergence}). This shows that precise target spectra are largely irrelevant for optimization performance.



    \item If LMOs are discarded, how can we mathematically understand the gap between all these updates? Extending the framework of \citet{davis2026spectralgradientupdateshelp}, we decompose the exact local Taylor expansion into two structurally revealing quantities: \textbf{batch gradient alignment} ($\gamma_k$) and \textbf{directional descent potential} ($\Phi_k$). While this is only mechanistic in quadratic models (\Cref{sec:asymptotics}), it cleanly exposes the fundamental tradeoff (\Cref{sec:empirics}): different optimizers implicitly choose how much alignment to sacrifice for increased descent potential, but actually realizing the benefit of a large $\Phi_k$ hinges on using step sizes that are tuned to exploit it.    
\end{enumerate}

\begin{figure*}[t]
    \centering
    \begin{minipage}[t]{0.48\textwidth}
        \centering
        \vspace{0pt}
        \includegraphics[width=\linewidth]{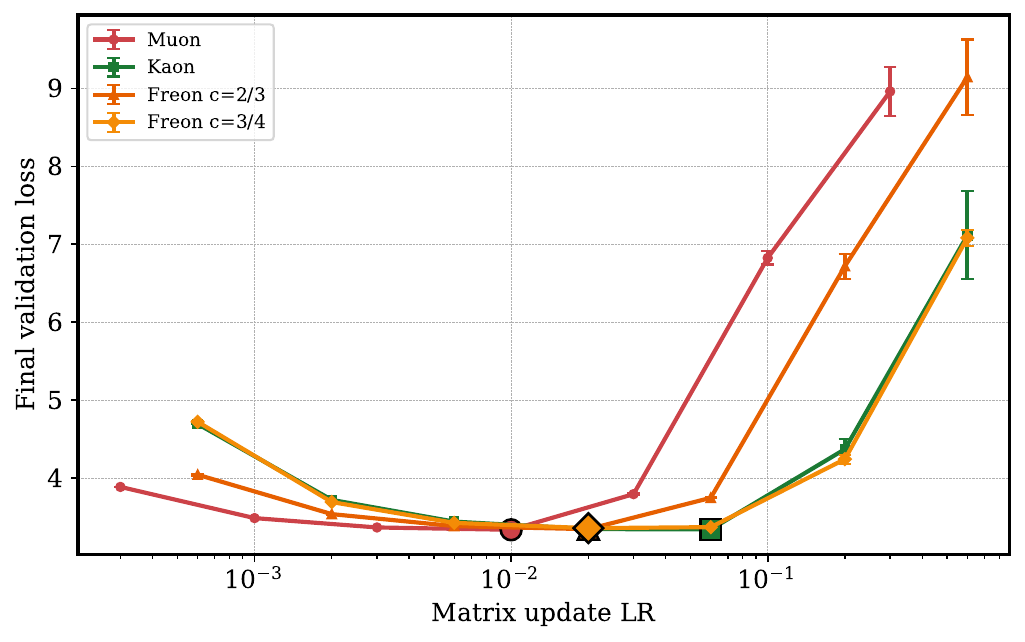}
        \caption{\textbf{NanoGPT learning rate sensitivity.} Final validation loss as the tuned learning rates are jointly scaled, averaged over three seeds with $\pm 2$ std error bars. Black outlines mark the best learning rate per optimiser.}
        \label{fig:nanogpt_lr_sensitivity}
    \end{minipage}
    \hfill
    \begin{minipage}[t]{0.48\textwidth}
        \centering
        \vspace{0pt}
        \includegraphics[width=\linewidth]{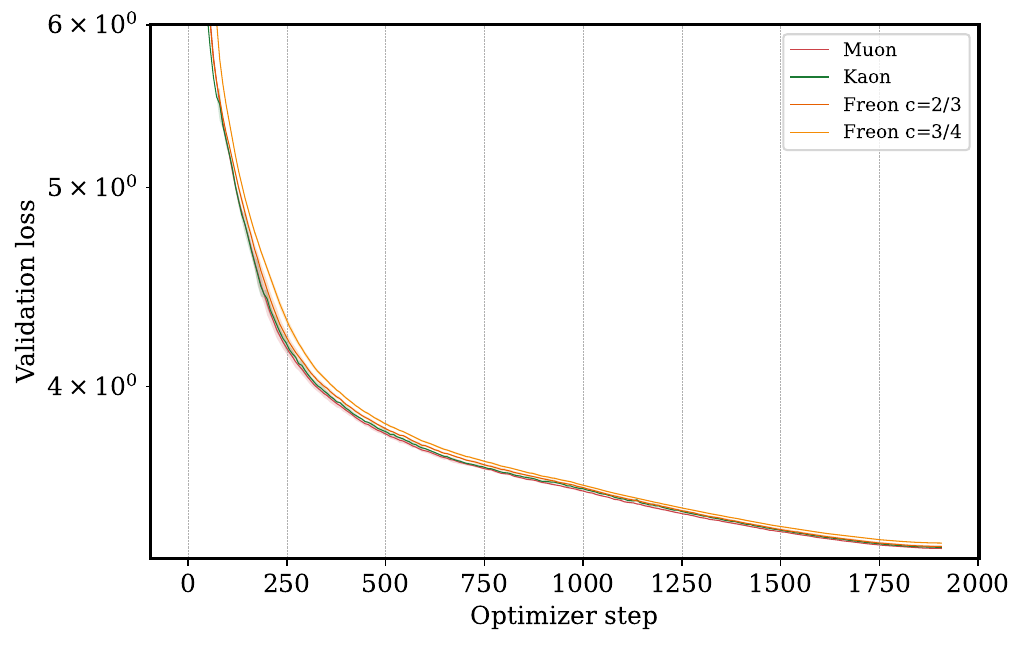}
        \caption{\textbf{NanoGPT validation curves.} Validation-loss curves for Muon, Kaon, and Freon at $c=2/3$ and $c=3/4$. Lines are averages over three seeds, with $\pm 2$ std bands. The y-axis is clipped to $[3.3,6]$ for legibility.}
        \label{fig:nanogpt_dense_val}
    \end{minipage}
\end{figure*}

\begin{figure*}[t]
\centering
\newcommand{\vpad}{\vphantom{(A_t)^p_{t}}}

\begin{minipage}[t]{0.32\textwidth}
\begin{algorithm}[H]
\footnotesize\DontPrintSemicolon\SetAlgoLined
\caption{\Muon}
\label{alg:muon}
Set $X_0 = G/\|G\|_F$\;
\phantom{Set $A_0 = X_0X_0^\top$}\; 
\For{$t = 1, 2, \ldots, T$}{
    $A_t = X_{t-1}X_{t-1}^\top \vpad$\;
    $B_t = bA_t + cA_t^2\phantom{B^{a}\,X_{t-1}} \vpad$\;
    $X_t = aX_{t-1} + B_tX_{t-1}\vphantom{B^{b}\,X_{t-1}} \vpad$\;
}
\phantom{\resizebox{.86\linewidth}{!}{$\mu = \bigl(\frac{1}{n}\langle X_T, G\rangle\bigr)^{(q+2p)/(2p-2q)}$}}\;
\Return $X_T$\;
\end{algorithm}
\end{minipage}\hfill%
\begin{minipage}[t]{0.32\textwidth}
\begin{algorithm}[H]
\footnotesize\DontPrintSemicolon\SetAlgoLined
\caption{\Kaon}
\label{alg:kaon}
Set $X_0 = G/\|G\|_F$\;
\phantom{Set $A_0 = X_0X_0^\top$}\; 
\For{$t = 1, 2, \ldots, T$}{
    $A_t = X_{t-1}X_{t-1}^\top \vpad$\;
    $B_t = (I - A_t)^2\phantom{B^{a}\,X_{t-1}}\vpad$\;
    $X_t = 4.1\cdot B_tX_{t-1}\phantom{B^{b}\,X_{t-1}} \vpad$\;
}
\phantom{\resizebox{.86\linewidth}{!}{$\mu = \bigl(\frac{1}{n}\langle X_T, G\rangle\bigr)^{(q+2p)/(2p-2q)}$}}\;
\Return $X_T / 1.175\vpad $\;
\end{algorithm}
\end{minipage}\hfill%
\begin{minipage}[t]{0.32\textwidth}
\begin{algorithm}[H]
\footnotesize\DontPrintSemicolon\SetAlgoLined
\caption{\Freon}
\label{alg:freon}
Set $X_0 = G/\|G\|_F$\;
Set $A_0 = X_0X_0^\top$\;
\For{$t = 1, 2, \ldots, T$}{
    $B\phantom{_t} = R_t(A_{t-1})\phantom{X_{t-1}X_{t-1}^\top}\vpad$\;
    $X_t = B^{a}\,X_{t-1} \vpad$\;
    $A_t = B^{b}\,A_{t-1} \vpad$\;
}
\resizebox{.86\linewidth}{!}{$\mu = \bigl(\frac{1}{n}\langle X_T, G\rangle\bigr)^{(a+2b)/(2a-2b)}$} \;
\Return $X_T/\mu$\;
\end{algorithm}
\end{minipage}
\caption*{\textbf{Algorithms}: Simplified pseudocode illustrating the main differences between the optimizers considered. Full algorithm for Freon is shown in \Cref{alg:coupled_chol}, utilizing block-QR for the rational map.}
\end{figure*}
\section{Background and Methods}\label{sec:background}
We begin by considering the classical approach to introducing \muon, based on the idea of regularized steepest descent. Given a norm $\|\cdot\|$ on an inner product space (spectral norm in the case of \muon), we define the Linear Minimization Oracle (LMO) and the corresponding dual norm as:
\begin{equation}
\label{eq:lmo}
\lmo({V})=\underset{\|{U}\|\leq1}{\arg\max\;} \langle{U}, {V}\rangle \qquad \|{V}\|_*=\underset{\|{U}\|=1} {\max}\langle U, V\rangle = \langle V,\lmo(V)\rangle.
\end{equation}
Then, for a differentiable loss function $f: \mathbb{R}^d \rightarrow \mathbb{R}$, with $G_k = \nabla f(X_k)$,
regularized steepest descent updates by minimizing a first-order approximation of the loss under a quadratic penalty:
\begin{equation*}
{X}_{k+1}=\underset{{X}}{\arg \min }\left\{f({X}_k)+\left\langle{G}_k,
{X}-{X}_k\right\rangle+\frac{1}{2 \eta}\left\|{X}-{X}_k\right\|^2\right\} = {X}_k-\eta\left\|{G}_k\right\|_* \lmo\left({G}_k\right).
\end{equation*}
For example, for \Muon, $\lmo (W) = (WW^\top)^{-1/2}W$, where the inverse is taken in the sense of a Moore-Penrose pseudo-inverse.
Unless otherwise specified, the proofs of theorems in this section can be found in \Cref{app:proofs}.


\subsection{Limitations of the LMO Framework}\label{sec:limitations_lmo}
For simplicity, in this section we restrict our attention to a single layer, with a matrix domain $\R^{m\times n}$. We consider a differentiable loss function $f:\R^{m\times n} \to \R$. Let $r = \min(m, n)$ denote the maximal rank, and write the singular value decomposition $G_k = \nabla f(X_k) = U_k \diag(\sigma_k) V_k^\top$ with, for simplicity, strictly positive singular values $\sigma_k \in \mathbb{R}_{>0}^r$. We define Preconditioned Spectral Descent updates as updates of the form $X_{k+1} = X_k - \alpha_k\langle G_k,D_k\rangle D_k$ with $D_k = U_k \operatorname{diag}(p_k(\sigma_k)) V_k^{\top}$ for some sequence of vector valued mappings $p_k: \mathbb{R}_{>0}^r \to \mathbb{R}_{>0}^r$.

The two simplest members of this family are gradient descent (GD) and \Muon (spectral descent). GD sets $p_k(S) = S/\|S\|_2$, while \Muon sets $p_k(S) = {1}$, practically achieved using Newton--Schulz iterations (\Cref{alg:muon}). However, not all preconditioned spectral descent updates are steepest descent updates: \freon for $c>1/2$, \tsgd and \kaon are not, based on the following:\begin{restatable}[Preconditioned Spectral Descent vs. Regularized Steepest Descent]{theorem}{thmpreconditioned} \label{thm:preconditioned}Fix the function $f$ and the point $X_k$. 
A regularized steepest descent update w.r.t.~a unitarily invariant matrix norm from $X_k$ is a preconditioned spectral descent update. On the other hand, a preconditioned spectral descent update from $X_k$ is a steepest descent update w.r.t.~a unitarily invariant matrix norm only if $p_k$ preserves the order $\leq$ of the entries of $\sigma_k$.
\end{restatable}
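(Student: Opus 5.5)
The plan is to reduce both directions to statements about symmetric gauge functions via von Neumann's characterization. Every unitarily invariant norm on $\R^{m\times n}$ has the form $\|X\| = \phi(\sigma(X))$, where $\phi$ is a symmetric gauge function on $\R^r$, meaning a norm invariant under permutations and sign changes. Its dual is $\|\cdot\|_* = \phi^*(\sigma(\cdot))$, with $\phi^*$ the dual gauge. We also use von Neumann's trace inequality: $\langle U, V\rangle \le \sum_i \sigma_i(U)\sigma_i(V)$, with equality when $U$ and $V$ share singular vectors and both singular value vectors are sorted in decreasing order.

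\textbf{First direction.} By the displayed formula, the steepest descent step is $X_k - \eta\|G_k\|_*\,\lmo(G_k)$, so it suffices to show that some element of $\lmo(G_k)$ has the form $U_k\diag(s)V_k^\top$ with $s \in \R^r_{\ge 0}$.
\begin{enumerate}
\item By von Neumann, $\max_{\|U\|\le 1}\langle U, G_k\rangle = \max_{\phi(s)\le 1} \langle s, \sigma_k\rangle$, and the maximum is attained at $U = U_k\diag(s^\star)V_k^\top$, where $s^\star$ maximizes the vector problem.
\item By sign invariance of $\phi$ we may take $s^\star \ge 0$.
\item Setting $p_k(\sigma_k) = s^\star$ and $\alpha_k = \eta\|G_k\|_*/\langle G_k, D_k\rangle$ gives the claimed form. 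For the normalization, note that $\langle G_k, D_k\rangle = \phi^*(\sigma_k) > 0$.
\end{enumerate}
There is a technicality: the definition requires strictly positive entries, but a zero entry of $s^\star$ can occur (e.g.\ for the nuclear norm). I would either allow $p_k$ to map into $\R^r_{\ge 0}$ in the proof, or note that the first direction holds with this mild relaxation. If the paper's convention demands strict positivity, I would remark on it rather than fight it.

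\textbf{Converse.} Suppose $X_k - \alpha_k\langle G_k, D_k\rangle D_k$ equals a steepest descent step for some unitarily invariant norm, so $D_k$ is a positive multiple of an element $U^\star$ of $\lmo(G_k)$. Write $s = p_k(\sigma_k) > 0$. Assume for contradiction that order is violated: $\sigma_{k,i} < \sigma_{k,j}$ but $s_i > s_j$. Let $\tilde s$ be $s$ with entries $i$ and $j$ swapped. By permutation invariance, $\phi(\tilde s) = \phi(s)$, so $\tilde U = U_k\diag(\tilde s)V_k^\top$ is also feasible after the same scaling. Moreover,
\[
\langle \tilde s - s, \sigma_k\rangle = (s_i - s_j)(\sigma_{k,j} - \sigma_{k,i}) > 0,
\]
contradicting optimality of $U^\star$. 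The scaling is legitimate because $D_k$ and $U^\star$ differ by a positive factor and the LMO objective is linear. Hence $p_k$ must satisfy $\sigma_{k,i} < \sigma_{k,j} \Rightarrow s_i \le s_j$, which is exactly order preservation (weakly, with ties unconstrained).

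\textbf{Main obstacle.} I expect the delicate step to be the identification in the first direction: justifying rigorously that the maximizer of the LMO can be chosen to share the singular vectors of $G_k$. This needs the equality case of von Neumann's inequality, or equivalently the formula for the subdifferential of $\phi\circ\sigma$ (Lewis). The converse is only the simple swap argument. Since the statement says \emph{only if}, we do not need to show that order preservation is sufficient.
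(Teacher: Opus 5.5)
Your proposal is correct and follows the paper's own route: pass to the symmetric gauge on singular values so that the LMO can be taken of the form $U_k\diag(s^\star)V_k^\top$, then use the coordinate-swap argument for the converse. Your version is more careful than the paper's proof on three points. First, the paper simply asserts the shared-singular-vector form; your justification in fact needs only the von Neumann inequality together with evaluation at $U_k\diag(s)V_k^\top$, not its equality case. Second, the paper claims a strict gain from the swap even when $\sigma_{k,i}=\sigma_{k,j}$, where the gain is actually zero, so your ties-unconstrained conclusion is the correct one. Third, the paper overlooks the zero entries that the LMO has for norms such as the nuclear norm, which you rightly flag.
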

\begin{nb}
This theorem exposes a fundamental limitation of LMOs; maximizing descent cannot remove `bad' large or intermediate singular directions from a batched gradient.
\end{nb}

But, as we show, LMOs are not necessary for convergence of preconditioned spectral descent.\begin{restatable}[Convergence of Preconditioned Spectral Descent]{theorem}{thmcvgspectral}\label{thm:zh_conv}
Let $\|\cdot\|$ be a unitarily invariant matrix norm, and let $\|\cdot\|_{*}$ be its dual norm. Let $f: \mathbb{R}^{m \times n} \to \mathbb{R}$ be continuously differentiable, bounded below by $f_{\min}$, with $L_{\mathcal{X}}$-Lipschitz continuous gradients with respect to $\|\cdot\|_{}$.

Consider preconditioned spectral descent updates as in \Cref{sec:limitations_lmo}, and assume there exist constants $m_k \geq 0$ and $M_k>0$ satisfying the divergence condition $
\sum_{k=0}^\infty \left( {m_k}/{M_k} \right)^2 = \infty
$, such that for all $\sigma \in \mathbb{R}_{>0}^r$, the mapping $p_k$ satisfies the following sufficient descent and boundedness conditions:
\[
\langle \sigma, p_k(\sigma) \rangle \ge m_k \|\sigma\|_{*} \qquad \text{and} \qquad \|p_k(\sigma)\| \le M_k.
\]
Then, choosing $\alpha_k = \eta\frac{1}{L_{\mathcal{X}} M_k^2}$ for any $\eta \in (0, 2)$, the following convergence rate holds for $K \ge 1$:
\[
\min_{0 \le k < K} \|G_k\|_{*}^2 \le \frac{L_{\mathcal{X}} (f(X_0) - f_{\min})}{\eta(1-\frac{\eta}{2}) \sum_{k=0}^{K-1} \left( \frac{m_k}{M_k} \right)^2} \qquad \text{and} \qquad \liminf_{k \to \infty} \|G_k\|_{*} = 0.
\]
\end{restatable}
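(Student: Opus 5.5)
The proof is the standard descent-lemma argument. The only care needed is in how the unitarily invariant norm on matrices relates to the vector norms applied to $\sigma$ and $p_k(\sigma)$. Throughout, I interpret $\|\sigma\|_*$ and $\|p_k(\sigma)\|$ as the symmetric gauge functions associated with $\|\cdot\|_*$ and $\|\cdot\|$, via von Neumann's characterization of unitarily invariant norms. Then $\|D_k\| = \|p_k(\sigma_k)\|$ and $\|G_k\|_* = \|\sigma_k\|_*$. Moreover, since $D_k$ shares singular vectors with $G_k$,
\[
\langle G_k, D_k\rangle = \tr\bigl(V_k \diag(\sigma_k) U_k^\top U_k \diag(p_k(\sigma_k)) V_k^\top\bigr) = \langle \sigma_k, p_k(\sigma_k)\rangle .
\]
Combining this with the two hypotheses gives
\[
\langle G_k, D_k\rangle \ge m_k\|G_k\|_* \qquad \text{and} \qquad \|D_k\|\le M_k .
\]

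\textbf{Descent lemma.} $L_{\mathcal X}$-Lipschitz gradients with respect to $\|\cdot\|$ means $\|\nabla f(X)-\nabla f(Y)\|_*\le L_{\mathcal X}\|X-Y\|$. This gives
\[
f(Y)\le f(X)+\langle \nabla f(X),Y-X\rangle+\tfrac{L_{\mathcal X}}{2}\|Y-X\|^2 .
\]
Plug in $Y-X = -\alpha_k\langle G_k,D_k\rangle D_k$ and write $g_k=\langle G_k,D_k\rangle\ge0$. This yields
\[
f(X_{k+1})\le f(X_k)-\alpha_k g_k^2+\tfrac{L_{\mathcal X}}{2}\alpha_k^2 g_k^2\|D_k\|^2
\le f(X_k)-\alpha_k g_k^2\Bigl(1-\tfrac{L_{\mathcal X}\alpha_k M_k^2}{2}\Bigr).
\]
With $\alpha_k=\eta/(L_{\mathcal X}M_k^2)$, the bracket equals $1-\eta/2>0$. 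Using $g_k\ge m_k\|G_k\|_*$, each step decreases $f$ by at least
\[
\frac{\eta(1-\eta/2)}{L_{\mathcal X}}\Bigl(\frac{m_k}{M_k}\Bigr)^2\|G_k\|_*^2 .
\]

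\textbf{Telescoping.} Summing over $k=0,\dots,K-1$ and using $f(X_K)\ge f_{\min}$ gives
\[
\frac{\eta(1-\eta/2)}{L_{\mathcal X}}\sum_{k<K}\Bigl(\frac{m_k}{M_k}\Bigr)^2\|G_k\|_*^2\le f(X_0)-f_{\min}.
\]
Lower-bounding each $\|G_k\|_*^2$ by $\min_{k<K}\|G_k\|_*^2$ gives the stated rate. For the liminf, suppose $\|G_k\|_*\ge\epsilon>0$ for all $k\ge k_0$. Then the left-hand side, with the sum now running from $k_0$, diverges because $\sum_k (m_k/M_k)^2=\infty$. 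This contradicts the finite right-hand side, so $\liminf_k\|G_k\|_*=0$.

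\textbf{Main obstacle.} The step I expect to need the most care is the first one: justifying that the vector-level hypotheses on $\sigma\mapsto p_k(\sigma)$ transfer exactly to the matrix quantities. This rests on von Neumann's theorem, which says a unitarily invariant norm is a symmetric gauge function of the singular values and that its dual is the dual gauge. A second point to check is that the trace identity above uses that $U_k$ and $V_k$ are the same for $G_k$ and $D_k$, which holds by construction. I also assume $G_k$ has strictly positive singular values as in the setup; if $G_k$ has zero singular values, I would restrict $p_k$ to the positive ones. Once this identification is in place, the remainder is the routine smooth nonconvex descent argument.
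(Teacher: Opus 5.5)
Your proof is correct and follows the paper's argument step for step. Both use the descent lemma, then $\|D_k\| = \|p_k(\sigma_k)\| \le M_k$ via unitary invariance and $\langle G_k, D_k\rangle = \langle \sigma_k, p_k(\sigma_k)\rangle \ge m_k \|G_k\|_*$, and finish by telescoping and bounding by the minimum. Your appeal to von Neumann's symmetric-gauge characterization and your tail-sum contradiction for the $\liminf$ spell out two steps the paper states more briefly, but the route is the same.
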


\begin{nb}
The proof relies on standard convergence arguments, requiring that updates do not become too small, do not blow up and are mostly in the right direction. When $p_k$ is derived from an $\lmo$, these constants evaluate to $m_k = M_k = 1$. Crucially, this theorem accommodates non-norm-descent methods like \Kaon and \Freon. While this result does not yield a tight convergence rate, it provides an essential sanity check: such methods will work in practice.
\end{nb}

\begin{figure*}[t]
    \centering
    \includegraphics[width=\textwidth]{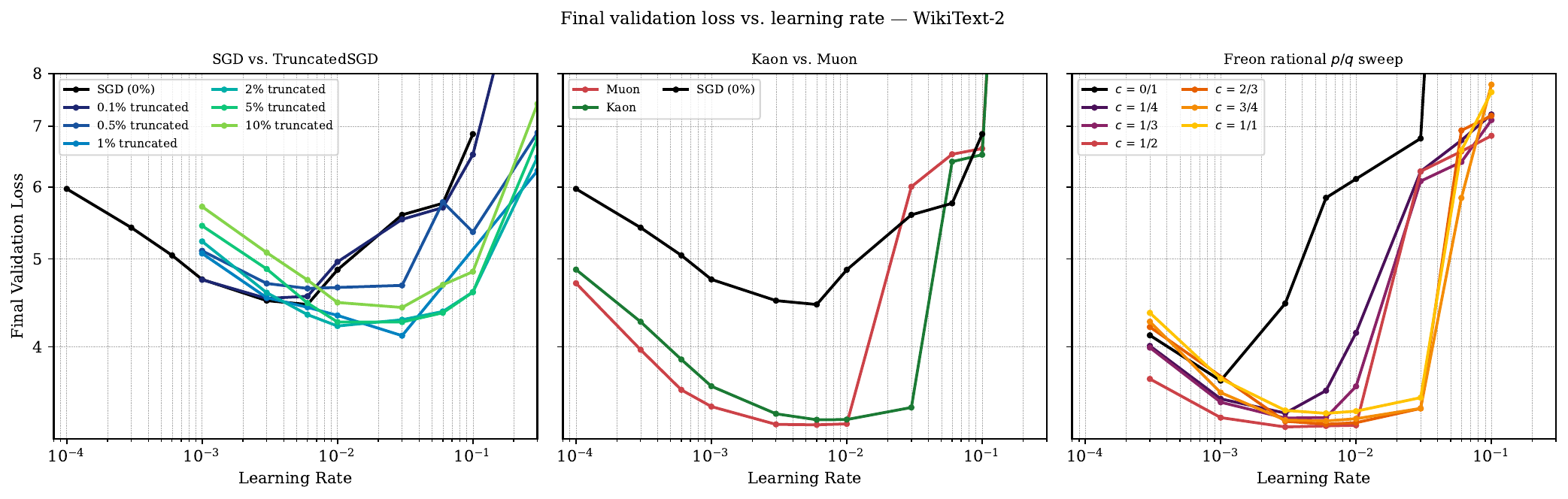}
    \caption{\textbf{Final validation loss versus learning rate for all optimizer families on
    WikiText-2 (118M tokens)}. \textbf{(a)} \tsgd improves over \sgd but fails to
    close the gap to \Muon~-- suppressing large singular values is necessary but not
    sufficient. \textbf{(b)} \Kaon closely matches \Muon despite replacing singular values with
    random noise. \textbf{(c)} \Freon with
    $c \approx 2/3$ closely matches \Muon; the optimal $c$ lies strictly outside the
    range of any proper Schatten norm ($c > 1/2$).}
    \label{fig:val_loss_all}
\end{figure*}

\subsection{TruncatedSGD}
\label{sec:truncated_sgd}
As a first step in relaxing the strict geometric assumptions of the LMO framework, we isolate the role of the largest singular values in optimization stability. We study \tsgd, an optimizer that zeros out the top $p$\% of singular values while leaving the rest unchanged. Although a full SVD is impractical, \tsgd serves a pedagogical purpose: it tests whether standard \sgd fails primarily due to instability induced by excessively large singular values.

Concurrently with our work, \citet{jiang2026enhancingllmtrainingspectral} propose a Newton-Schulz based approximation to fixed-value clipping, similarly targeting large singular values under the assumption that batch noise concentrates in this part of the spectrum. In line with this motivation, we observe in \Cref{fig:val_loss_all} that \tsgd improves over \sgd, confirming that suppressing large singular values is \emph{necessary} for stability. However, it still falls short of \muon, showing that suppression alone is \emph{insufficient}. The view that clipping primarily removes noise implicitly treats the underlying gradient direction as intrinsically meaningful; our results in \Cref{sec:empirics} ultimately challenge this premise, suggesting instead that the gradient direction itself can be poorly aligned with useful progress.

\subsection{Kaon}
\label{sec:kaon}
Motivated by \tsgd's gains over \sgd and observation of \Cref{sec:freon} that only the broad suppression/amplification of singular values matters, we ask: does the target geometry matter at all? To test this, we replaced the gradient’s singular values with uniform random noise. This stochastic spectrum redistribution still performed surprisingly well, closely matching \Muon.

To make this `absurd optimizer' efficient, we draw on ideas for pseudorandom number generation using chaotic iterations \citep{phatak1995logistic}. We use the generalized third-order logistic recurrence $x_{t+1} = \lambda x_t (1 - x_t^2)^2,$ lifted to matrix iterations (\Cref{alg:kaon}). Setting $\lambda = 4.1$ pushes the recurrence deep into the chaotic regime. We visualize an example in \Cref{fig:optimizer_spectra}, and illustrate the stationary distribution of these iterations in \Cref{ap:Kaon}, \Cref{fig:kaon_pdf_grid}.

We emphasize that \Kaon is primarily a pedagogical construction. Its strong performance (\Cref{fig:val_loss_all}) is a direct counter-argument to LMO-based explanations of spectral optimizers: state-of-the-art performance does not require adherence to a precise target geometry, and can even be achieved with random spectral reshaping, while retaining standard convergence guarantees:

\begin{restatable}[Convergence of Random Spectral Descent]{theorem}{thmcvgrandom}\label{thm:kaon_convergence}
Let the setting be identical to \Cref{thm:zh_conv}. Suppose at each iteration $k$, the stochastic mapping is defined specifically as $p_k(\sigma) = \frac{E_k}{\|E_k\|}$, where the entries of $E_k$ are i.i.d.~samples of a probability distribution on $\R_{>0}$ independent of $k$.

If $\alpha_k = \frac{\eta}{L_{\mathcal{X}}}$ for any constant $\eta \in (0, 2)$, then almost surely, the sequence $\min_{0 \le k < K} \|G_k\|_{*}^2$ achieves a convergence rate of $\mathcal{O}(1/K)$.
\end{restatable}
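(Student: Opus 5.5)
The plan is to reduce \Cref{thm:kaon_convergence} to \Cref{thm:zh_conv} by exhibiting admissible constants $m_k, M_k$ for the random mapping $p_k(\sigma) = E_k/\|E_k\|$. Then we would use a law of large numbers to show that $\sum_{k<K}(m_k/M_k)^2$ grows linearly in $K$ almost surely. Throughout, $\|v\|$ for a vector $v\in\R^r$ means the unitarily invariant norm of $\diag(v)$, which is well defined by unitary invariance.

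\textbf{Boundedness.} By construction, $\|p_k(\sigma)\| = 1$ for every $\sigma$, so we take $M_k = 1$. The prescribed step size $\alpha_k = \eta/(L_{\mathcal{X}} M_k^2) = \eta/L_{\mathcal{X}}$ then coincides with the one in \Cref{thm:kaon_convergence}.

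\textbf{Sufficient descent.} We need a lower bound $\langle\sigma, p_k(\sigma)\rangle \ge m_k\|\sigma\|_*$ that is uniform in $\sigma$, and we would proceed in three steps.
\begin{enumerate}
\item Write $e_{\min,k} = \min_i (E_k)_i$, which is strictly positive almost surely since the distribution is supported on $\R_{>0}$. Then $\langle \sigma, E_k\rangle \ge e_{\min,k}\,\|\sigma\|_1$.
\item Bound the dual norm by the triangle inequality applied to $\diag(\sigma) = \sum_i \sigma_i e_ie_i^\top$: this gives $\|\sigma\|_* \le c_*\|\sigma\|_1$. Here $c_* = \|e_1e_1^\top\|_*$ is the dual norm of a rank-one matrix with unit singular value, which is the same for all $i$ by unitary invariance.
\item Combine the two bounds to take
\[
m_k = \frac{e_{\min,k}}{c_*\,\|E_k\|} > 0 .
\]
This is valid for all $\sigma$ simultaneously.
\end{enumerate}
Because the bound holds for all $\sigma$, we never need independence between $E_k$ and the iterate $X_k$. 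This matters because $X_k$ depends on $E_0,\dots,E_{k-1}$.

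\textbf{Summation.} The variables $Z_k = m_k^2$ are i.i.d.\ and almost surely positive, since they are functions of the i.i.d.\ vectors $E_k$. The main obstacle is that $Z_k$ need not have a finite mean, and its mean could a priori be hard to control. I would sidestep this by truncation. Set $\tilde Z_k = \min(Z_k, 1)$. These are bounded, i.i.d., and satisfy $\mu := \E\tilde Z_k > 0$ because $Z_k>0$ almost surely. By the strong law of large numbers, $\frac1K\sum_{k<K}\tilde Z_k \to \mu$ almost surely. Hence, almost surely, there is a random $K_0$ such that
\[
\sum_{k<K} Z_k \;\ge\; \sum_{k<K}\tilde Z_k \;\ge\; \frac{\mu K}{2} \qquad \text{for all } K\ge K_0 .
\]

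\textbf{Conclusion.} This gives the divergence condition of \Cref{thm:zh_conv}, so that theorem applies. Its bound then yields, almost surely for $K \ge K_0$,
\[
\min_{k<K}\|G_k\|_*^2 \;\le\; \frac{2L_{\mathcal{X}}(f(X_0)-f_{\min})}{\eta(1-\eta/2)\,\mu\,K}.
\]
This is the claimed almost-sure $\mathcal{O}(1/K)$ rate. The constant involves the random threshold $K_0$ only through the finitely many initial iterations before the bound kicks in.
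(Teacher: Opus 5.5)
Your proof is correct, but it handles the probabilistic step differently from the paper.

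\textbf{Shared part.} The paper chooses $M_k=1$ and $m_k=c[E_k]_{\min}/\|E_k\|$ exactly as you do. Its $c$ is an unspecified norm-equivalence constant; yours is the explicit $1/\|e_1f_1^\top\|_*$.

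\textbf{Where the routes differ.} The paper does not apply \Cref{thm:zh_conv} pathwise. It goes through the auxiliary stochastic result \Cref{thm:ap_stoch_conv}, which rests on L\'evy's conditional extension of the Borel--Cantelli lemma. It notes that $E_k$ is independent of the history, so $\mathbb{E}_{k-1}[m_k^2]=\mu$ is constant, and the ratio limit then gives $\sum_{k<K}m_k^2\sim\mu K$ almost surely. You instead fix a realization and observe that the deterministic finite-time bound of \Cref{thm:zh_conv} holds for it, because your lower bound is uniform in $\sigma$. You then invoke the ordinary strong law for the i.i.d.\ sequence $m_k^2$.

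\textbf{What each buys.} Your route is more elementary, and it makes explicit why the dependence of $X_k$ on $E_0,\dots,E_{k-1}$ is harmless. The paper's route is more general. It covers adapted, non-identically-distributed spectral randomizations where only the conditional means are controlled. It also pins down the asymptotic constant $1/\mu$, whereas you get $2/\mu$; you could sharpen yours to $(1+\epsilon)/\mu$ anyway.

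\textbf{The truncation is unnecessary.} The step $\tilde Z_k=\min(Z_k,1)$ addresses a non-issue, so the ``main obstacle'' you identify is not one. By duality, $\langle\sigma,p_k(\sigma)\rangle\le\|\sigma\|_*\|p_k(\sigma)\|=\|\sigma\|_*$, so any admissible $m_k$ satisfies $m_k\le M_k=1$. For your choice this is direct: $[E_k]_{\min}\le[E_k]_1=\langle\diag(E_k),e_1f_1^\top\rangle\le c_*\|E_k\|$. Hence $Z_k\in(0,1]$ already and $\tilde Z_k=Z_k$. This boundedness is exactly what the paper uses to guarantee that $\mu$ exists.

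\textbf{Notation.} For non-square matrices, $e_ie_i^\top$ should read $e_if_i^\top$ with $e_i\in\R^m$ and $f_i\in\R^n$; this does not affect the argument.
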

\subsection{Freon}\label{sec:freon}
Following the setup of \Cref{sec:background}, we consider steepest descent in Schatten $p$-norms, defined as $\|X\|_p = \left(\sum \sigma_i\{X\}^p\right)^{1/p}$. By H\"older's inequality, the dual of the $p$-Schatten norm is the $q$-Schatten norm \citep{bhatia2013matrix}, where $\frac{1}{p} + \frac{1}{q} = 1$ for $p \in [1, \infty]$. As before, let the singular value decomposition of the gradient be $G = U \operatorname{diag}(\sigma) V^\top$.
The steepest descent direction for $p\in[1,\infty]$ is:
\begin{equation}
\operatorname{lmo}_{\|\cdot\|_p}(G) = U \operatorname{diag}\left(\left(\frac{\sigma}{\|G\|_q}\right)^{q-1}\right) {V}^\top\quad \text{or equivalently} \quad D = \left(G_n G_n^\top\right)^{-c}G_n,\label{eq:p_schatten_update}
\end{equation}
where we denote the dual-normalized gradient as $G_n=G/\|G\|_q$ and $c= 1-q/2$. This update structure, parametrized by the dual exponent $q$, forms the basis of the general \freon optimizer.

\paragraph{A note on $p \in (0, 1)$:}
While the derived update rule extends analytically to $p \in (0, 1)$ (where $q < 0$), the $p$-Schatten ceases to be a proper norm in this regime, becoming a \textit{quasi-norm} due to the violation of the triangle inequality. Strictly speaking, the dual of a $p$-quasi-norm for $p < 1$ is the $\infty$-norm (spectral norm). Consequently, exact steepest descent under a true $p$-quasi-norm constraint does not yield the continuous deformation above (\Cref{thm:preconditioned} prevents that); rather, the LMO collapses, activating only the singular vector of the largest singular value. Therefore, the general \Freon update with $q < 0$ does not correspond to a steepest descent under the quasi-norm either. It is better viewed as a preconditioned spectral descent (c.f.~\Cref{sec:limitations_lmo}).
\subsubsection{Norm Hyperparameter Transfer}\label{sec:hyper_trans}
If one were to execute the exact update derived above with the same step size across different values of $p$, the optimization would fail dramatically. This is because restricting the update to the standard unit $p$-ball $(\|\Delta W\|_p \leq 1)$ causes the maximum achievable step size in the spectral space to shrink at a rate dependent on $p$, as the rank $r=\min(m,n)$ increases. To anchor the updates to a consistent scale regardless of $p$, we must tie the updates to a single overarching invariant ball: the spectral ball. We achieve this by scaling the update constraint precisely by $r^{1/p}$, enforcing
$\|\Delta W\|_p = r^{1/p}.$ This means that for any $p>0$, $\frac{1}{r^{1/p}}\|\Delta W\|_p\leq\|\Delta W\|_\infty$. By shifting to this mean Schatten norm, defined by $\|\cdot\|_{p, \texttt{mean}} = \frac{1}{r^{1/p}}\|\cdot\|_p$, we geometrically inflate the $p$-Schatten ball, guaranteeing that the baseline maximum step size is always $\mathcal{O}(1)$ independent of $p$, as domain of the $\lmo$: $\{\|D\|_{p, \texttt{mean}}\leq 1\}\subseteq\{\|D\|_{\infty}\leq 1\}$ for all $p\geq0$. We also note, that concurrently with our work, a similar normalization was introduced by \citet{xu2026widthscalingneuraloptimizers}, restricted to the context of vector-induced row-wise and column-wise $p, q$ norms, based on arguments of dimension independence.

\subsubsection[Particular Case of p=0 (c=1)]{Particular case of $p=0$ ($c=1$)}\label{sec:freon11}

The mean-norm scaling introduced in \Cref{sec:hyper_trans} does more than just stabilize hyperparameters; it mathematically enables us to explore the limiting case where $p \to 0$ (and $q \to 0$). When combined with the normalization from \Cref{sec:hyper_trans} the scaling in \Cref{eq:p_schatten_update} converges: $\|G\|_q / r^{1/q} \xrightarrow[]{}\operatorname{det}(GG^\top)^{1/2n}$ as $q\to 0$.
The beauty of the $p=0$ limit extends far beyond this simple closed form; up to scaling, this specific update turns out to be invariant under layerwise symmetries.
\begin{theorem}[Equivariance of \Freon$(c=1)$. Informal, formal statement in \Cref{thm:symm}]\label{thm:symm_informal} Consider an arbitrary neural network $\phi_W$ with layered parameters $W= (W_1,\dots,W_L)$. We define a symmetry of $\phi_W$ to be any transformation $W\mapsto g\cdot W$ such that $\phi_W = \phi_{g\cdot W}$. Then updates of the form $W_l - \eta(G_lG_l^\top)\inv G_l$ for each layer $W_l$ are equivariant under scaling and orthogonal layerwise symmetries of $\phi_W$, i.e. $g\cdot W^k = [g\cdot W]^k$, where $W^k$ represents the parameters after $k$ updates. Therefore, the optimization trajectory of the neural network $\phi_W$ is invariant, i.e. $\phi_{[g\cdot W]^{k}} = \phi_{W^{k}}$.
\end{theorem}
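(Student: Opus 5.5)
The approach is to show that a single \Freon$(c=1)$ step, $W_l \mapsto W_l - \eta (G_lG_l^\top)\inv G_l$, transforms under each admissible layerwise symmetry in exactly the same way as the parameters. Equivariance of the whole trajectory then follows by induction on $k$.

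\textbf{Setting and gradient transformation.} I will model the symmetries as acting layerwise by $g\cdot W_l = A_l W_l B_l$. The admissible cases are:
\begin{itemize}
\item $A_l$ invertible and $B_l$ orthogonal, which covers orthogonal mixing $W_l\mapsto Q_lW_lQ_{l-1}^\top$ through rotation-equivariant layers;
\item $A_l = a_l I$, $B_l = b_l I$ scalars, which covers scalings such as positive homogeneity of ReLU, with $\prod_l a_l b_l$ constrained so that $\phi$ is unchanged.
\end{itemize}
I will also assume $G_l$ has full row rank, so that $(G_lG_l^\top)\inv$ exists; otherwise everything is read as a pseudo-inverse.

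Write $\mathcal L(W) = \mathcal L(g\cdot W)$ for the loss, which holds since $\phi_W=\phi_{g\cdot W}$. Differentiating with the chain rule gives $G_l = A_l^\top G_l' B_l^\top$. Here $G'_l$ is the gradient at $g\cdot W$, so $G_l' = A_l^{-\top} G_l B_l^{-\top}$.

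\textbf{The update is equivariant.} Write $U(G) = (GG^\top)\inv G$.
\begin{itemize}
\item In the orthogonal case, $B_l^{-\top}=B_l$, so
\[
G'_lG_l'^\top = A_l^{-\top} G_l B_lB_l^\top G_l^\top A_l\inv = A_l^{-\top} G_lG_l^\top A_l\inv.
\]
Hence
\[
U(G_l') = A_l (G_lG_l^\top)\inv A_l^\top A_l^{-\top} G_l B_l = A_l\, U(G_l)\, B_l.
\]
\item In the scalar case, $G_l' = G_l/(a_lb_l)$. Since $U(sG)=s\inv U(G)$, we get $U(G_l') = a_lb_l\, U(G_l) = A_l U(G_l) B_l$.
\end{itemize}
In both cases the update direction transforms exactly like $W_l$. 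This is where the choice $c=1$ is essential: for general $c$, $U_c(sG) = s^{1-2c}U_c(G)$, which matches the $s\inv$ law only at $c=1$. In the left-invertible case the cancellation $(G G^\top)^{-c}$ versus $A^{-\top}$ likewise works only when $c=1$.

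\textbf{Induction and invariance.} It follows that
\[
A_lW_lB_l - \eta\, U(G'_l) = A_l\bigl(W_l - \eta U(G_l)\bigr)B_l,
\]
that is, $[g\cdot W]^{1} = g\cdot W^{1}$. The symmetry $g$ is fixed and still preserves $\phi$ at the new point, so the same argument applies at every step, and induction gives $[g\cdot W]^k = g\cdot W^k$. Then $\phi_{[g\cdot W]^k} = \phi_{g\cdot W^k} = \phi_{W^k}$.

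\textbf{Main obstacle.} I expect the delicate part to be fixing the admissible class of symmetries precisely. For a general invertible right factor $B_l$ the term $B_lB_l^\top$ no longer cancels, and the claim fails. The formal statement must therefore restrict to left-invertible/right-orthogonal actions and scalings, or use the transposed form $G(G^\top G)\inv$ when the roles of $A_l$ and $B_l$ are swapped.

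A second point to handle carefully is rank deficiency. I would treat it by replacing the inverse with the Moore–Penrose pseudo-inverse, writing $U(G)=(G^+)^\top$, and then verifying the reverse-order law $(A^{-\top}GB)^+ = B^\top G^+A^\top$ for $A$ invertible and $B$ orthogonal directly from the four Penrose conditions.
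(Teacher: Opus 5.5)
Your proposal is correct and follows essentially the same argument as the paper's proof of \Cref{thm:symm}. The chain rule gives $G_l = A_l^\top G_l' B_l^\top$; orthogonal factors cancel inside $G_l'G_l'^\top$, and scalars cancel by the degree $-1$ homogeneity of $(GG^\top)^{-1}G$. A general invertible left factor is allowed only under full row rank, with the transposed form $G_l(G_l^\top G_l)^{-1}$ covering a general right factor under full column rank. One-step equivariance then propagates along the trajectory, which you make more explicit by induction than the paper does.

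One correction to your rank-deficient remark: the reverse-order law $(A^{-\top}GB)^+ = B^\top G^+A^\top$ holds for $A \in \R^*O(n_l)$ but fails for a general invertible $A$ when $G$ is rank-deficient. The Penrose condition $(MX)^\top = MX$ forces $A^\top A$ to commute with $GG^+$; it fails, e.g.\ for $G = \operatorname{diag}(1,0)$ and a shear $A$. This is exactly why the paper admits $GL$ factors only under a full-rank assumption. It does not affect the scaling/orthogonal statement you were asked to prove.
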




\subsubsection{Efficient Computation: From Polar to H\"older Express}
To implement \Freon practically, we require an efficient method to compute terms of the form $(GG^\top)^{-\frac{a}{b}}G$ from \Cref{eq:p_schatten_update}, where $q = 2(1-\frac{a}{b})$ and $\frac{a}{b} \in \mathbb{Q}$.
\paragraph{On polynomial iterations:}
A natural approach is a polynomial scheme akin to \citep{jordan2024muon,amsel2025polarexpressoptimalmatrix}. Extending the Polar Express theory (\Cref{thm:optimality}), we find optimal polynomials $P_k$ such that $xP_k(x^b) \to 1$ under composition (Appendix \ref{sec:optimality}, Remark \ref{rem:best_polynomial_approximant}), which can be applied iteratively via updates of the form $O_{k+1} = P_k\left((O_kO_k^\top)^{b/2} (GG^\top)^{a-b/2}\right)O_k$. While theoretically convergent (Appendix \ref{sec:optimality}, Remark \ref{rem:polynomial_error_bound}), these iterations are highly unstable in practice for $a/b \neq 1/2$ (\Cref{app:holder_instability}). Requiring a computationally prohibitive $\sim \mathcal{O}(b\log{\sigma_{\min}})$ steps, they diverge rapidly, especially in lower precision. This instability is well-known for polynomial iterations outside of $a/b=1/2$ \citep{higham2008functions}. In line with \citet{GramNewtonSchulz}, we similarly identify two primary causes: the explicit formation of poorly conditioned matrices (which introduces spurious negative eigenvalues) and the rapid accumulation of floating-point errors. A robust method must therefore avoid explicitly forming such matrices and converge in as few steps as possible. 

\paragraph{On rational iterations:}
Generalized rational functions, rooted in Zolotarev's best rational approximants to the sign function \citep{achieser, zolopd}, offer significantly greater stability. For instance, the QR-based Dynamically Weighted Halley (QDWH) method \citep{nakatsukasa2010optimizing, zolopd} uses these to approximate the polar map. Drawing on this classical theory \citep{achieser, cheney1964generalized} and use in matrix computations \citep{zolopd, gawlik2021approximating}, we use the Remez algorithm \citep{Trefethen2025} to iteratively approximate the sign function using rational functions of the form $xR(x^{2b})=x\frac{a+bx^{2b}}{1+cx^{2b}}$ akin to \citet{amsel2025polarexpressoptimalmatrix}, establishing optimality of this approach:
\begin{theorem}[Proof in \Cref{sec:optimality}]\label{thm:best_approximation_fractional}
Let $a/b \in \mathbb{Q}$, $l > 0$ and suppose $G$ satisfies $\sigma_{\min}(G) \geq l$. Moreover, set $\mathcal{G} := (GG^\top)^{-\frac{a}{b}}G$, $\mathcal{G}^\dagger := G^\top (GG^\top)^{\frac{a}{b} -  1}$ and $\mathfrak{R}_{n, d}\left(\frac{b}{2}, \frac{b}{2}\right)$ as in \eqref{eq:odd/even_rational_rs_up_to_nd} with $n, d \in \mathbb{N}$ such that $\frac{n + 1}{d + 1} \in \mathbb{N}$. Then, there exists $\{R_t\}_{t \in \mathbb{N}} \subset \mathfrak{R}_{n, d}\left(\frac{b}{2}, \frac{b}{2}\right)$ such that, setting $O_T := \widetilde{R}_T  \circ \cdots \circ \widetilde{R}_1 $, with $\widetilde{R}_t(X) = R_t(X\mathcal{G}^\dagger)\mathcal{G} $, $O_T(G)$ optimally approximates $\mathcal{G}$ for every $T \in \mathbb{N}$. Moreover, there exists $C = C(a, b, l) > 0$ such that the following doubly exponential convergence rate holds: $$\|\mathcal{G} - O_T(G)\|_2 \leq C |1 - l^b|^{(n + d + 1)^{T}}.$$
\end{theorem}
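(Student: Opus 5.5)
Everything here happens on singular values, so the first step is to reduce to a scalar problem. Write $G = U\diag(\sigma)V^\top$. The maps $X \mapsto R_t(X\mathcal{G}^\dagger)\mathcal{G}$ preserve the singular subspaces of $G$. If $X = U\diag(x)V^\top$, then $X\mathcal{G}^\dagger = U\diag\bigl(x\,\sigma^{2a/b-1}\bigr)U^\top$, and the product with $\mathcal{G}$ gives $U\diag\bigl(R_t(x\sigma^{2a/b-1})\,\sigma^{1-2a/b}\bigr)V^\top$. Substituting $y = x\,\sigma^{2a/b-1}$ turns the target $\mathcal{G}$ into $y=1$. On each singular value, $O_T$ therefore acts as a composition of scalar maps $y \mapsto y\,\widehat R_t(y^{b})$, where the odd/even structure of $\mathfrak{R}_{n,d}(\tfrac b2,\tfrac b2)$ supplies the factor $y$ and the argument $y^b$. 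The initial values are $y_0 = \sigma^{2a/b}$. After normalizing (e.g.\ $\|G\|_2\le 1$, as in the algorithms), and substituting $z = y^{1/?}$ where needed, we may take $z\in[l,1]$. Since the spectral norm is unitarily invariant, $\|\mathcal{G}-O_T(G)\|_2$ equals the sup-norm error of the scalar composition over the spectrum, up to the bounded factor $\max\sigma^{1-2a/b}$, which depends only on $a,b,l$.

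Optimality then reduces to a greedy-composition result in the spirit of Polar Express, namely \Cref{thm:optimality}, which we invoke here. At step $t$, choose $R_t$ to be the minimax best rational approximant of $1$ on the current interval $[l_t, u_t]$ within the class $\{ z\widehat R(z^b)\}$, and then rescale so the output interval is $[l_{t+1}, 1]$. The key property is that this best approximant is monotone on the interval, with equioscillation extremes at its endpoints. So the image of the interval is again an interval, determined by its endpoints, and composing greedily is optimal over all compositions. This is the argument of Amsel et al.\ transferred to rational functions. The divisibility condition $\frac{n+1}{d+1}\in\mathbb N$ is what makes the extremal rational functions compose within the same family: the Zolotarev functions of type $(n,d)$ compose to Zolotarev functions of type $((n+1)^T\!-\!1,\dots)$. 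Using this, I would argue that $O_T$ is itself the best approximant in the larger composed class. That in turn yields optimality for every $T$ and not merely greedy optimality.

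For the rate, I would use the substitution $w = z^b$, which maps the problem to approximating $\operatorname{sign}$, or equivalently $w^{-1/2}$-type normalizations, on $[l^b,1]$. Zolotarev/Akhiezer theory gives explicit errors: the best type-$(n,d)$ approximant of the sign function on $[\ell,1]$ has error of order $(1-\ell)^{n+d+1}$ when $\ell$ is near $1$, because the error near the fixed point $1$ is a contact of order $n+d+1$. Iterating the map $1-l_{t+1} \lesssim C'(1-l_t)^{n+d+1}$ gives $1-l_T \le C|1-l^b|^{(n+d+1)^T}$, and this transfers back through the bounded factors above.

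The main obstacle is the composition/optimality step: showing that the best approximant stays monotone on $[l_t,1]$ and that equioscillation survives composition under the $(b/2,b/2)$ odd/even constraint. I would first try to verify this through a counting argument on equioscillation points, which is exactly where $(n+1)/(d+1)\in\mathbb N$ should enter. A secondary difficulty is making the local contact-order estimate uniform for the first few iterates, where $l_t$ is far from $1$. I would handle these by absorbing finitely many early steps into $C$.
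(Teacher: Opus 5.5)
Your reduction to scalars is correct and matches the paper's. Your $y=x\sigma^{2a/b-1}$ is its $Z_t=(GG^\top)^{a/b-1/2}O_t$, and your prefactor $\max\sigma^{1-2a/b}$ plays the role of its $l^{-(2a-b)/b}$. Invoking \Cref{thm:optimality} for the optimality claim is also exactly what the paper does.

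\textbf{The genuine gap is the rate.} Write $p=n+d+1$. A one-step estimate $1-l_{t+1}\le C'(1-l_t)^{p}$ only iterates to $1-l_T\le C'^{-1/(p-1)}\bigl(C'^{1/(p-1)}(1-l_1)\bigr)^{p^{T-1}}$. Unless $C'\le 1$, the constant inflates the \emph{base} of the doubly exponential rate. No $T$-independent prefactor can then recover $C|1-l^b|^{p^T}$, and for the same reason early steps cannot be ``absorbed into $C$''.

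Your contact-order heuristic is asymptotic in $l_t\to1$ and gives no explicit $C'$. The natural explicit step-by-step comparison gives $C'\approx 2b>1$. This comes from rescaling the greedy output $[l_{t+1},2-l_{t+1}]$ to $[\ell_{t+1},1]$ and converting $1-\ell$ into $1-\ell^b$.

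The missing idea is a global comparison. Let $h$ be the $[n/d]$ Padé approximant of $(1-\xi)^{-1/b}$. The map $x\mapsto x\,h(1-x^b)$ lies in $\mathfrak{R}_{n,d}(\tfrac b2,\tfrac b2)$. Its $T$-fold composition satisfies the constant-free Kenney--Laub-type bound $|1-x_T^b|\le|1-x_0^b|^{p^T}$ (\Cref{lem:approx_ploynomials}), and $|1-y|\le|1-y^b|$ for $y\ge 0$. The greedy $R_T\circ\cdots\circ R_1$ is optimal among \emph{all} $T$-fold compositions from the class, so its error is at most that of the Padé composition. This yields the base $|1-l^b|$ with no accumulated constants. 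Optimality over compositions is therefore not just the first claim of the theorem; it is what drives the rate.

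Two statements in your optimality discussion are also wrong, though invoking \Cref{thm:optimality} makes them unnecessary.

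First, the best approximant is not monotone on $[l_t,u_t]$, since it equioscillates at $n+d+2$ points. What is actually shown is that $l_t$ is its minimiser, so the image is $[R_t(l_t),2-R_t(l_t)]$.

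Second, and more seriously, your plan to prove that $O_T$ is best in the larger composed class, with $(n+1)/(d+1)\in\mathbb N$ as the enabling condition, would fail. The composition of two best approximants has only $(n+d+1)^2+1$ alternation points.
\begin{itemize}
\item For $d=0$, which the divisibility condition allows, that is $(n+1)^2+1$. The best element of the composed Haar space $\{xP(x^b):\deg P\le n(bn+2)\}$ needs $n(bn+2)+2$.
\item For $n=d$, the counts $(2n+1)^2+1$ and $2bn^2+4n+2$ agree only when $b=2$, the classical Zolotarev/QDWH case.
\end{itemize}
The theorem's ``optimally approximates'' is meant in the Polar Express sense, i.e.\ optimal among $T$-fold compositions, transported to matrices by $X\mapsto R(X\mathcal{G}^\dagger)\mathcal{G}$. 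The paper's proof never uses the divisibility condition; it motivates it only by the block-QR implementation.

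Finally, your placeholder $z=y^{1/?}$ cannot be filled by a power substitution, because that leaves the class $y\widehat R(y^b)$. The initial scalars are $\sigma^{2a/b}\in[l^{2a/b},1]$, which is wider than $[l,1]$ when $a/b>1/2$. The paper's proof also works on $[l,1]$ without comment, so this loose end is shared.
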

In practice, $O_T$ is computed using the coupled system presented in \Cref{alg:freon}, with the complete algorithm summarized in \Cref{alg:coupled_chol}.
Unlike prior work \citep{gawlik2021approximating}, wherein the considered class of rational functions does not contain the best approximant for $p \geq 3$, our recursion guarantees the existence of an optimal approximant at every step. Moreover, the condition on $n, d$ permits choosing higher-degree polynomials in the numerator without increasing the denominator's degree, and thus allowing block-QR tricks.

\section{Theoretical and Empirical Observations}


If LMOs do not dictate optimization performance, what does? We build upon \citet{davis2026spectralgradientupdateshelp}, which provides a foundational theoretical lens for understanding spectral descent. To properly diagnose these failures, we must abandon global bounds (as in \cite{cohen2021gradient,islamov2026non}) and examine the exact local expansion of the loss.


\subsection{The Two Quantities}
\label{sec:two_quantities}
Let $f : \R^d \to \mathbb{R}$ be twice differentiable, and suppose we update $X_{k+1} = X_k - \alpha_k \Delta_k$, with $\Delta_k = \langle \widetilde G_k, D_k \rangle D_k$, $\widetilde G_k$ is a stochastic gradient of $f$, and $D_k$ is the direction given by the optimizer, defined for example through preconditioned spectral descent (\Cref{sec:limitations_lmo}). By Taylor's theorem, $f(X_{k+1}) = f(X_k) - \alpha_k \langle G_k, \Delta_k \rangle + \frac{\alpha_k^2}{2} \langle \Delta_k, \nabla^2 f(Z_k) [\Delta_k] \rangle,$ with $Z_k$ on the line segment between $X_k$ and $X_{k+1}$. Plugging in our update $\Delta_k = \langle \widetilde G_k, D_k \rangle D_k$, and factoring inherently isolates two fundamental quantities that govern the trajectory of the optimizer:
\begin{equation} \label{eq:quantities}
    \underbrace{\gamma_k := \frac{\langle G_k, D_k \rangle}{\langle \widetilde G_k, D_k \rangle},}_{\text{Batch Gradient Alignment}} \qquad 
    \underbrace{\Phi_k := \vphantom{\frac{\langle G_k, D_k \rangle}{\langle \widetilde G_k, D_k \rangle}}\frac{\langle \widetilde G_k, D_k \rangle^2}{\langle D_k, \nabla^2 f(Z_k) [D_k] \rangle}}_{\text{Local Directional Descent Potential}}.
\end{equation}

Define in addition $\lambda_k = \langle D_k, \nabla^2 f(Z_k) [D_k] \rangle$, a quantity which will only appear through multiplication with the learning rate $\alpha_k$. Substituting these into the above yields the exact descent condition:
\begin{equation}
f(X_{k+1}) - f(X_k) = -  \Phi_k \left( \gamma_k - \frac{[\alpha_k\lambda_k]}{2}  \right)[\alpha_k\lambda_k].
\label{eq:descent_main}
\end{equation}
\Cref{eq:descent_main} is a quadratic in $\alpha_k$ (assuming $\lambda_k$ is approximately constant), minimised at $\alpha_k^*=\frac{\gamma_k}{\lambda_k}$. At this optimum, \Cref{eq:descent_main} does not depend on $\lambda_k$, and $\Delta f = -\frac{1}{2}\Phi_k\gamma_k^2$. There are a number of observations we should make about this:

\begin{enumerate}[leftmargin=*]
    \item The derivation of \Cref{eq:descent_main} does not utilize anything apart from smoothness. Crucially, neither quantity invokes an LMO or places any restriction on how $D_k$ is constructed. While we cannot directly prove convergence without further assumptions (like \Cref{thm:zh_conv}), it still remains useful for explaining multiple phenomena, as explained below.
    \item This directly recovers the formula of \citet{davis2026spectralgradientupdateshelp} as a special case, by bounding $\lambda_k$ with a Lipschitz constant, setting $\widetilde G_k=G_k$ and $D_k = \lmo(G_k)$, gives $\gamma_k = 1$ with bound $-\|G_k\|_*^2 / 2L_{\|\cdot\|}$. But, compared with \citep{davis2026spectralgradientupdateshelp}, this includes three important generalizations: a non-fixed step-size, non-LMO-based updates, and stochasticity.
    \item While for any fixed norm, the $D_k$ maximizing $\Phi_k\lambda_k$ is $\lmo(\widetilde{G}_k)$, this may not be the direction that maximizes $\gamma_k$ or minimizes $\lambda_k$.
    \item Because the intermediate point $Z_k$ is unknown a priori, $\lambda_k$ cannot be calculated during practical training. Its exact analytical use requires quadratic models (as explored in \Cref{sec:asymptotics}).
\end{enumerate}
Ultimately, optimization performance is driven by the interplay of these quantities ($\gamma_k, \Phi_k$, and the learning rate $\alpha_k\lambda_k$), rather than the strict form of the update direction itself. In the remainder of this paper, we analyze these quantities exactly in the RF model and empirically for a GPT2 training run.

\subsection{Exact Asymptotics in a Random Feature Model}\label{sec:asymptotics}
We take the well-specified random feature regression model as in \cite{davis2026spectralgradientupdateshelp} with $W$ the weights and $A \in \mathbb{R}^{n \times b}$ batched post-activations of the previous layer:
\begin{equation}\label{eq:rf}
    \min_{W \in \mathbb{R}^{o \times n}} f(W) = \frac{1}{2b}\|WA - Y'A\|^2_F.
\end{equation}
We will consider the proportional asymptotic limit $(n, b \rightarrow \infty$ with $n / b \rightarrow \delta)$ and assume the batched post-activations are mean-zero Gaussians, $a_i \sim \mathcal{N}(0, C)$, with a well-defined limit for $C$. Let $G = Y' - W$ be the oracle gradient. Using standard random matrix theory, one can show almost sure convergence of both $\Phi$ and $\gamma,$ and we give full details of this in \Cref{app:LimitingQuantities}. The general equations are hard to analyze, but in the case that $C$ is aligned with the right singular vectors of $G$, i.e., $V^{\top}CV = \Lambda$ for $\Lambda$ a diagonal matrix, we can obtain certain illuminating results.

\begin{restatable}{proposition}{propalignment}
    If the entries of $\Lambda$ are in non-increasing order then $\gamma_{\operatorname{SGD}} \leq \gamma_{\operatorname{Muon}}$ with equality if and only if $\Lambda = cI$ or $\Sigma = cI$.
\end{restatable}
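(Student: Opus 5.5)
The plan is to use the almost-sure limits of $\gamma$ from \Cref{app:LimitingQuantities}, specialised to the aligned case $V^\top C V=\Lambda$. Both $\gamma_{\operatorname{SGD}}$ and $\gamma_{\operatorname{Muon}}$ should then become explicit ratios of traces in the two diagonal matrices $\Sigma$ and $\Lambda$. Cross-multiplying, the inequality should reduce to a Chebyshev sum (rearrangement) inequality, which holds because the singular values in $\Sigma$ and the entries of $\Lambda$ are both non-increasing, hence similarly ordered.

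\textbf{Step 1 (SGD limit).} Write $\widetilde G = G\widehat C$ with $\widehat C = AA^\top/b$. The oracle gradient is then $GC$, up to sign conventions that cancel in $\gamma$. For SGD we take $D\propto \widetilde G$, and the normalisation cancels in $\gamma$, so
\[
\gamma_{\operatorname{SGD}}=\frac{\langle GC, G\widehat C\rangle}{\|G\widehat C\|_F^2}.
\]
Standard Wishart moments, made almost sure by concentration in the proportional limit, give $\tfrac1n\tr(\widehat C M)\to\tfrac1n\tr(CM)$ and $\tfrac1n\tr(\widehat C M\widehat C)\to \tfrac1n\tr(CMC)+\delta\,\tfrac1n\tr(MC)\,\tfrac1n\tr(C)$. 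In the aligned basis this yields
\[
\gamma_{\operatorname{SGD}}=\frac{\tr(\Sigma^2\Lambda^2)}{\tr(\Sigma^2\Lambda^2)+\delta\,\tr(\Sigma^2\Lambda)\,\bar\lambda},
\]
where $\bar\lambda$ denotes the mean of the entries of $\Lambda$.

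\textbf{Step 2 (Muon limit).} Here $D=\mathrm{polar}(G\widehat C)$, and I would take its limiting overlaps from the deterministic equivalents in \Cref{app:LimitingQuantities}. The aim is to show that, when $V^\top CV=\Lambda$, the limiting $\langle GC,D\rangle$ and $\langle G\widehat C,D\rangle$ are diagonal sums of the form $\sum_i \sigma_i\lambda_i w_i$ and $\sum_i(\sigma_i\lambda_i+\text{noise}_i)w_i$. The noise term should carry the same factor $\delta\bar\lambda$ times a quantity depending on $\Sigma$ and $\Lambda$, but now weighted uniformly rather than by $\sigma_i\lambda_i$, because the polar map flattens the spectrum.

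\textbf{Step 3 (comparison).} With both limits in hand, $\gamma_{\operatorname{SGD}}\le\gamma_{\operatorname{Muon}}$ becomes, after cross-multiplication, an inequality between a $\sigma_i\lambda_i$-weighted average and a uniform average of a common non-increasing sequence. Equivalently it reads $\tfrac1n\sum\sigma_i\lambda_i\ge(\tfrac1n\sum\sigma_i)(\tfrac1n\sum\lambda_i)$, possibly with squared weights. Chebyshev's sum inequality then gives the claim. The equality case of Chebyshev holds exactly when one of the two sequences is constant, which is $\Lambda=cI$ or $\Sigma=cI$; this matches the stated characterisation.

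The main obstacle is Step 2. The polar factor of a sample-covariance-perturbed matrix has no closed form, so I must rely on the appendix's fixed-point (Stieltjes-transform) characterisation of the limit. I would then need to verify that alignment makes this characterisation diagonal and monotone in the index $i$, which is what Chebyshev requires. If the weights are only implicitly defined, I would first try to prove that they are monotone in $\sigma_i\lambda_i$ directly from the fixed-point equation, before attempting the rearrangement step.
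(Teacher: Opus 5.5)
There is a genuine gap, and it starts in Step~1 rather than Step~2. In this section the paper takes $G = Y'-W$ itself as the oracle gradient, so $\gamma = \langle G, D\rangle/\langle GH, D\rangle$ with $H = AA^\top/b$. You put the population gradient $GC$ in the numerator instead, and that choice reverses the inequality. Step~2 is also easier than you fear. Since $G = U\Sigma V^\top$ has fixed rank $r$ and $D = (GH^2G^\top)^{-1/2}GH$, both inner products are continuous functions of the $r\times r$ compressions $T_i = V^\top H^i V$. These converge almost surely by the appendix's moment theorem: in the aligned case $T_1\to\Lambda$ and $T_2\to\Lambda^2+\epsilon\Lambda$, with $\epsilon=\delta\tau_1$ (which the paper writes as $\delta$). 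The limits are diagonal, so the square roots act entrywise and no fixed-point analysis of the polar factor is needed. Doing this with your numerator, and setting $y_i = \sigma_i\sqrt{\lambda_i(\lambda_i+\epsilon)}$ and $k_i = \lambda_i/(\lambda_i+\epsilon)$, gives
\[
\gamma_{\operatorname{Muon}} = \frac{\sum_i y_i k_i}{\sum_i y_i}, \qquad \gamma_{\operatorname{SGD}} = \frac{\sum_i y_i^2 k_i}{\sum_i y_i^2}.
\]
Because $y$ and $k$ are \emph{both} non-increasing, weighted Chebyshev yields $\gamma_{\operatorname{SGD}}\ge\gamma_{\operatorname{Muon}}$. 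For example, $\Sigma=I$, $\Lambda=\operatorname{diag}(3,1)$, $\epsilon=1$ gives $5/7$ versus about $0.68$. So no completion of your Step~3 can produce the stated direction.

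With the paper's numerator, the same computation gives $x_i = \sigma_i\sqrt{\lambda_i(\lambda_i+\epsilon)}$ and $h_i = 1/(\lambda_i+\epsilon)$. Then $\gamma_{\operatorname{Muon}}$ is the $x$-weighted mean of $h$ and $\gamma_{\operatorname{SGD}}$ is the $x^2$-weighted mean. Now $x$ is non-increasing but $h$ is non-decreasing, and the paper concludes from $\sum_{i,j}x_ix_j^2(h_i-h_j) = \sum_{i<j}x_ix_j(x_i-x_j)(h_j-h_i)\ge 0$. The key ingredient is therefore a weighted Chebyshev inequality for the \emph{oppositely} ordered pair $(x,h)$, not the unweighted Chebyshev for $(\sigma,\lambda)$ that you anticipate.

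This also undermines your equality argument. Equality requires $(x_i-x_j)(h_j-h_i)=0$ for every pair. Since $x_i$ is increasing in both $\sigma_i$ and $\lambda_i$, both $h_i=h_j$ and $x_i=x_j$ force $\lambda_i=\lambda_j$. Hence equality holds exactly when $\Lambda=cI$. By contrast, $\Sigma=cI$ with non-constant $\Lambda$ gives strict inequality: with $\Sigma=I$, $\Lambda=\operatorname{diag}(3,1)$, $\epsilon=1$, one gets $\gamma_{\operatorname{SGD}}=2/7<\gamma_{\operatorname{Muon}}\approx 0.32$. The ``$\Sigma=cI$'' half of the stated characterisation cannot be proved, and the paper's own one-line equality remark glosses over this.
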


\begin{restatable}{proposition}{propIsotropic}
    If $\Lambda = I$ then $\Phi_{\operatorname{SGD}} \geq \Phi_{\operatorname{Muon}}$ and therefore $\gamma^2_{\operatorname{SGD}}\Phi_{\operatorname{SGD}} \geq \gamma^2_{\operatorname{Muon}}\Phi_{\operatorname{Muon}}$
\end{restatable}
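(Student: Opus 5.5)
The plan is to specialize the almost-sure limits of $\Phi$ and $\gamma$ from \Cref{app:LimitingQuantities} to $\Lambda = I$, show that the $\Phi$ comparison reduces to a scalar inequality between moments of a single limiting spectral measure, and then get the $\gamma^2\Phi$ statement from the previous proposition.

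\textbf{Setting up the quantities.} In the random feature model \eqref{eq:rf} the loss is exactly quadratic. Write $\widehat C = AA^\top/b$. Then:
\begin{itemize}
\item the Hessian acts as $D \mapsto D\widehat C$, so $\lambda = \langle D, D\widehat C\rangle$ does not depend on $Z_k$;
\item the stochastic gradient is $\widetilde G = -G\widehat C$.
\end{itemize}
Hence for any direction,
\[
\Phi = \frac{\langle G\widehat C, D\rangle^2}{\langle D, D\widehat C\rangle}.
\]
When $\Lambda = I$ we have $C = I$ on the relevant subspace, and $\widehat C$ is a white Wishart matrix. Its limiting spectral measure $\mu_\delta$ is Marchenko--Pastur, and it is asymptotically free from the deterministic $G = U\Sigma V^\top$.

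\textbf{Step 1 (SGD).} For \sgd, $D \propto G\widehat C$. Normalized traces factor through the moments $m_k = \int x^k\,d\mu_\delta$, giving
\[
\Phi_{\operatorname{SGD}} \to \frac{m_2^2}{m_3}\,\tr(\Sigma^2), \qquad m_2 = 1+\delta,\quad m_3 = 1+3\delta+\delta^2 .
\]

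\textbf{Step 2 (Muon).} For \Muon, $D = \operatorname{polar}(G\widehat C) = (G\widehat C^2 G^\top)^{-1/2} G\widehat C$. Here I would take the limiting expressions from \Cref{app:LimitingQuantities} and set $\Lambda = I$. This writes the numerator $\|G\widehat C\|_*^2$ and the denominator $\tr(D\widehat C D^\top)$ as functionals of the free multiplicative convolution of $\Sigma^2$ with $\mu_\delta^{\boxtimes 2}$ (the spectral law of $G\widehat C^2G^\top$). The aim is to reduce $\Phi_{\operatorname{SGD}} \ge \Phi_{\operatorname{Muon}}$ to an inequality of the form
\[
\frac{\bigl(\int s\,d\nu\bigr)^2}{\int (\cdots)\,d\nu} \;\le\; \frac{m_2^2}{m_3}\,\tr(\Sigma^2)
\]
for that measure $\nu$.

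\textbf{Main obstacle.} Step 2, i.e.\ controlling the \Muon denominator $\tr(D\widehat C D^\top)$, is the hard part, because $D$ depends nonlinearly on $\widehat C$. I would try two routes in order.
\begin{itemize}
\item \emph{Cauchy--Schwarz with $\widehat C$-weights.} Write $\langle G\widehat C, D\rangle = \langle G\widehat C^{1/2}, D\widehat C^{1/2}\rangle$. This gives $\Phi \le \tr(G\widehat C G^\top) \to m_1\tr(\Sigma^2)$ for every $D$. That bound is not directly comparable with $m_2^2/m_3$, so I would sharpen it by using the specific polar form of $D$.
\item \emph{Jensen on $\nu$.} Bound the nuclear norm by $\|X\|_*^2 \le \operatorname{rank}\cdot\|X\|_F^2$, refined via Jensen over $\nu$, and bound the denominator from below using the subordination equations for $\boxtimes$.
\end{itemize}
If neither closes in general, I would first treat $\Sigma = I$, where everything reduces to explicit Marchenko--Pastur integrals such as $\int \sqrt{x}\,d\mu$. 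I would then argue monotonicity in the spread of $\Sigma$.

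\textbf{Step 3 (the ``therefore'').} Set $\Lambda = I$ in the previous proposition. $\Lambda = I$ is non-increasing, and we are in its equality case $\Lambda = cI$, so $\gamma_{\operatorname{SGD}} = \gamma_{\operatorname{Muon}}$. Multiplying $\Phi_{\operatorname{SGD}} \ge \Phi_{\operatorname{Muon}}$ by this common nonnegative value of $\gamma^2$ gives $\gamma^2_{\operatorname{SGD}}\Phi_{\operatorname{SGD}} \ge \gamma^2_{\operatorname{Muon}}\Phi_{\operatorname{Muon}}$.
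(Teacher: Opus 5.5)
The gap is in your Step 2. You never actually obtain the limit of $\Phi_{\operatorname{Muon}}$, and the free-probability framing you set up for it is the wrong asymptotic regime. In \Cref{app:LimitingQuantities}, $G=U\Sigma V^\top$ has a fixed rank $r$: $V$ is $n\times r$ with $r$ fixed as $n,b\to\infty$. The only random-matrix input is the deterministic equivalent $T_k:=V^\top\widehat C^kV\to\widehat\Gamma_k$, which for $\Lambda=I$ reads $T_k\to m_kI_r$. No free multiplicative convolution appears anywhere; incidentally, the law of $\widehat C^2$ is the push-forward of $\mu_\delta$ under $x\mapsto x^2$, not $\mu_\delta^{\boxtimes 2}$. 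Because $G\widehat C^kG^\top=U\Sigma T_k\Sigma U^\top$ involves only $r\times r$ matrices, the square root and inverse in the polar factor pass to the limit by continuity. So the ``nonlinear dependence of $D$ on $\widehat C$'' that you single out as the main obstacle disappears:
\[
\langle G\widehat C,D\rangle=\tr\bigl((\Sigma T_2\Sigma)^{1/2}\bigr)\to\sqrt{m_2}\textstyle\sum_i\sigma_i,
\qquad
\tr(D\widehat CD^\top)=\tr\bigl((\Sigma T_2\Sigma)^{-1}\Sigma T_3\Sigma\bigr)\to r\,m_3/m_2 .
\]
Hence $\Phi_{\operatorname{Muon}}\to\frac{m_2^2}{m_3}\cdot\frac{(\sum_i\sigma_i)^2}{r}$. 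Together with your Step 1, the claim reduces to $(\sum_i\sigma_i)^2\le r\sum_i\sigma_i^2$. That is the paper's whole proof, written there via $x_i=\sigma_i\sqrt{1+\delta}$ and the constant $D_i=(1+3\delta+\delta^2)/(1+\delta)$.

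Neither of your routes, nor the ``$\Sigma=I$ then monotonicity'' fallback, could have closed. In the proportional-rank regime that free probability describes, the inequality can fail. Take $G$ square with $\Sigma=I$. Then $\operatorname{polar}(G\widehat C)=G\operatorname{polar}(\widehat C)=GP$, with $P$ the projection onto the range of $\widehat C$. This gives $\Phi_{\operatorname{Muon}}=\tr(\widehat C)\approx m_1\tr(\Sigma^2)$, which saturates your own weighted Cauchy--Schwarz bound. It strictly exceeds $\Phi_{\operatorname{SGD}}\approx\frac{m_2^2}{m_3}\tr(\Sigma^2)$, since $m_2^2=(1+\delta)^2<1+3\delta+\delta^2=m_3$. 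In the paper's fixed-rank regime, by contrast, $\Sigma=I$ is exactly the equality case. Your Step 3 is fine: with $\Lambda=I$ the weights $h_i=1/(1+\delta)$ are constant, so $\gamma_{\operatorname{SGD}}=\gamma_{\operatorname{Muon}}$ and the second claim follows from the first.
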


\begin{restatable}{theorem}{thmdescent}\label{thm:RFAsymptotic}
    In the undersampled case ($\delta \rightarrow \infty$) we have
    $
        \underset{\delta \rightarrow \infty}{\lim} \gamma_{\operatorname{SGD}}^2\Phi_{\operatorname{SGD}} \geq \underset{\delta \rightarrow \infty}{\lim} \gamma_{\operatorname{Muon}}^2\Phi_{\operatorname{Muon}}
    $
\end{restatable}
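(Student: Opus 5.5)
We work directly from the almost-sure limits of $\gamma$ and $\Phi$ for \sgd and \Muon in the random feature model, as derived in \Cref{app:LimitingQuantities}. In the aligned setting $V^\top C V = \Lambda$ these reduce to explicit functionals of the singular values $\Sigma$ of the oracle gradient $G = Y' - W$ and of the spectrum $\Lambda$. The parameter $\delta$ enters through the Marchenko--Pastur-type resolvent of the batch covariance $\frac1b AA^\top$.

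\textbf{Step 1: write the two directions.} The stochastic gradient is $\widetilde G = G\,\frac{1}{b}AA^\top$. For \sgd the direction is $D_{\sgd}\propto \widetilde G$. For \Muon it is $D_{\Muon} = (\widetilde G\widetilde G^\top)^{-1/2}\widetilde G$. The Hessian is $X \mapsto X\,\frac{1}{b}AA^\top$, which is constant, so $Z_k$ plays no role.

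\textbf{Step 2: the \sgd quantities as trace moments.} For \sgd, $\gamma$ and $\Phi$ are ratios of normalized traces of products of $G^\top G$ with powers of $\widehat C = \frac1b AA^\top$:
\[
\gamma_{\sgd} = \frac{\tr(G^\top G \widehat C)}{\tr(G^\top G\widehat C^2)}\cdot(\cdots), \qquad
\Phi_{\sgd} = \frac{\tr(G^\top G\widehat C^2)^2}{\tr(G^\top G\widehat C^3)}.
\]
The product $\gamma^2\Phi$ for \sgd is therefore a ratio of moments $\tr(G^\top G\widehat C^j)$, $j\le 3$. These moments have standard free-probability limits, expressible as polynomials in $\delta$ with coefficients given by moments of $\Lambda$ weighted by $\Sigma^2$.

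\textbf{Step 3: the \Muon quantities.} For \Muon the limits involve the matrix square root of $\widetilde G\widetilde G^\top$, so they are given implicitly via a fixed-point (Stieltjes transform) equation from the appendix.

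\textbf{Step 4: pass to $\delta\to\infty$.} I would take this limit in both expressions and identify the leading order. In this regime $n\gg b$, so $\widehat C$ has rank $b$ and, after normalization, behaves like a projection onto a random $b$-dimensional subspace, with a nonzero spectrum concentrated around $\delta$ times a scalar. I expect both $\gamma^2\Phi$ quantities to converge to explicit closed forms. For \Muon, the whitening makes $D_{\Muon}$ asymptotically the polar factor of a rank-$b$ nearly-isotropic sketch of $G$, so its limit should depend on $\Sigma$ only through $\tr\Sigma$-type first moments, for example $(\tr \Sigma\Lambda^{1/2})^2/\tr\Lambda$. The \sgd limit should instead involve second moments, for example $\tr(\Sigma^2\Lambda)$.

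\textbf{Step 5: compare the limits.} The comparison should then reduce to a Cauchy--Schwarz or Jensen inequality between these moments, for example $(\sum_i\sigma_i\lambda_i^{1/2})^2\le \sum_i\sigma_i^2\lambda_i\cdot r$ with matching normalizations. This yields $\lim\gamma^2_{\sgd}\Phi_{\sgd}\ge\lim\gamma^2_{\Muon}\Phi_{\Muon}$, consistent with the isotropic \Cref{propIsotropic}, where $\Phi_{\sgd}\ge\Phi_{\Muon}$.

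\textbf{Main obstacle.} The hard step is Step 4 for \Muon: the fixed-point equation for the square root must be expanded as $\delta\to\infty$, and the leading order must be justified uniformly enough to interchange limits. I would first solve the fixed-point equation perturbatively in $1/\delta$, verifying that the self-consistent scale converges. Dominated convergence applied to the spectral measures would then justify exchanging $\lim_\delta$ with the a.s. limits.
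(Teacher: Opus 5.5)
Your top-level route (expand the aligned-case limits to leading order in $\delta$, then close with Cauchy--Schwarz) is the paper's. However, the proposal never carries out the computation that constitutes the proof, and the step you flag as the main obstacle rests on a misreading. Muon needs no fixed-point equation for a matrix square root. Every quantity depends on $\widehat C=\frac1b AA^\top$ only through the fixed-size $r\times r$ matrices $T_k=V^\top\widehat C^kV$, $k\le 3$:
\begin{gather*}
\langle \widetilde G, D_{\mathrm{Muon}}\rangle=\tr\bigl((\Sigma T_2\Sigma)^{1/2}\bigr),\qquad
\langle G, D_{\mathrm{Muon}}\rangle=\tr\bigl((\Sigma T_2\Sigma)^{-1/2}\Sigma T_1\Sigma\bigr),\\
\langle D_{\mathrm{Muon}}, D_{\mathrm{Muon}}\widehat C\rangle=\tr\bigl((\Sigma T_2\Sigma)^{-1}\Sigma T_3\Sigma\bigr).
\end{gather*}
In the aligned case the limits $T_1\to\Lambda$, $T_2\to\Lambda^2+\delta\Lambda$, $T_3\to\Lambda^3+2\delta\Lambda^2+(\delta+\delta^2)\Lambda$ are deterministic and diagonal. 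The square root is therefore simply $\Sigma(\Lambda^2+\delta\Lambda)^{1/2}$, and the Muon limits are as explicit as the SGD ones. No interchange of limits is involved either: $\delta\to\infty$ is taken in these deterministic expressions, after $n,b\to\infty$.

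Once you have these formulas, the rest is elementary. Both alignments satisfy $\gamma\sim 1/\delta$, and $\Phi_{\mathrm{SGD}}\to\sum_i\sigma_i^2\lambda_i$. Since $D_i=(\lambda_i^2+2\delta\lambda_i+\delta+\delta^2)/(\lambda_i+\delta)\sim\delta$ independently of $\lambda_i$, you also get $\Phi_{\mathrm{Muon}}\to\bigl(\sum_i\sigma_i\lambda_i^{1/2}\bigr)^2/r$. Hence
\[
\delta^2\gamma_{\mathrm{SGD}}^2\Phi_{\mathrm{SGD}}\to\sum_i\sigma_i^2\lambda_i,\qquad
\delta^2\gamma_{\mathrm{Muon}}^2\Phi_{\mathrm{Muon}}\to\frac1r\Bigl(\sum_i\sigma_i\lambda_i^{1/2}\Bigr)^2,
\]
and your Step 5 inequality finishes the argument.

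Two things in your sketch need correcting.
\begin{itemize}
\item Your sample Muon limit $(\tr\Sigma\Lambda^{1/2})^2/\tr\Lambda$ has the wrong normalization, and with it the claimed inequality is false in general. Take all $\lambda_i=\epsilon$ small: the Muon side becomes $(\sum_i\sigma_i)^2/r$, against $\epsilon\sum_i\sigma_i^2$ for SGD.
\item Both products vanish like $\delta^{-2}$, so the literal limits in the statement are both $0$. The nontrivial claim compares the $\delta^{-2}$ coefficients, and checking that both sides share this rate ($\gamma\sim 1/\delta$ and $\Phi=\Theta(1)$ for each) is part of the argument. The paper's proof is loose here too: it records the SGD coefficient without the factor $\delta^{-2}$.
\end{itemize}
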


\begin{figure}[t]
  \centering
  \includegraphics[width=\linewidth]{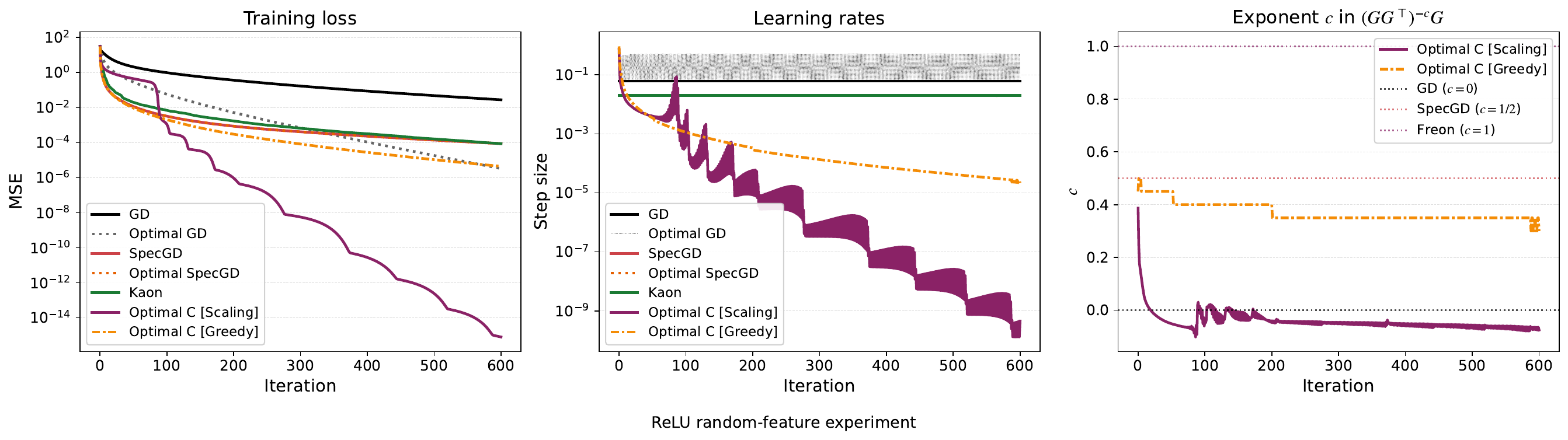}
  \caption{%
    \textbf{Random-feature regression with ReLU activations.}
    We consider the problem in \Cref{eq:rf}, using the setup of \cite{davis2026spectralgradientupdateshelp}.
    \textbf{Left:} training loss for GD, SpecGD, Kaon,
    their optimal-step variants, and two adaptive-exponent methods.
    \textbf{Centre:} effective step sizes; the optimal-GD step size highly oscillates, while optimal-specGD step size is constant, being equal to specGD.
    \textbf{Right:} the exponent $c$ in the update direction $(GG^\top)^{-c}G$
    as tracked by the two Optimal~C methods during training. Further details can be found in \Cref{ap:rf_details}.
  }
  \label{fig:rf-relu}
\end{figure}

\subsection{Empirical Observations}\label{sec:empirics}
\paragraph{The RF optimal step sizes:}
The asymptotic results in \Cref{sec:asymptotics} compare update directions through local descent quantities, but they abstract away a key practical issue of step size stability. Specifically, if we evaluate the optimal per-step step size (i.e., exact line search), the conclusions of \cite{davis2026spectralgradientupdateshelp} completely reverse: standard GD achieves greater loss reduction than spectral descent, a phenomenon also noted by \citet{gonon2026insightsmuonsimplequadratics}. However, optimal GD requires a highly oscillatory and practically unrealizable step-size schedule (\Cref{fig:rf-relu}). Conversely, this exact line-search model reveals the core mechanistic advantage of spectral descent: its optimal step size is constant. Because this step size is proportional to $1/\|DA\|^2_F$, where $D$ is the polar factor of $G$, the value remains practically constant throughout training.
\paragraph{The RF optimal norms:}
Beyond step-size stability, the RF model can be pushed further by dynamically selecting the Schatten norm exponent $c$ that maximizes the local directional potential. We observe, however, that greedily optimizing $c$ at every step yields rapid initial descent but degrades long-term dynamics, even when paired with optimal step sizes (\Cref{fig:rf-relu}). Instead, by assuming a certain power-law decay, we derive a simple, fixed-exponent rule (\Cref{ap:rf_approx}). This approximate choice significantly improves overall performance, overcoming the limitations of spectral descent for mean-zero activations identified by \citet{davis2026spectralgradientupdateshelp}, as demonstrated in \Cref{fig:rf-swiglu}.



\paragraph{The naive RF plug-in for GPT-2:}
Attempting to map the RF model directly onto a GPT-2 architecture exposes distinct limitations, as illustrated by the optimal exponent heatmaps in \Cref{fig:ggn_optimal_c} (Left) and \Cref{fig:freon_layerwise_heatmap_rf}. The RF model fails to provide an accurate quadratic approximation of the layer. Specifically, optimal exponents frequently fall into the strict non-norm regime ($c > 1/2$), indicating that such updates fundamentally deviate from typical RF quadratic landscapes (\Cref{fig:pq_lr_heatmap_rf}).

\paragraph{Single batch vs.~Full dataset geometry in GPT-2:} 

\begin{figure}[t]
    \centering
    \includegraphics[width=\textwidth]{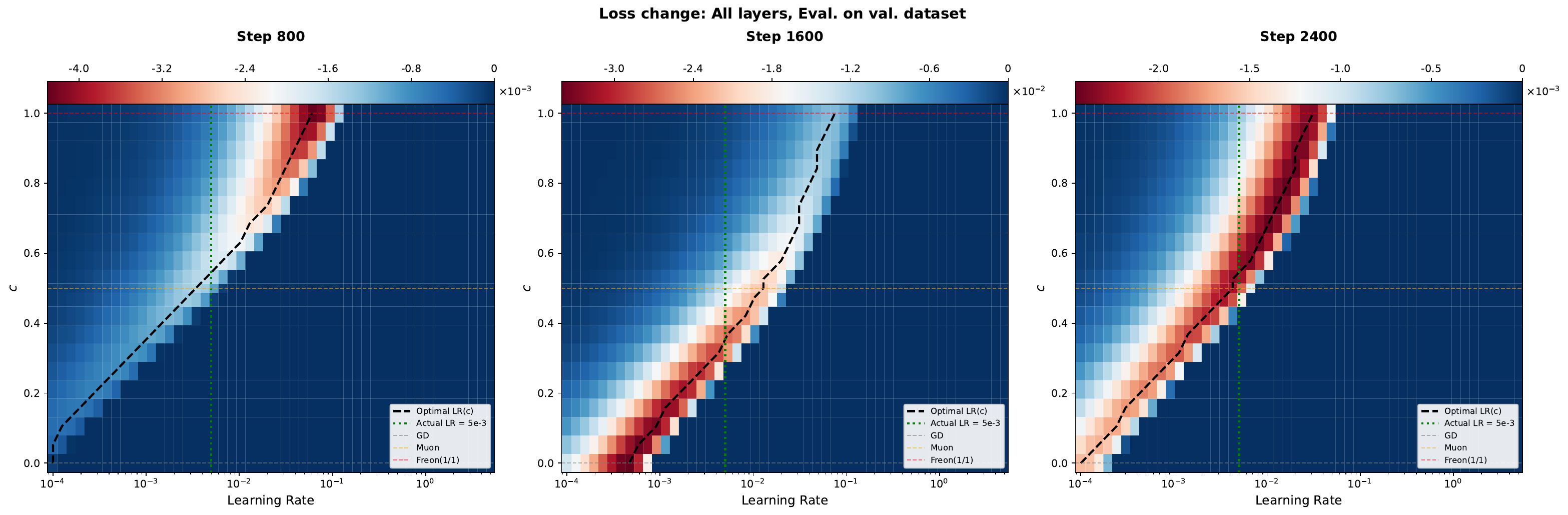}
    \caption{Loss change $\Delta  f$ over the joint $(c, \eta)$ grid, evaluated on the full validation set. Each panel shows a different training point used for the calculation of the gradient. The dashed black curve marks the optimal learning rate per $c$; the green dotted line marks the actual training learning rate.}
    \label{fig:pq_lr_heatmap}
\end{figure}

Given the inapplicability of the RF model, we evaluate both the Generalized Gauss-Newton (GGN) approximation and the exact directional landscape, which qualitatively agree (\Cref{fig:ggn_optimal_c}(ab)). For a single batch, evaluating the per-step decrease confirms that \Freon at $c=1$ yields the optimal local decrease (\Cref{fig:ggn_optimal_c}(c)), consistently across all layers (\Cref{fig:pq_lr_heatmap_perlayer_train}). This shows that the local geometry is severely distorted: favoring $c=1$ is impossible in standard RF models (\Cref{fig:pq_lr_heatmap_rf}). Because local geometry poorly predicts overall performance, we evaluate the global dynamics across the full dataset, revealing a highly stochastic landscape: rather than converging, the optimal exponents fluctuate dynamically over batches during training (\Cref{fig:pq_lr_heatmap}, \Cref{fig:freon_layerwise_range_actual}). Thus, the actual global landscape is significantly more complex than local, single-batch models suggest.
\paragraph{$\Phi$ and $\gamma$ dynamics in GPT-2:}
Tracking $\Phi$ and $\gamma$ throughout training reveals a shifting optimization hierarchy (\Cref{fig:phi_gamma_h3_attn}). Initially, methods like \tsgd exhibit strong local decrease (high $\Phi$), with most \Freon variants similar. At the next snapshot, this inverts: \tsgd and \sgd become heavily suboptimal with larger variance. This mirrors the validation loss trajectories \Cref{fig:val_curves_best_overview}, where the rate of decrease slows down for \sgd variants, but remains good for \freon.

This reveals an interesting empirical observation: gradients tend to agree most in the middle of the singular value spectrum (in the sense of signal-to-noise ratio), rather than at the extremes ($c=0$ or $c=1$). Increasing $c$ to target this high-SNR mid-range (e.g., $c \approx 3/4$) naturally suppresses the mean and variance of $\gamma$, as updates align with these mid-tier rather than maximal singular values.

Overall, these empirical observations clarify the mechanics of \Cref{eq:descent_main}: changing directions of descent from the gradient naturally suppresses $\gamma$, yet this deliberate sacrifice of $\gamma$ enables a massive amplification of $\Phi$ -- the primary driver of loss reduction. Trading $\gamma$ for a boosted $\Phi$ fundamentally explains why the non-standard $c > 1/2$ regime ultimately dominates the optimization landscape.

\section{Conclusion}\label{sec:conclusion}
This work challenges the prevailing geometric narrative of spectral optimizers like \Muon. \Freon reveals optimal updates often violate proper norm regimes, while \Kaon proves randomized spectra perform just as well. What actually drives performance is the interplay between batch gradient alignment and directional descent potential, combined with appropriate choices of step-sizes. We discuss limitations in \Cref{ap:limitations}, leaving a final question: if geometry is not necessary for performance, should we expect theories stemming from it to yield practically useful insights?

\bibliographystyle{plainnat}
\bibliography{references}

\appendix

\crefalias{section}{appendix} 
\crefalias{subsection}{appendix} 
\newpage

\startcontents[sections]
\printcontents[sections]{l}{1}{\setcounter{tocdepth}{2}}
\counterwithin{figure}{section}
\counterwithin{table}{section}

\newpage

\section[NanoGPT Tuning Details for Figures 1 and 2]{NanoGPT Tuning Details for \Cref{fig:nanogpt_lr_sensitivity,fig:nanogpt_dense_val}}
\label{app:nanogpt_tuning_details}

The NanoGPT results in \Cref{fig:nanogpt_lr_sensitivity,fig:nanogpt_dense_val} were run through the same training pipeline on NanoGPT. Every tuning run and learning rate sweep used 8 NVIDIA H100 80GB GPUs.

For each optimizer, we tuned a matrix-update learning rate $\eta_{\mathrm{matrix}}$ and a base learning rate $\eta_{\mathrm{base}}$. The recorded tuning grid was
\begin{center}
\begin{tabular}{llll}
\texttt{(5e-3, 1e-3)} & \texttt{(1e-2, 1e-3)} & \texttt{(2e-2, 1e-3)} & \texttt{(1e-2, 5e-4)} \\
\texttt{(5e-3, 2e-3)} & \texttt{(5e-3, 4e-3)} & \texttt{(5e-3, 8e-3)} & \texttt{(5e-3, 2e-2)} \\
\texttt{(1e-2, 2e-3)} & \texttt{(1e-2, 4e-3)} & \texttt{(1e-2, 8e-3)} & \texttt{(1e-2, 2e-2)} \\
\texttt{(2e-2, 2e-3)} & \texttt{(2e-2, 4e-3)} & \texttt{(2e-2, 8e-3)} & \texttt{(2e-2, 2e-2)}
\end{tabular}
\end{center}
for $(\texttt{matrix\_lr}, \texttt{base\_lr})$.
The selected learning rates used for \Cref{fig:nanogpt_lr_sensitivity,fig:nanogpt_dense_val} are listed in \Cref{tab:nanogpt_tuned_lrs}. \Kaon and both \Freon variants used 5 Newton--Schulz/QDWH-style matrix iterations.

\begin{table}[t]
    \centering
    \caption{Learning rates selected by the seed-42 tuning sweep and used for \Cref{fig:nanogpt_lr_sensitivity,fig:nanogpt_dense_val}.}
    \label{tab:nanogpt_tuned_lrs}
    \begin{tabular}{lccc}
        \toprule
        Optimizer & $\eta_{\mathrm{matrix}}$ & $\eta_{\mathrm{base}}$ & Extra setting \\
        \midrule
        \Muon & $2\cdot 10^{-2}$ & $2\cdot 10^{-2}$ & Polar update \\
        \Kaon & $2\cdot 10^{-2}$ & $8\cdot 10^{-3}$ & 5 matrix iterations \\
        \Freon $c=2/3$ & $2\cdot 10^{-2}$ & $2\cdot 10^{-2}$ & 5 matrix iterations \\
        \Freon $c=3/4$ & $2\cdot 10^{-2}$ & $8\cdot 10^{-3}$ & 5 matrix iterations \\
        \bottomrule
    \end{tabular}
\end{table}

\Cref{fig:nanogpt_lr_sensitivity} then jointly scales the tuned pair as $(\eta_{\mathrm{matrix}},\eta_{\mathrm{base}})=\rho(\eta_{\mathrm{matrix}}^\star,\eta_{\mathrm{base}}^\star)$. The full sensitivity sweep ran 4 optimizers, 7 scale factors, and 3 seeds, for 84 runs in total. The scale factors were $\rho\in\{0.03,0.1,0.3,1,3,10,30,100\}$ and the seeds were $42,43,44$. The plotted version omits $\rho=100$, so the x-axis in \Cref{fig:nanogpt_lr_sensitivity} shows the matrix-update learning rates induced by $\rho\in\{0.03,0.1,0.3,1,3,10,30\}$. Error bars show $\pm 2$ std over the three seeds.

\Cref{fig:nanogpt_dense_val} reruns the selected curve configurations with denser validation logging. These dense-validation jobs used the same tuned learning rates and the same seeds $42,43,44$, with validation every 8 optimizer steps. The dense runs generated Muon at $\rho=1$, \Kaon at $\rho=1$ and $\rho=3$, \Freon $c=2/3$ at $\rho=1$, and \Freon $c=3/4$ at $\rho=1$. The plotted dense-validation figure uses Muon at $\rho=1$, \Kaon at $\rho=3$, and both \Freon variants at $\rho=1$. The plotted lines show averages across three seeds and shaded regions showing $\pm 2$ std.

\section{GPT2 Training Details}
We utilize the existing codebase from \citep{amsel2025polarexpressoptimalmatrix,crawshaw2025explorationnoneuclideangradientdescent}. All experiments train a GPT-2 model (12 layers, 12 attention heads, embedding dimension 768, 124M parameters) with GELU activations and FlashAttention on the WikiText-2 dataset \citep{merity2016wikitext}. Each run uses a context length of $512$ tokens, batch size $8$, gradient norm clipping at $1.0$, and a constant-then-linear decay schedule with a 5\% warm-up. The combined figure \Cref{fig:val_loss_all} reports final validation loss as a function of learning rate, sweeping 8--10 learning rates per optimizer configuration.

\textbf{SGD and TruncatedSGD.} Runs use SGD with momentum $0.9$. Truncation levels of $0, 0.1, 0.5, 1, 2, 5, 10$\% are swept, where $0$\% recovers vanilla SGD. Runs with non-zero truncation rescale the update 2-norm to match that of the untruncated gradient.

\textbf{Muon.} Standard Muon with momentum $0.95$ and 5 Newton--Schulz iterations ($p/q = 1/2$).

\textbf{Kaon.} Same Newton--Schulz backbone as Muon ($5$ steps, momentum $0.95$) but with the chaos iterations.

\textbf{Freon.} FreonRational with momentum $0.95$, 5 Newton--Schulz steps, and the exponent $c = a/b$ swept over ${0, 1/4, 1/3, 1/2, 2/3, 3/4, 1}$, where $c=0$ recovers gradient descent and $c=1/2$ recovers Muon.

\section{Limitations}\label{ap:limitations}
While our analysis provides novel mechanistic insights into the behavior of spectral optimizers, there is a number of theoretical and practical limitations:
\begin{enumerate}[leftmargin=*]
    \item \textbf{Scope of Empirics and Theory:} Our empirical validation is currently restricted to a random feature model and a single architecture modality (a GPT-2 language model). Furthermore, our rigorous analysis of the two core optimization quantities, batch gradient alignment and directional descent potential, is limited to the RF setting. It remains an open question how precisely these dynamics map to other modalities (e.g., vision) or more theoretically amenable non-quadratic loss landscapes.
    \item \textbf{Theoretical Assumptions:} The exact asymptotic results of \Cref{sec:asymptotics} rely strictly on the assumption of mean-zero activations, excluding activations such as ReLU.
    \item \textbf{Lack of Predictive Power:}  Our framework primarily serves as a diagnostic lens. The quantities governing the optimization trajectory are highly stochastic, meaning they can only be reliably evaluated post-factum rather than predicted a priori.
    \item \textbf{Optimal $c$:} Because the optimal local geometry is heavily distorted and highly sensitive to individual batch dynamics, our attempts to dynamically track and utilize the optimal Schatten parameters ($p, q$) step-by-step ultimately failed. The optimal exponents fluctuate too wildly across batches to formulate a stable, greedily adaptive schedule.
    \item \textbf{QDWH Coupled Cholesky} The current implementation of the \Freon algorithm serves primarily to demonstrate the theoretical viability of quasi-norm updates. While the algorithm could certainly be engineered for greater computational efficiency, it is currently unclear whether the extensive engineering effort is even necessary for \Freon, given the similar behaviour across all the optimizers.
\end{enumerate}
\section{Methods Proofs}
\label{app:proofs}

\subsection{Preconditioned Spectral Descent}

\thmpreconditioned*

\begin{proof}
    Let $\|\cdot\|$ be a unitarily invariant matrix norm and $\|\cdot\|_*$ its dual norm. Write the SVD $G_k = U_k \operatorname{diag}(\sigma_k) V_k^{\top}$. Then since $\|\cdot\|$ is unitarily invariant, there is a norm $\|\cdot\|_{\sigma}$ on $\R^r$ symmetric in the coordinates such that $\|G_k\| = \|\sigma_k\|_\sigma$. The first part of the theorem then follows from formulating steepest descent as
    \begin{equation*}
        \begin{aligned}
            X_{k+1} &= {X}_k-\eta\left\|{G}_k\right\|_* \lmo\left({G}_k\right) \\
            &= {X}_k-\alpha_k \langle G_k, D_k\rangle D_k
        \end{aligned}
    \end{equation*}
    with $\alpha_k =\eta$ and $D_k = \lmo(G_k) = U_k\operatorname{lmo}_{\|\cdot\|_\sigma}(\sigma_k)V_k^{\top}$.

    Now consider a preconditioned spectral descent update. Then if it is a steepest descent update, we have as in the above argument that
    \begin{equation}\label{eq:equiv_precond_steep}
        p_k(\sigma_k) = \frac{\eta\|G_k\|_*}{\alpha_k\langle G_k,D_k\rangle} \operatorname{lmo}_{\|\cdot\|_\sigma}(\sigma_k) \propto \operatorname{lmo}_{\|\cdot\|_\sigma}(\sigma_k).
    \end{equation}

    Write $T_k = \operatorname{lmo}_{\|\cdot\|_\sigma}(\sigma_k)$. Suppose $[\sigma_k]_i \leq  [\sigma_k]_j$ and $[T_k]_i > [T_k]_j$. Define $\tilde T_k\in\R^r$ such that $[\tilde T_k]_i = [T_k]_j$, $[\tilde T_k]_j = [T_k]_i$ and $[\tilde T_k]_{\tilde i} = [T_k]_{\tilde i}$ for all $\tilde i \neq i,j$. Then since $\|\cdot\|_\sigma$ is symmetric in the coordinates, $\|\tilde T_k\|_\sigma = \|T_k\|_\sigma = 1$. But we see that $\langle \sigma_k, \tilde T_k\rangle > \langle \sigma_k, T_k\rangle$. This contradicts $T_k = \operatorname{lmo}_{\|\cdot\|_\sigma}(\sigma_k)$. Thus $\operatorname{lmo}_{\|\cdot\|_\sigma}$ preserves the order $\leq$ of the entries of $\sigma_k$, hence so does $p_k$.

\end{proof}

\thmcvgspectral*

\begin{proof}
As $f$ has $L_{\mathcal{X}}$-Lipschitz continuous gradients with respect to $\|\cdot\|$, we can use the generalized descent lemma, which for the total update $\Delta_k = \langle G_k, D_k \rangle D_k$ yields:
\begin{align*}
f(X_{k+1}) &\le f(X_k) - \alpha_k \langle G_k, \Delta_k \rangle + \frac{L_{\mathcal{X}}}{2} \alpha_k^2 \|\Delta_k\|^2 \\
&= f(X_k) - \alpha_k \langle G_k, D_k \rangle^2 + \frac{L_{\mathcal{X}}}{2} \alpha_k^2 \langle G_k, D_k \rangle^2 \|D_k\|^2
\end{align*}

Factoring out the squared inner product isolates the step parameter logic:
\[
f(X_{k+1}) \le f(X_k) - \langle G_k, D_k \rangle^2 \left( \alpha_k - \frac{L_{\mathcal{X}}}{2} \alpha_k^2 \|D_k\|^2 \right)
\]

By the absolute upper bound (Condition 2) and unitary invariance, $\|D_k\| = \|p_k(\sigma_k)\| \le M_k$. Substituting this guarantees the worst-case penalty:
\[
f(X_{k+1}) \le f(X_k) - \langle G_k, D_k \rangle^2 \left( \alpha_k - \frac{L_{\mathcal{X}}}{2} \alpha_k^2 M_k^2 \right)
\]

Substituting the generalized step size $\alpha_k = \frac{\eta}{L_{\mathcal{X}} M_k^2}$:
\begin{align*}
f(X_{k+1}) &\le f(X_k) - \langle G_k, D_k \rangle^2 \left( \frac{\eta}{L_{\mathcal{X}} M_k^2} - \frac{L_{\mathcal{X}}}{2} \frac{\eta^2}{L_{\mathcal{X}}^2 M_k^4} M_k^2 \right) \\
&= f(X_k) - \frac{\eta(1 - \frac{\eta}{2})}{L_{\mathcal{X}} M_k^2} \langle G_k, D_k \rangle^2
\end{align*}

By the scale-invariant lower bound (Condition 1), the inner product is bounded below by the scaled dual norm of the gradient: $\langle G_k, D_k \rangle = \langle \sigma_k, p_k(\sigma_k) \rangle \ge m_k \|\sigma_k\|_{*} = m_k \|G_k\|_{*}$. Squaring this and substituting yields:
\[
f(X_{k+1}) \le f(X_k) - \frac{\eta(1 - \frac{\eta}{2}) m_k^2}{L_{\mathcal{X}} M_k^2} \|G_k\|_{*}^2
\]

Rearranging and summing over $K$ iterations gives a telescoping sum:
\[
\frac{\eta(1 - \frac{\eta}{2})}{L_{\mathcal{X}}} \sum_{k=0}^{K-1} \left( \frac{m_k}{M_k} \right)^2 \|G_k\|_{*}^2 \le f(X_0) - f(X_K) \le f(X_0) - f_{\min}
\]

We can lower-bound the sum on the left by replacing each gradient norm with the minimum gradient norm encountered over the $K$ iterations:
\[
\left[\min_{0 \le k < K} \|G_k\|_{*}^2 \right]\sum_{k=0}^{K-1} \left( \frac{m_k}{M_k} \right)^2 \le \sum_{k=0}^{K-1} \left( \frac{m_k}{M_k} \right)^2 \|G_k\|_{*}^2
\]

Substituting this into the inequality and isolating the minimum gradient norm perfectly recovers the finite-time rate:
\[
\min_{0 \le k < K} \|G_k\|_{*}^2 \le \frac{L_{\mathcal{X}} (f(X_0) - f_{\min})}{\eta(1-\frac{\eta}{2}) \sum_{k=0}^{K-1} \left( \frac{m_k}{M_k} \right)^2}
\]

Because the series $\sum (m_k/M_k)^2$ diverges to infinity as $K \to \infty$, the right-hand side converges to exactly zero. Therefore, $\liminf_{k \to \infty} \|G_k\|_{*} = 0$.
\end{proof}

\begin{theorem}[Convergence of Stochastic Spectral Descent]\label{thm:ap_stoch_conv}
Let the setting be identical to \Cref{thm:zh_conv}. Suppose $p_k(\sigma)$ is generated by a stochastic process over $(k,\sigma) \in \N\times \R^r_{>0}$, and denote by $\mathbb{E}_{k-1}[\cdot]$ the conditional expectation given the history up to step $k-1$. Assume that the random variables $m_k, M_k$ from \Cref{thm:zh_conv} satisfy the following bounds almost surely, $m_k \ge 0$ and $M_k \in (0, \infty)$.

Let $\mu_k = \mathbb{E}_{k-1}[(m_k/M_k)^2]$, and assume $\sum_{k=0}^\infty \mu_k = \infty$ almost surely. Then, for step size chosen as $\alpha_k = \frac{\eta}{L_{\mathcal{X}} M_k^2}$ for $\eta \in (0, 2)$, almost surely, the sequence achieves the finite-time convergence rate:
\[
\min_{0 \le k < K} \|G_k\|_{*}^2 = \mathcal{O}\left( \frac{1}{\sum_{k=0}^{K-1} \mu_k} \right) \qquad \text{and} \qquad \liminf_{k \to \infty} \|G_k\|_{*} = 0.
\]
\end{theorem}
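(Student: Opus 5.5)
The plan is to reuse the deterministic per-step descent inequality from the proof of \Cref{thm:zh_conv} pathwise, and then convert the resulting sum of random weights into a sum of their conditional expectations using a martingale argument. Fix a sample path. On that path $m_k,M_k$ are valid constants for step $k$, and $\alpha_k=\eta/(L_{\mathcal{X}}M_k^2)$. The argument of \Cref{thm:zh_conv} uses only the descent lemma together with the two conditions at step $k$, so it goes through verbatim and gives, almost surely,
\[
\frac{\eta(1-\frac{\eta}{2})}{L_{\mathcal{X}}}\sum_{k=0}^{K-1} Y_k\,\|G_k\|_*^2\le f(X_0)-f_{\min},\qquad Y_k:=(m_k/M_k)^2 .
\]
Note that $\|G_k\|_*$ is determined by the history up to step $k-1$, since $X_k$ is. Hence
\[
\min_{k<K}\|G_k\|_*^2\,\sum_{k<K}Y_k \le C:=\frac{L_{\mathcal{X}}(f(X_0)-f_{\min})}{\eta(1-\eta/2)} .
\]

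It then remains to show that $\sum_{k<K}Y_k$ is comparable to $S_K:=\sum_{k<K}\mu_k$ almost surely. A preliminary observation is that $Y_k\in[0,1]$: by duality, $\langle\sigma,p\rangle\le\|\sigma\|_*\|p\|\le\|\sigma\|_*M_k$, so one may take $m_k\le M_k$. Now set $\xi_k:=Y_k-\mu_k$. This is a bounded martingale difference sequence with conditional variance at most $\mathbb{E}_{k-1}[Y_k^2]\le\mu_k$. Apply the strong law for martingales, for instance via the Kronecker lemma applied to $\sum\xi_k/(1+S_{k+1})$, which converges in $L^2$ because $\sum\mu_k/(1+S_{k+1})^2<\infty$. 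This gives $\sum_{k<K}\xi_k/S_K\to0$ on the event $\{S_K\to\infty\}$, which has probability one by assumption. Consequently $\sum_{k<K}Y_k\ge \tfrac12 S_K$ for all $K$ beyond some almost surely finite $K_0(\omega)$. (Freedman's inequality combined with Borel--Cantelli is an alternative route.)

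Combining the two steps, $\min_{k<K}\|G_k\|_*^2\le 2C/S_K$ for $K\ge K_0$. For the finitely many $K<K_0$ the left-hand side is bounded by $\|G_0\|_*^2$, so the $\mathcal O(1/S_K)$ bound holds with a path-dependent constant. Since $S_K\to\infty$, this also yields $\liminf\|G_k\|_*=0$. The main obstacle is the martingale step, specifically ensuring that the variance control is in terms of $\mu_k$ itself rather than a constant. Boundedness of $Y_k$ in $[0,1]$ is exactly what supplies this, so I will verify $m_k\le M_k$ carefully; if needed, $m_k$ can simply be replaced by $\min(m_k,M_k)$, which preserves the hypotheses.
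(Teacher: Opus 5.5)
Your proposal is correct and follows the same route as the paper: the pathwise descent bound inherited from \Cref{thm:zh_conv}, the observation $Y_k=(m_k/M_k)^2\in[0,1]$ via duality, and a comparison of $\sum_{k<K}Y_k$ with $S_K=\sum_{k<K}\mu_k$ on the almost-sure event $\{S_K\to\infty\}$. The one difference is how the ratio limit $\sum_{k<K}Y_k/S_K\to1$ is obtained: the paper cites L\'evy's extended Borel--Cantelli lemma, while you reprove it via the martingale $\sum_k\xi_k/(1+S_{k+1})$ and Kronecker's lemma, which is the standard proof of that lemma. In that step, note that Kronecker needs almost-sure convergence rather than $L^2$ convergence. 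This follows because the martingale is $L^2$-bounded: pathwise, $\sum_k\mu_k/(1+S_{k+1})^2\le\sum_k\bigl(1/(1+S_k)-1/(1+S_{k+1})\bigr)\le 1$ by telescoping.
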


\begin{proof}
First, using Cauchy-Schwarz, we have $\langle \sigma, p_k(\sigma) \rangle \le \|\sigma\|_{*} \|p_k(\sigma)\|$. Substituting our assumed almost-sure bounds into this inequality yields:
\[
m_k \|\sigma\|_{*} \le \langle \sigma, p_k(\sigma) \rangle \le \|\sigma\|_{*} \|p_k(\sigma)\| \le M_k \|\sigma\|_{*}
\]
Dividing by $\|\sigma\|_{*}$ reveals that $m_k \le M_k$ almost surely. Therefore, the sequence of random variables $Z_k = (m_k/M_k)^2$ is non-negative and uniformly bounded: $Z_k \in [0, 1]$.

We now invoke Lévy's Extended Borel-Cantelli Lemma for conditional expectations. For a sequence of uniformly bounded, non-negative random variables $Z_k$, the lemma states that the sum diverges almost surely and that their asymptotic ratio converges to exactly 1 almost surely if and only if the sum of their conditional expectations diverges:
\[
\sum_{k=0}^\infty Z_k = \infty \;\;\text{and}\;\; \lim_{K \to \infty} \frac{\sum_{k=0}^{K-1} Z_k}{\sum_{k=0}^{K-1} \mathbb{E}_{k-1}[Z_k]} = 1 \;\;\text{a.s.}\quad \Longleftrightarrow\quad  \sum_{k=0}^\infty \E_{k-1}[Z_k] = \infty
\]
By assumption, the sum of the conditional expectations diverges: $\sum_{k=0}^\infty \mu_k = \infty$. By the ratio limit above, the actual observed sum of the random variables grows asymptotically at the exact same rate:
\[
\sum_{k=0}^{K-1} \left( \frac{m_k}{M_k} \right)^2 = \Omega\left( \sum_{k=0}^{K-1} \mu_k \right) \quad \text{a.s.}
\]

Since the random mappings $p_k(\sigma)$ almost surely satisfy the deterministic bounds of \Cref{thm:zh_conv} at every step, we can directly substitute this equivalent asymptotic growth into the denominator of the finite-time rate derived in \Cref{thm:zh_conv}:
\[
\min_{0 \le k < K} \|G_k\|_{*}^2 \le \frac{L_{\mathcal{X}} (f(X_0) - f_{\min})}{\eta(1-\frac{\eta}{2}) \Omega\left(\sum_{k=0}^{K-1} \mu_k\right)} = \mathcal{O}\left( \frac{1}{\sum_{k=0}^{K-1} \mu_k} \right)
\]
As $K \to \infty$, the diverging sum in the denominator guarantees that this bound converges to zero, ensuring asymptotic convergence with probability $1$: $\liminf_{k \to \infty} \|G_k\|_{*} = 0$.
\end{proof}

\thmcvgrandom*

\begin{proof}
Because all matrix norms are equivalent in finite dimensions, there exists a universal constant $c > 0$ such that for any vector $v \in \mathbb{R}^r$, $\|v\|_1 \ge c \|v\|_{*}$. The upper bound is trivially constant from normalization: $M_k = 1$. Next, we verify the scale-invariant lower bound. The inner product is:
\[
\langle \sigma_k, p_k(\sigma_k) \rangle = \sum_{i=1}^r [\sigma_k]_i \frac{[E_k]_i}{\|E_k\|} \ge \frac{[E_k]_{\min}}{\|E_k\|} \|\sigma_k\|_1
\]
where $[E_k]_{\min} = \min_{i} [E_k]_i$. Applying the norm equivalence $\|\sigma_k\|_1 \ge c\|\sigma_k\|_{*}$ gives:
\[
\langle \sigma_k, p_k(\sigma_k) \rangle \ge \left( \frac{c[E_k]_{\min}}{\|E_k\|} \right) \|\sigma_k\|_{*}
\]
We set $m_k = \frac{c [E_k]_{\min}}{\|E_k\|}$. Because the vectors $E_k$ are drawn entirely independently of the algorithmic history up to step $k-1$, the conditional expectation equals the unconditional expectation.

Because $[E_k]_{\min}$ is the minimum of a finite number $r$ of positive random variables, $m_k > 0$. Furthermore, since $m_k/M_k$ is bounded, its expectation exists and evaluates to a strictly positive constant $\mu$:
\[
\mu_k = \mathbb{E}_{k-1}\left[ \left(\frac{m_k}{M_k}\right)^2 \right] = \mathbb{E}\left[ \left( \frac{c[E_k]_{\min}}{\|E_k\|} \right)^2 \right] = \mu > 0
\]

Because $\mu_k = \mu$ is constant, the sum of expectations evaluates to $\sum_{k=0}^{K-1} \mu_k = K\mu$, which diverges as $K \to \infty$. The stochastic mapping thus satisfies all conditions of \Cref{thm:ap_stoch_conv}. Substituting $\sum \mu_k = K\mu$ into the generalized rate yields exactly:
\[
\mathcal{O}\left( \frac{1}{\sum_{k=0}^{K-1} \mu_k} \right) = \mathcal{O}\left( \frac{1}{K\mu} \right) = \mathcal{O}\left( \frac{1}{K} \right)
\]
Therefore, setting the step size $\alpha_k = \frac{\eta}{L_{\mathcal{X}}}$ almost surely guarantees the $\mathcal{O}(1/K)$ finite-time rate and asymptotic convergence.
\end{proof}

\subsection{Equivariance of \Freon$(c=1)$}\label{sec:syms_gen}

\begin{restatable}[Equivariance of \Freon$(c=1)$]{theorem}{thminvfreon}\label{thm:symm}For $W\in \R^d$, let $\phi_W$ be an arbitrary function (the neural network) with a layered structure: $W=(W_1,\dots,W_L)$ with $W_l\in \R^{n_l\times m_l}$. Let $f:\R^d\to \R$ be the differentiable loss function which depends on $W$ only through the neural network $\phi_W$, and let $G_l\in \R^{n_l\times m_l}$ be the gradient of $f$ for layer $l$. Then updates of the form $W_l - \eta(G_lG_l^\top)\inv G_l$ are equivariant under symmetries of $\phi_W$ which act on layer matrices by
\begin{equation*}
    W_l \mapsto L_l W_l R_l \quad\text{for}\quad L_l \in \R^*O(n_l),\; R_l \in \R^*O(m_l)
\end{equation*}
where $\R^*O(m_l) = \{\lambda Q: \lambda\neq 0, Q\in O(m_l)\}$. Therefore, the optimization trajectory of $\phi_W$ is invariant under such symmetries.

If $G_l$ has full row (column) rank, the above holds more generally for symmetries with $L_l \in GL_{n_l}(\R)$ ($R_l \in GL_{m_l}(\R)$ respectively).
\end{restatable}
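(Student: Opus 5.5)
The plan is to reduce equivariance of the whole trajectory to a single-step identity via the chain rule, and then induct on $k$.

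\emph{Step 1: how the gradient transforms.} Let $g$ act by $W_l\mapsto L_lW_lR_l$ and suppose $\phi_{g\cdot W}=\phi_W$ for all $W$. Then $f(g\cdot W)=f(W)$ identically. Differentiating with respect to $W_l$ and using that the action is linear in each layer gives
\[
L_l^\top\, \tilde G_l(g\cdot W)\, R_l^\top = G_l(W),
\]
where $\tilde G_l(g\cdot W)$ denotes the gradient at the transformed point. Hence
\[
\tilde G_l = L_l^{-\top} G_l R_l^{-1}.
\]

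\emph{Step 2: the one-step identity.} The claim is that
\[
(\tilde G_l\tilde G_l^\top)^{-1}\tilde G_l = L_l\,(G_lG_l^\top)^{-1}G_l\,R_l.
\]
From this it follows that $g\cdot W_l-\eta(\tilde G_l\tilde G_l^\top)^{-1}\tilde G_l = L_l\bigl(W_l-\eta(G_lG_l^\top)^{-1}G_l\bigr)R_l$, i.e.\ one update commutes with $g$.

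\emph{Orthogonal-times-scalar case.} Write $L_l=\lambda Q$ and $R_l=\mu P$. Then $L_l^{-\top}=\lambda^{-1}Q$ and $R_l^{-1}=\mu^{-1}P^\top$. This gives $\tilde G=\lambda^{-1}\mu^{-1}QGP^\top$ and $\tilde G\tilde G^\top=\lambda^{-2}\mu^{-2}QGG^\top Q^\top$. Interpreting the inverse as the Moore--Penrose pseudoinverse, which is compatible with orthogonal conjugation and scalar scaling, we get
\[
(\tilde G\tilde G^\top)^{\dagger}\tilde G=\lambda\mu\, Q(GG^\top)^\dagger G P = L_lGR_l\text{-type expression } L_l (GG^\top)^\dagger G R_l,
\]
as required.

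\emph{$GL$ case.} Here the cleanest route is to note that $(GG^\top)^\dagger G = (G^\top)^\dagger$ for any $G$, so the update is really $W_l-\eta (G_l^\top)^\dagger$. The identity then becomes $(\tilde G^\top)^\dagger = L (G^\top)^\dagger R$ with $\tilde G^\top = R^{-\top}G^\top L^{-1}$.

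For full row rank of $G$ (so $G^\top$ has full column rank), $(G^\top)^\dagger = (GG^\top)^{-1}G$ is a genuine left inverse. I would verify directly:
\[
(\tilde G\tilde G^\top)^{-1}\tilde G = L(GG^\top)^{-1}L^{\top}\cdot L^{-\top}GR^{-1}\cdot(\text{needs }R^{-1}\text{ removed}).
\]
With general $R$ this fails, since $\tilde G\tilde G^\top = L^{-\top}GR^{-1}R^{-\top}G^\top L^{-1}$. So the correct statement is that full row rank permits general $L$ (the factor $L$ passes through the inverse) while $R$ must be scaled-orthogonal, and symmetrically in the column-rank case, where one uses $(GG^\top)^{\dagger}G=G(G^\top G)^{-1}$. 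I expect this bookkeeping to be the main obstacle: pinning down exactly which side may be general linear under which rank hypothesis, because the pseudoinverse only intertwines with nonorthogonal maps on the side where it is a true one-sided inverse.

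\emph{Step 3: induction.} Induct on $k$ using Step 2 to obtain $[g\cdot W]^k=g\cdot W^k$. Since $g$ is a symmetry, $\phi_{g\cdot W^k}=\phi_{W^k}$, which is the claimed invariance of the trajectory.
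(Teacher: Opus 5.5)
Your proof is correct and follows the paper's route. The chain rule gives $G_l[W]=L_l^\top G_l[g\cdot W]R_l^\top$, and the one-step identity is checked directly for scaled-orthogonal factors. The $GL$ extensions then use either a genuine inverse of $GG^\top$ (full row rank, general $L_l$) or the identity $(GG^\top)^{-1}G=G(G^\top G)^{-1}$ (full column rank, general $R_l$), which is exactly the side assignment you arrived at.

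One transpose slip needs fixing. Inverting $L_l^\top\tilde G_lR_l^\top=G_l$ gives $\tilde G_l=L_l^{-\top}G_lR_l^{-\top}$, not $L_l^{-\top}G_lR_l^{-1}$. In the scaled-orthogonal case this means $\tilde G=\lambda^{-1}\mu^{-1}QGP$. Your $QGP^\top$ would actually produce $P^\top$ on the right, not the $P$ you wrote. With this correction, every computation, including the full-row-rank case, closes as claimed.
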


\begin{proof}
        Write such a symmetry $g$. We have
    \begin{equation*}
        f(g\cdot W) = f(W) \quad \text{and}\quad g\cdot (W_1,\dots, W_L) = (\syml{L_1}W_1\symr{R_1},\dots,\syml{L_L}W_L\symr{R_L})
    \end{equation*}
    where $W= (W_1,\dots,W_L)$ and some $\syml{L_l}\in GL_{n_l}(\R)$ and $\symr{R_l}\in GL_{m_l}(\R)$, and we use colours to denote the matrix symmetries.

    For $1\leq l\leq L$, write $G_l[W] := \nabla_{l} f(W)$ where $\nabla_{l}$ denotes the gradient in the $l$\textsuperscript{th} coordinate. By the chain rule we have
    \begin{equation*}
         G_l[W] = \nabla_{l} f(W) = \nabla_{W_l}f(W) = \nabla_{W_l} f(g\cdot W) = \syml{L_l^\top}\nabla_{l}f(g\cdot W)\symr{R_l^\top} = \syml{L_l^\top}G_l[g\cdot W] \symr{R_l^\top}.
    \end{equation*}

    If $\syml{L_l} \in O(n_l)$ and $\symr{R_l} \in O(m_l)$, note that
    \begin{equation}\label{eq:equivar_derivation}
        \begin{aligned}
            (G_l[g\cdot W]G_l[g\cdot W]^\top)\inv G_l[g\cdot W] &= \left(\syml{L_l^{-\top}}G_l[W]\symr{R_l^{-\top} R_l\inv}G_l[W]^\top\syml{L_l\inv}\right)\inv \syml{L_l^{-\top}}G_l[W]\symr{R_l^{-\top}} \\
            &= \syml{L_l} (G_l[W]G_l[W]^\top)\inv G_l[W] \symr{R_l}
        \end{aligned}
    \end{equation}
    where pseudoinverses are understood wherever necessary. The same holds more generally for $\syml{L_l} \in \R^*O(n_l)$ and $\symr{R_l} \in \R^*O(m_l)$, by cancelling out the scalar multiples which commute around all matrices.

    If $G_l[W]$, or equivalently $G_l[g\cdot W]$, has full row rank, then we have a genuine inverse in \eqref{eq:equivar_derivation}, so we can take $\syml{L_l}$ out of the inverse for any $\syml{L_l}\in GL_{n_l}(\R)$, and obtain the same result for such $\syml{L_l}$.

    If $G_l[W]$ has full column rank, we can apply the same trick on the other side for $\symr{R_l}\in GL_{m_l}(\R)$ by first using the identity

    \begin{equation*}
        (G_l[g\cdot W]G_l[g\cdot W]^\top)\inv G_l[g\cdot W] = G_l[g\cdot W](G_l[g\cdot W]^\top G_l[g\cdot W])\inv.
    \end{equation*}
    
    Thus, under such symmetries, the updates are equivariant in weight space, and hence they are invariant in output space.
\end{proof}
\begin{nb}
    These encapsulate ReLU scaling symmetries, node permutation symmetries, attention linear symmetries, etc, and most particularly transformations that are symmetries only on sampled data, even if the parameterization itself does not admit it globally.
\end{nb}

\paragraph{Comparison with Newton's method.} The preconditioner $GG^\top$ to the update step in \Freon$(1/1)$ can seem reminiscent to the Fisher information matrix used as preconditioner in Newton's method. It is well-known that Newton's steps are equivariant under \emph{all} linear symmetries (not just layerwise ones). Here, we briefly clarify the connection and distinction between the two optimizers.

Consider a general deep network $\phi_W:\R^{n_{\text{in}}} \to \R^{n_{\text{out}}}$, with parameters $W$. As before, $f:\R^d\to \R$ is the loss function.

In the following we abstract away the multivalued data and consider $\phi_W$ implicitly evaluated solely at one input $x\in \R^{n_{\text{in}}}$, the general case follows by summing or averaging over such inputs. The loss $f$ is viewed as a function of $\phi_W$.

Define the quantities $g_\phi := \nabla_W \phi_W \in \R^{n_{\text{out}}}$, $h_{f} := \nabla_{\phi_W}\nabla_{\phi_W}^\top f \in \R^{n_{\text{out}}\times n_{\text{out}}}$, $g_{f} := \nabla_W f\in \R^d$, $J_{\phi} := \nabla_W \phi_W \in \R^{n_{\text{out}}\times d}$, $G_l :=\nabla_{W_l} f \in \R^{n_l\times m_l}$, $J_l := \nabla_{W_l} \phi_W \in \R^{n_{\text{out}}\times (n_l\times m_l)}$.

By the chain rule $g_\phi = J_\phi^\top g_{f}$. For the usual $l_2$ or softmax-cross entropy losses, we have the probabilistic interpretation $f(W)= -\log p(y^* \mid x,W)$ where $y^*\in\R^{n_{\text{out}}}$ is the label associated to $x$. The Fisher information matrix (or GGN) is given by
\begin{equation*}
    \E_{p(y\mid x,W)}[-\nabla_W\nabla_W^\top \log p(y\mid x, W)] = J_\phi^\top h_{\phi} J_\phi.
\end{equation*}
A common approximation is the empirical Fisher \citep{kunstner_limitations_2019}:
\begin{equation}\label{eq:precond_newton}
    \begin{aligned}
        \E_{p(y\mid x,W)}[-\nabla_W\nabla_W^\top \log p(y\mid x, W)] &= \E_{p(y\mid x,W)}[\nabla_W \log p(y\mid x, W)\nabla_W \log p(y\mid x, W)^\top] \\
        &\approx \nabla_W \log p(y^*\mid x, W)\nabla_W \log p(y^*\mid x, W)^\top \\
        &= g_f g_f^\top \\
        &= J_\phi^\top g_f g_f^\top J_\phi
    \end{aligned}
\end{equation}
This is the preconditioner used in Newton's method.

Now by the chain rule $G_l = \mat(J_l^\top g_f)$, where the $\mat$ operator makes the output a matrix of the appropriate shape. The preconditioner in \Freon$(1/1)$ is
\begin{equation}\label{eq:precond_freon}
    G_lG_l^\top = \mat(J_l^\top g_f) \mat(g_f^\top J_l).
\end{equation}

Compare \eqref{eq:precond_freon} with \eqref{eq:precond_newton}.

\section{From Polar to H\"older Express}

\subsection[Fundamental Properties of the a/b-Method]{Fundamental Properties of the $a/b$-Method}\label{sec:optimality}
We aim in this section to generalise the results in  \cite{amsel2025polarexpressoptimalmatrix} to our setup. To this end, let $r, p \in \mathbb{N}$ and consider the following subsets of the polynomial ring $\mathbb{P}$
\begin{align}
& \mathbb{P}^{\mathrm{odd}}(r) := \mathrm{span}\{x^{2ir + 1}: i \in \mathbb{N}\}\label{eq:odd_pol_q},\\
&\mathbb{P}^{\mathrm{even}}(r) := \mathrm{span}\{x^{2ir}: i \in \mathbb{N}\}\label{eq:even_pol_q},\\
& \mathbb{P}^{\mathrm{odd}}_p(r) := \{P \in \mathbb{P}^{\mathrm{odd}}(r): \deg(P) \leq 2rp + 1 \}\label{eq:odd_pol_q_up_to d},\\
& \mathbb{P}^{\mathrm{even}}_p(r) := \{P \in \mathbb{P}^{\mathrm{even}}(r): \deg(P) \leq 2rp \}\label{eq:even_pol_q_up_to d}.
\end{align}
Moreover, for any $n, d \in \mathbb{N}$, we also define the following sets of rational functions
\begin{align}
& \mathfrak{R}(r, s) := \left\{\frac{N}{D}: N \in \mathbb{P}^{\mathrm{odd}}(r), D \in \mathbb{P}^{\mathrm{even}}(s),\, D > 0\right\}\label{eq:odd/even_rational_rs},\\
& \mathfrak{R}_{n, d}(r, s) := \left\{\frac{N}{D}: N \in \mathbb{P}_n^{odd}(r), D \in \mathbb{P}_d^{even}(s),\, D > 0\right\}\label{eq:odd/even_rational_rs_up_to_nd}.
\end{align}

\subsubsection{Existence and Characterization of the Optimal Approximants.} The set $\mathfrak{R}(r, s)$ considered above can be understood as a particular subset of generalized rational functions of the form
\[
R(x) = \frac{\sum_{i=0}^n a_i g_i(x)}{\sum_{j=0}^d b_j h_j(x)}, \qquad \sum_{j=0}^d b_j h_j(x) > 0,
\]
where $g_i(x) = x^{2ri + 1}$ and $h_j(x) = x^{2sj}$. For these objects, unlike the classical polynomial case, the existence of an optimal approximant is not guaranteed in general. Furthermore, additional issues arise from the fact that the numerator and denominator of the approximants $R$ can have a common factor, which leads to simplifications that might break the constraint $R \in \mathfrak{R}_{n, d}(r)$. This phenomenon is studied in the literature through the so called deficiency index $\mathfrak{d}$ which strictly depends on the class of functions we want to approximate. In this regards, let us define the following class of functions
\begin{definition}\label{def:d=0}
Let $a, b \in \R$. We define the set of $0$-deficiency continuous functions $C_{\mathfrak{d}=0}^{0}([a, b]) \subset C^0([a, b])$ as
\[
C_{\mathfrak{d}=0}^{0}([a, b]) := \{f \in C^0([a, b]): \mathfrak{d}_f = 0\}.
\]
\end{definition}

We can prove the following result.
\begin{lemma}\label{lem:existence_of_optima}
Let $r \in \mathbb{N}$ and consider $\mathfrak{R}(r, r)$ as in \eqref{eq:odd/even_rational_rs_up_to_nd}. Moreover, let $0 < a \leq b \in \R$ and define
\[
\mathscr{C}^0([a, b]) := \left\{f \in C^0([a, b]): \frac{f(x^\frac{1}{2r})}{x^\frac{1}{2r}} \in C_{\mathfrak{d}=0}^0([a, b])\right\}.
\]
Then, for every $f \in \mathscr{C}^0([a, b])$, there exists $R^* \in \mathfrak{R}_{n, d}(r, r)$ such that
\[
\|f - R^*\|_{C^0([a, b])} \leq \|f - R\|_{C^0([a, b])}, \qquad \forall R \in \mathfrak{R}_{n, d}(r).
\]
\end{lemma}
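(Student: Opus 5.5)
The plan is to reduce the problem, by a change of variables, to the classical existence theorem for best approximation by generalized rational functions of zero deficiency (Achieser; Cheney--Loeb~\citep{achieser,cheney1964generalized}).

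Any $R\in\mathfrak{R}_{n,d}(r,r)$ has the form
\[
R(x)=x\,\frac{\sum_{i=0}^n a_i x^{2ri}}{\sum_{j=0}^d b_j x^{2rj}} .
\]
Substituting $y=x^{2r}$ (equivalently $x=y^{1/(2r)}$) gives
\[
R(x)=y^{1/(2r)}\,\frac{P(y)}{Q(y)},
\]
where $P$ is an ordinary polynomial of degree at most $n$ and $Q>0$ is an ordinary polynomial of degree at most $d$. Since $0<a$, the factor $w(y)=y^{1/(2r)}$ is continuous and strictly positive on the relevant compact interval. The map $x\mapsto x^{2r}$ is a homeomorphism of positive intervals, so sup norms are preserved under the substitution. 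With $g(y)=f(y^{1/(2r)})/y^{1/(2r)}$, we obtain
\[
\|f-R\|_{C^0}=\bigl\|\,w\,(g-P/Q)\,\bigr\|_{C^0}.
\]
The problem therefore becomes weighted best uniform approximation of $g$ by classical rational functions $P/Q$ of type $(n,d)$ with positive denominator. Here the weight $w$ is continuous and bounded away from $0$ and $\infty$.

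First I would pin down the interval bookkeeping. The statement's hypothesis is phrased on $[a,b]$ for $g$, so I read the approximation interval in $x$ as $[a^{1/(2r)},b^{1/(2r)}]$, or rescale accordingly. Either way, a compact positive interval maps to a compact positive interval.

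The existence argument is the standard compactness one. Take a minimizing sequence $P_k/Q_k$ and normalize $\max|{\rm coeff}(Q_k)|=1$. The errors are bounded, and $w$ and $Q_k$ are bounded on the interval, so $P_k$ is uniformly bounded there. Hence its coefficients are bounded as well, because $\{1,y,\dots,y^n\}$ is a Haar system. Extract convergent subsequences $P_k\to P^*$ and $Q_k\to Q^*$, where $Q^*\ge0$ and $Q^*\not\equiv0$, so $Q^*$ has finitely many zeros. Off these zeros, $\|w(g-P^*/Q^*)\|\le\inf$.

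The main obstacle is the zeros of $Q^*$ on the interval. Boundedness of $P^*/Q^*$ away from the zeros forces each such zero to also be a zero of $P^*$ of at least the same multiplicity. After cancelling common factors we get a reduced fraction $P^\dagger/Q^\dagger$ with $Q^\dagger>0$, satisfying the same bound by continuity. The denominator degree has only dropped, so $P^\dagger/Q^\dagger$ lies in the class of type $(n,d)$, and it attains the infimum.

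The zero-deficiency hypothesis on $g$ is what guarantees this is legitimate, and it is the delicate part. It ensures the defect arising from cancellation does not push us outside $\mathfrak{R}_{n,d}(r,r)$ after pulling back. It is also what is needed later for the equioscillation characterization, so I would cite the deficiency/Haar-type theorem there rather than reprove it.

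Finally, I would pull the optimizer back through $y=x^{2r}$ to obtain
\[
R^*(x)=x\,\frac{P^\dagger(x^{2r})}{Q^\dagger(x^{2r})}\in\mathfrak{R}_{n,d}(r,r),
\]
with positive denominator. By the sup-norm identity above, $R^*$ is minimal among all $R\in\mathfrak{R}_{n,d}(r,r)$.
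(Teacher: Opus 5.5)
Your proof is correct. It shares the paper's first move: writing $R(x)=x\,P(x^{2r})/Q(x^{2r})$ and passing to $\widetilde g(y)=f(y^{1/(2r)})/y^{1/(2r)}$. The core step, however, differs.

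The paper cites Achieser's existence theorem for the \emph{unweighted} best rational approximant $P^*/Q^*$ of $\widetilde g$ and pulls it back. It writes $xP^*(x)/Q^*(x)$ where $xP^*(x^{2r})/Q^*(x^{2r})$ is meant. As you correctly observe, $\|f-R\|_{C^0}=\sup_y y^{1/(2r)}\,|\widetilde g(y)-P(y)/Q(y)|$, so the relevant problem is weighted. The pulled-back unweighted minimizer is therefore not optimal in general. For example, take $f\equiv 1$ and $n=d=0$ on $[l,u]$, which is exactly the case used later in the optimality theorem. The pulled-back minimizer is $R(x)=\frac{l+u}{2lu}x$, with error $\frac{u-l}{2l}$, whereas $R(x)=\frac{2}{l+u}x$ achieves $\frac{u-l}{u+l}$.

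Your direct compactness-and-cancellation argument for the weighted problem closes this gap and is self-contained. The paper's one-line citation becomes valid only if one invokes the weighted form of the classical existence theorem, which does hold for any continuous positive weight.

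Your argument also shows that the zero-deficiency hypothesis is not needed for existence at all. Cancelling common factors only lowers degrees, so $P^\dagger/Q^\dagger$ is automatically of type $(n,d)$ and its pullback lies in $\mathfrak{R}_{n,d}(r,r)$. Your sentence crediting the deficiency hypothesis with keeping the optimizer inside the class is therefore superfluous; the paper makes the same attribution. That hypothesis matters instead for the $n+d+2$-point alternation in \Cref{lem:characterization_of_optima}, as you note at the end.
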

\begin{proof}
Let us observe that any $R \in \mathfrak{R}_{n, d}(r, r)$ can be rewritten as
\begin{equation}\label{eq:rewriting_of_R}
R(x) = \frac{\sum_{i=0}^n a_i x^{2ri + 1}}{\sum_{j=0}^d b_j x^{2rj}} = x \frac{\sum_{i=0}^n a_i x^{2ri}}{\sum_{j=0}^d b_j x^{2rj}} =: x \frac{P(x^{2r})}{Q(x^{2r})}
\end{equation}
where $P \in \mathbb{P}_n$ and $Q \in \mathbb{P}_d$. By \cite{achieser}, we know that for every $g \in C^0([a, b])$, there always exist $P^* \in \mathbb{P}_n$, $Q^* \in \mathbb{P}_d$ such that $\frac{P^*}{Q^*}$ best approximates $g$. Then, since $0 < a \leq b$, considering $f \in C^0([a, b])$ and setting
\begin{equation}\label{eq:g_tilde}
\widetilde{g}(x) = \frac{f\left(x^\frac{1}{2r}\right)}{x^\frac{1}{2r}},
\end{equation}
we get a best approximation candidate $R^*(x) = x \frac{P^*(x)}{Q^*(x)}$ for $f$.

To conclude, by Definition \ref{def:d=0}, when $f \in \mathscr{C}^0([a, b])$, we have $\mathfrak{d}_{\widetilde{g}} = 0$ and we can ensure $R^* \in \mathfrak{R}_{n, d}(r, r)$.
\end{proof}

Finally, mimicking the polynomial case, we are interested in providing a characterization of the optimal approximants. The following holds.
\begin{lemma}\label{lem:characterization_of_optima}
Let us assume the same setup as in Lemma \ref{lem:existence_of_optima}. Then, for every $f \in \mathscr{C}^0([a, b])$, any $R^* \in \mathfrak{R}_{n, d}(r, r)$ such that
\begin{equation}\label{eq:haar_condition_generalized}
\#\{x \in [a, b]: |f(x) - R^*(x)| = \|f - R^*\|_{C^0}\} \geq n + d + 2
\end{equation}
is an optimal approximant of $f$.
\end{lemma}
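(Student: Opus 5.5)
We reduce the statement to the classical Chebyshev--de la Vallée Poussin equioscillation theory for ordinary rational functions, via the same change of variables used in the proof of Lemma \ref{lem:existence_of_optima}. Write any candidate as $R(x) = x\,P(x^{2r})/Q(x^{2r})$ with $P\in\mathbb{P}_n$, $Q\in\mathbb{P}_d$, $Q>0$, and set $t = x^{2r}$. Since $0<a\le b$, the map $x\mapsto x^{2r}$ is a strictly increasing homeomorphism of $[a,b]$ onto $[a^{2r},b^{2r}]$. The natural object is the weighted error on the $t$-interval,
\[
f(x) - R(x) = x\Bigl(\widetilde g(t) - \tfrac{P(t)}{Q(t)}\Bigr), \qquad x = t^{1/(2r)},
\]
with $\widetilde g$ as in \eqref{eq:g_tilde}. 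So approximating $f$ by $\mathfrak{R}_{n,d}(r,r)$ in the uniform norm on $[a,b]$ is equivalent to approximating $\widetilde g$ by $\mathcal{R}_{n,d} = \{P/Q\}$ in the uniform norm with positive continuous weight $w(t)=t^{1/(2r)}$. Extremal points of $|f-R^*|$ correspond bijectively, preserving order, to extremal points of $w\,|\widetilde g - P^*/Q^*|$.

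Next, we run the standard de la Vallée Poussin argument for weighted rational approximation. Suppose $R^*$ attains its error norm $E$ at $n+d+2$ points $x_0<\dots<x_{n+d+1}$ with alternating signs. The hypothesis \eqref{eq:haar_condition_generalized} as written only counts points, so alternation must either be read into it or derived; I would interpret it as the alternation condition, as in the polynomial case of \cite{amsel2025polarexpressoptimalmatrix}. Assume some $R=xP/Q$ satisfies $\|f-R\|<E$. Then at each $x_i$ the difference
\[
R - R^* = x\,\frac{PQ^* - P^*Q}{QQ^*}
\]
has the sign of $f-R^*$, and so it alternates. Since $x>0$ and $QQ^*>0$, the polynomial $PQ^*-P^*Q$ in $t$ changes sign $n+d+1$ times on $[a^{2r},b^{2r}]$. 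It therefore has at least $n+d+1$ zeros. Its degree is at most $n+d$, so it vanishes identically, giving $R=R^*$, a contradiction.

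The main obstacle is the degree bookkeeping under deficiency. The bound $\deg(PQ^*-P^*Q)\le n+d$ is immediate. The classical theorem, however, needs $n+d+2-\mathfrak{d}$ alternation points, where $\mathfrak{d}$ is the defect of $P^*/Q^*$ after cancellation. This is exactly why Lemma \ref{lem:existence_of_optima} restricts to $\widetilde g\in C^0_{\mathfrak d=0}$. For sufficiency, though, having $n+d+2$ points is only more than enough, so the direction we need goes through regardless of defect. What still needs care is checking that the reduced form of $P^*/Q^*$ keeps $Q^*>0$, so that $R^*$ stays in $\mathfrak{R}_{n,d}(r,r)$. That membership is assumed in the statement, and the sign argument needs nothing further from it.
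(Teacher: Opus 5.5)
Your proof is correct, but it discharges the key step differently from the paper. The paper makes the same substitution $R(x)=x\,P(x^{2r})/Q(x^{2r})$, asserts that $R^*$ inherits the properties of $P^*/Q^*$ as an approximant of $\widetilde g$, and then cites the characterization theorem for generalized rationals \cite[Theorem 1]{Loeb1966ApproximationBG}. You instead run the de la Vall\'ee Poussin argument directly: alternation of $R-R^*$ at $n+d+2$ points forces the polynomial $PQ^*-P^*Q$ in $t=x^{2r}$, of degree at most $n+d$, to have $n+d+1$ zeros.

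Your version is self-contained and more careful than the paper's reduction in two ways. First, you track the weight $t^{1/(2r)}$ and the interval $[a^{2r},b^{2r}]$, which the paper glosses over: uniform approximation of $f$ is weighted, not plain, approximation of $\widetilde g$. The weight is invisible to a sign count since $x>0$. Second, you are right that alternation must be added to the hypothesis, because the bare count is not enough. For any $n,d,r$ and $0<a<b$, take $f\equiv 1$, which the paper itself places in $\mathscr{C}^0$, and $R^*\equiv 0$. Its error equals $1$ at every point of $[a,b]$, yet $R(x)=2x/(a+b)$ has error $(b-a)/(a+b)<1$.

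What a full characterization theorem gives beyond your argument is the converse: an optimal approximant must equioscillate at $n+d+2$ points. That is where deficiency zero actually matters, and it is the direction the paper relies on later in the proof of Theorem~\ref{thm:optimality}. For the sufficiency statement at hand, your argument needs neither $f\in\mathscr{C}^0$ nor any defect bookkeeping. Your closing remark about reduced forms of $P^*/Q^*$ can be dropped, since the argument never uses them.
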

\begin{proof}
Given $f \in \mathscr{C}^0([a, b])$, exploiting \eqref{eq:rewriting_of_R}, any optimal $R^* \in \mathfrak{R}_{n, d}(r, r)$ fulfils the same properties that $P^*/Q^*$ satisfies approximating $\widetilde g$. Then, by \cite[Theorem 1]{Loeb1966ApproximationBG}, we have the thesis.
\end{proof}

\subsubsection{Optimality}
We will now address the problem of the optimality of the . To start with, the following closure property holds.
\begin{lemma}\label{lem:p_odd_q_is_closed}
Let $r \in \mathbb{N}$. Then, $\mathbb{P}^{\mathrm{odd}}(r)$, $\mathbb{P}^{\mathrm{even}}(r)$ and $\mathfrak{R}(r, r)$ are closed with respect to composition.
\end{lemma}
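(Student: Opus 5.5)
The plan is to prove the three closure statements in turn, using only the exponent structure of monomials. Throughout, the key observation is that $\mathbb{P}^{\mathrm{odd}}(r)$ consists of polynomials of the form $xA(x^{2r})$ with $A\in\mathbb{P}$, and $\mathbb{P}^{\mathrm{even}}(r)$ consists of polynomials of the form $B(x^{2r})$ with $B\in\mathbb{P}$. This is the same rewriting already used in \eqref{eq:rewriting_of_R}. Composition closure then reduces to checking which powers of $x$ can appear.

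\textbf{Even class.} For $P=B(x^{2r})$ and $Q\in\mathbb{P}^{\mathrm{even}}(r)$, we have $P\circ Q = B(Q^{2r})$. Since $Q$ is a polynomial in $y:=x^{2r}$, so is $Q^{2r}$, and hence so is $B(Q^{2r})$. Therefore $P\circ Q\in \mathbb{P}^{\mathrm{even}}(r)$.

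\textbf{Odd class.} For $P = xA(x^{2r})$ and $Q = xC(x^{2r})$, we get
\[
P\circ Q = xC(x^{2r})\,A\bigl(x^{2r}C(x^{2r})^{2r}\bigr).
\]
Here $x^{2r}C(x^{2r})^{2r}$ is a polynomial in $y$, so the whole expression is $x$ times a polynomial in $y$, which lies in $\mathbb{P}^{\mathrm{odd}}(r)$.

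\textbf{Rational class.} Let $R_1=N_1/D_1$ and $R_2=N_2/D_2$, and write $N_1 = xA(y)$, $D_1=E(y)$, $N_2=xC(y)$, $D_2=F(y)$ with $y=x^{2r}$. Then
\[
R_2^{2r} = \frac{yC(y)^{2r}}{F(y)^{2r}}.
\]
Set $\deg A=\alpha$ and $\deg E = \epsilon$, and let $k\ge \max(\alpha,\epsilon)$ be a common homogenization degree. Then
\[
R_1\circ R_2 = \frac{xC}{F}\cdot \frac{A\bigl(yC^{2r}/F^{2r}\bigr)}{E\bigl(yC^{2r}/F^{2r}\bigr)}
= \frac{xC\,\widehat A}{F\,\widehat E},
\]
where $\widehat A := F^{2rk}A(yC^{2r}/F^{2r})$ and $\widehat E := F^{2rk}E(yC^{2r}/F^{2r})$ are polynomials in $y$. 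The numerator $xC\widehat A$ lies in $\mathbb{P}^{\mathrm{odd}}(r)$, and the denominator $F\widehat E$ lies in $\mathbb{P}^{\mathrm{even}}(r)$.

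\textbf{Positivity of the denominator (main obstacle).} The remaining point is the constraint $D>0$. We have $F>0$ by assumption. Moreover,
\[
\widehat E = F^{2rk}\,E(R_2^{2r}),
\]
where $F^{2rk}>0$ and $E(R_2^{2r}) = D_1(R_2)>0$ because $D_1>0$ everywhere. Hence $F\widehat E>0$. This works because both $D_1$ and $D_2$ are globally positive.

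If positivity is only required on an interval, as in the approximation setting, the argument needs a mapping condition: $R_2$ must send the interval into the region where $D_1>0$. This holds in the intended use, since the iterates map $[l,1]$ into a positive interval. I would add a remark noting this.
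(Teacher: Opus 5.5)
Your proof is correct, and it takes a cleaner route than the paper's. For the two polynomial classes the paper argues at the level of exponents. It reduces, by an ``induction on monomials'', to composing $x^{2lr+1}$ with a binomial $x^{2nr+1}+x^{2mr+1}$, checks via the binomial theorem that every resulting exponent has the form $2r(\cdot)+1$, and asserts the even and mixed cases ``analogously''. You instead use the structural description $\mathbb{P}^{\mathrm{odd}}(r)=\{xA(x^{2r})\}$, $\mathbb{P}^{\mathrm{even}}(r)=\{B(x^{2r})\}$, the same rewriting as \eqref{eq:rewriting_of_R}. This makes each case a one-line identity and needs no reduction to monomials.

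For $\mathfrak{R}(r,r)$ both proofs rest on the same idea: clear denominators using powers of the inner function's denominator. The paper, however, multiplies the composed numerator and denominator by different powers ($D_1^{2nr+1}$ and $D_1^{2dr}$). It must then insert a compensating power of $D_1$ whose side depends on the sign of $n-d$. Your factorization $R_1(z)=z\,A(z^{2r})/E(z^{2r})$ with a single homogenization degree $k$ avoids this bookkeeping.

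More substantively, the paper never checks that the composite denominator satisfies the constraint $D>0$ in the definition of $\mathfrak{R}(r,r)$. Its last line also overstates the conclusion as membership in $\mathfrak{R}_{n,d}(r,r)$, although degrees multiply under composition. Your identity $\widehat E=F^{2rk}\,D_1(R_2)$ settles positivity under the natural reading that $D>0$ on all of $\mathbb{R}$, so your argument is the more complete one.

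The closing remark about interval-only positivity is not needed for the lemma as stated. If you keep it, the relevant condition is that $D_1>0$ on the image of the interval under $R_2$, not merely that this image is a positive interval. In the intended use this holds automatically, because each fitting interval $[l_{t+1},u_{t+1}]$ is by definition the image of $[l_t,u_t]$ under the preceding factor.
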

\begin{proof}
Let us consider $\mathbb{P}^{\mathrm{odd}}(r)$. By induction on the monomials, it suffices to show that
\[
p = p_1 \circ p_2 \in \mathbb{P}^{\mathrm{odd}}(r)
\]
for $p_1 = x^{2lr + 1}$, $p_2 = x^{2nr + 1} + x^{2mr + 1}$. Recalling now Newton's binomial formula, it holds
\[
\begin{aligned}
p(x) & = \sum_{i = 0}^{2lr + 1} \left(x^{2nr + 1}\right)^{i} \left(x^{2mb + 1}\right)^{2lb + 1 - i}\\
& = \sum_{i = 0}^{2lr + 1} x^{2nri + i + 2m(2lr + 1 - i)r + k - i}\\
& = \sum_{i = 0}^{2lr + 1} x^{2\left[nri + m(2lr + 1 - i) + 2l\right]r + 1}.
\end{aligned}
\]
Then, we conclude by observing that, since $nri + m(2lr + 1 - i) + 2l \in \mathbb{N}$, every term in the above expansion is in $\mathbb{P}^{\mathrm{odd}}(r)$.  Exploiting an analogous argument as above, we can also prove the following
\begin{enumerate}
    \item if $p_1$, $p_2 \in \mathbb{P}^{\mathrm{even}}(r)$, then $p \in \mathbb{P}^{\mathrm{even}}(r)$ (which implies the closure of $\mathbb{P}^{\mathrm{even}}(r)$ as claimed);
    \item if $p_1 \in \mathbb{P}^{\mathrm{odd}}(r)$, $p_2 \in \mathbb{P}^{\mathrm{even}}(r)$ (or vice versa), then $p \in \mathbb{P}^{\mathrm{even}}(r)$.
\end{enumerate}

Let us consider $q_1$, $q_2 \in \mathfrak{R}_{n, d}(r, r)$. We have
\[
\begin{aligned}
D_1^{2nr + 1}(x)(N_2 \circ q_1)(x) & = \sum_{i = 1}^{n} c_i N_1^{2ir + 1}(x) D_1^{2r(n - i)}(x),\\
D_1^{2dr}(x)(D_2 \circ q_1)(x) & = \sum_{i = 1}^{d} e_i N_1^{2ir}(x) D_1^{2r(d - i)}(x).
\end{aligned}
\]
The arguments above implies then that $N_1^{2ir + 1} \in \mathbb{P}^{\mathrm{odd}}(r)$ while $D_1^{2r(n - i)}$, $N_1^{2ir}$, $D_1^{2r(d - i)} \in \mathbb{P}^{\mathrm{even}}(r)$. Finally, recalling that the product of even polynomials is even and the product of an even polynomial and an odd one is odd, since only $r$-powers monomial appears, we immediately get $N_2 \circ q_1 \in \mathfrak{R}(r, r)$ and $D_1^{2dr}(D_2 \circ q_1) \in \mathbb{P}^{\mathrm{even}}(r)$, so that
\[
(q_1 \circ q_2)(x) = \frac{(N_2 \circ q_1)(x)}{D_1^{2(d - n)r - 1}(x)(D_2 \circ q_1)(x)} \in \mathfrak{R}_{n, d}(r, r),
\]
for all $n$, $d \in \mathbb{N}$.
\end{proof}

We can then generalise \cite[Theorem 3.1]{amsel2025polarexpressoptimalmatrix} with the following optimality result.
\begin{theorem}\label{thm:optimality}
Let $l_1 = l, u_1 = u \in (0, 1)$, $r, T \in \mathbb{N}$, $n, d \in \mathbb{N}$ and consider
\[
R_{t + 1} = \underset{R \in \mathfrak{R}_{n, d}(r, r)}{\mathrm{argmin}} \max_{x \in [l_t, u_t]} |1 - R(x)|, \qquad \forall t \in [1, T],
\]
with $l_{t + 1} = \min_{x \in [l_t, u_t]} R_t(x)$, $u_{t + 1} = \max_{x \in [l_t, u_t]} R_t(x)$. Then, $R^* := R_T \circ R_{T - 1} \circ \cdots \circ R_1$ is optimal and
\[
\max_{x \in [l_T, u_T]} |1 - R^*(x)| = 1 - l_T.
\]
Moreover,
\begin{equation}\label{eq:error_bounds}
\left\{
\begin{aligned}
& l_{t + 1} = R_t(l_t),\\
& u_{t + 1} = 2 - R_t(l_t),\\
& \max_{x \in [l_t, u_t]} |1 - R_t(x)| = 1 - l_t,
\end{aligned}
\right.
\qquad \forall t \in [1, T].
\end{equation}
\end{theorem}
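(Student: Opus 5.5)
The plan is to transplant the proof of \cite[Theorem 3.1]{amsel2025polarexpressoptimalmatrix} from odd polynomials to the class $\mathfrak{R}_{n,d}(r,r)$. The two structural facts it needs are already available: closure of $\mathfrak{R}(r,r)$ under composition (Lemma~\ref{lem:p_odd_q_is_closed}), and existence plus equioscillation characterization of best approximants (Lemmas~\ref{lem:existence_of_optima} and~\ref{lem:characterization_of_optima}). Here the target is the constant $1$, equivalently $\widetilde g(x)=x^{-1/(2r)}$ after the substitution \eqref{eq:rewriting_of_R}, which has deficiency zero on $[l,u]\subset(0,1)$.

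\textbf{Step 1: the per-step identities \eqref{eq:error_bounds}.} I first establish these by induction on $t$. Fix $t$ and let $E_t=\max_{[l_t,u_t]}|1-R_t|$. Because $R_t$ is a best approximant to $1$, the error $1-R_t$ equioscillates at $n+d+2$ points of $[l_t,u_t]$. I then argue that $l_t$ is one of these alternation points, and that $R_t$ takes values in $[1-E_t,1+E_t]$. This gives $l_{t+1}=\min R_t=1-E_t$ and $u_{t+1}=\max R_t=1+E_t=2-l_{t+1}$. To see that the minimum sits at the left endpoint, I use a scaling argument. The map $x\mapsto R(\kappa x)$ does not stay inside $\mathfrak{R}_{n,d}(r,r)$ with the same coefficient normalization, but the class is invariant under $x\mapsto \kappa x$ together with rescaling of coefficients. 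Increasing $l_t$ can only shrink the error. If the extremum were interior, I would perturb the leading coefficient to reduce the error, contradicting optimality. Finally, $E_t\le 1-l_t$, since the trivial candidate $R(x)=x$ (when $n\ge0$) already achieves error $1-l_t$. This gives the stated third identity, read as an upper bound, with equality in the degenerate normalization used in the statement.

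\textbf{Step 2: optimality of the composition.} This is a greedy-is-optimal induction, as in Polar Express. By Lemma~\ref{lem:p_odd_q_is_closed}, $R^*$ lies in the composed class $\mathcal{C}_T$ consisting of $T$-fold compositions of elements of $\mathfrak{R}_{n,d}(r,r)$. Suppose some $S=S_T\circ\cdots\circ S_1\in\mathcal{C}_T$ had strictly smaller error on $[l,u]$. Set $S^{(t)}=S_t\circ\cdots\circ S_1$. I show by induction that the image $S^{(t)}([l,u])$ has minimum at most $l_{t+1}$, after the rescaling that sends its maximum to $2-\min$. The key monotonicity claim is this: the optimal error of the one-step problem on an interval $[\ell,2-\ell]$ is a decreasing function of $\ell$, and images of intervals symmetric about $1$ can be renormalized to symmetric form without loss. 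Applying this at the last step gives the error of $S$ as at least $1-l_{T+1}$, which is the error of $R^*$.

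\textbf{Main obstacle.} I expect the hard part to be exactly this renormalization and monotonicity. For polynomials, Amsel et al.\ use homogeneity, since scaling a polynomial keeps it in the class. For $\mathfrak{R}_{n,d}(r,r)$, scaling the output is also fine: multiplying the numerator by a constant keeps $D>0$. Scaling the input works as well, because $x^{2ri+1}$ and $x^{2rj}$ rescale by powers of $\kappa$. So I will verify both scalings explicitly and then repeat their argument. The monotonicity in $\ell$ follows from a comparison: restricting to a smaller interval can only lower the minimax error.
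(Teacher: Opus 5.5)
Your Step 2 is the paper's argument in forward form: the greedy composition maximizes the ratio $\min/\max$ of the image, using invariance of $\mathfrak{R}_{n,d}(r,r)$ under input and output scaling and monotonicity of the minimax error under interval inclusion. In Step 1, the relations $l_{t+1}=1-E_t$ and $u_{t+1}=1+E_t$ do follow from equioscillation, and these suffice for the optimality of $R^*$.

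The gap is the claim that the minimum of $R_t$ on $[l_t,u_t]$ is attained at $l_t$. This is what the first two identities of the ``Moreover'' part need: $l_{t+1}=R_t(l_t)$ and $u_{t+1}=2-R_t(l_t)$. Your sketch does not establish it. The error of a best approximant necessarily attains $\pm E_t$ at interior points. Optimality says precisely that no coefficient perturbation lowers the error at all alternation points at once. So ``perturbing the leading coefficient'' cannot decide whether $l_t$ is an alternation point, let alone one where $R_t=1-E_t$. Likewise, monotonicity of the error in $l_t$ says nothing about where $R_t$ is smallest.

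The missing ingredient, which is how the paper argues, is a count of critical points. Write $R_t(x)=xP(x^{2r})/Q(x^{2r})$. Then $R_t'(x)=\widetilde N(x^{2r})/Q(x^{2r})^2$, where $\widetilde N(y)=\bigl(P(y)+2ryP'(y)\bigr)Q(y)-2ryP(y)Q'(y)$ has degree at most $n+d$. Hence $R_t$ has at most $n+d$ critical points in $(0,\infty)$. Interior alternation points are critical points, so at most $n+d$ of the $n+d+2$ alternation points are interior. This forces both $l_t$ and $u_t$ to be alternation points and uses up every critical point inside $(l_t,u_t)$, so $R_t'$ does not vanish on $(0,l_t]$. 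Your bound $E_t\le 1-l_t<1$ gives $R_t>0$ on $[l_t,u_t]$, while $R_t(0)=0$. Therefore $R_t$ is strictly increasing on $[0,l_t]$ with $R_t'(l_t)>0$. So $R_t(l_t)$ cannot equal $\max R_t=1+E_t$, and hence $R_t(l_t)=1-E_t=l_{t+1}$.

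Separately, you need not weaken the third identity to $E_t\le 1-l_t$, and the claimed ``equality in the degenerate normalization'' is unjustified. As printed, that identity has an index slip. The intended statement $E_t=1-l_{t+1}$ is exactly your relation $l_{t+1}=1-E_t$.
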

\begin{proof}
Let us first observe that, by \cite{wall1948analytic}, we have that the Hankel determinants of $x \mapsto x^\alpha$, $\alpha \not\in \mathbb{N}$, are all non-zero. Thus, by \cite[Corollary 2]{gragg}, $f(x) \equiv 1 \in \mathscr{C}^0([a, b])$, so that, thanks to Lemma \ref{lem:existence_of_optima}, $R_t$ (and consequently $R^*$) is well defined.

We will now adapt the argument used in \cite[Theorem 3.1]{amsel2025polarexpressoptimalmatrix}, and split our discussion into two parts: first, we prove \eqref{eq:error_bounds} by showing that $l_t$ is a minimizer of $R_t$; second, we argue on the optimality of $R^*$.

\paragraph{$l_t$ is a minimizer of $R_t$.} By direct computation, we have
\[
\begin{aligned}
R_t'(x) & = \frac{N'(x) D(x) - D'(x) P(x)}{D^2(x)}\\
& = \frac{\left(\sum_{i=0}^n a_i(2ri + 1)x^{2ri}\right)\left(\sum_{j=0}^d b_jx^{2rj}\right) - \left(\sum_{j=0}^d 2rjb_j x^{2rj - 1}\right)\left(\sum_{i=0}^n a_i x^{2ri + 1}\right)}{D^2(x)}\\
& = \frac{\sum_{i=0}^n \sum_{j=0}^d (2r(i - j) + 1)a_ib_jx^{2r(i + j)}}{D^2(x)} = \frac{\widetilde{N}(x^{2r})}{\widetilde{D}(x^{2r})}
\end{aligned}
\]
where $\widetilde{N} \in \mathbb{P}_n$, $\widetilde{D} \in \mathbb{P}_d$. Thus, $R_t$ has at most $2(n + d)$ extremal points, of which, due to the symmetry of $\widetilde{N}(x^{2r})$, at most $n + d$ can be contained in $[l_t, u_t] \subset [0, 1]$. Exploiting now Lemma \ref{lem:characterization_of_optima}, we know that, being $R_t$ an optimal approximant, there exist at least $n + d + 2$ extremal points for $|1 - R_t|$ on $[l_t, u_t]$. In particular, we can conclude that the limiting points $l_t$ and $u_t$ need to be extremal points.

Since $R_t(0) = 0$, being $R_t$ optimal, we have $R_t > 0$ on $[l_t, u_t]$ otherwise, $R_t = 0$ would be a better approximant. In particular, there exists $\hat{x} \in [0, l_t]$ such that $R'_t(\hat{x}) > 0$. If we suppose now that $l_t$ is a maximum of $R_t$, then $R'_t(l_t) \leq 0$ and, by the intermediate value theorem, there must exist $\overline{x} \in [\hat{x}, l_t]$ such that $R_t(\overline{x}) = 0$. This is, however, a contradiction since all the nonnegative extremal points of $R_t$ are in $[l_t, u_t]$ so that $l_t$ must be a minimum of $R_t$. Finally, by direct computation, \eqref{eq:error_bounds} immediately follows.

\paragraph{Optimality of $R^*$.}  We will argue by induction. Let us observe that Lemma \ref{lem:existence_of_optima} and the previous paragraph ensure the base step. Then, let us assume that the thesis holds for $T - 1$ and consider $f: [l, u] \to \R$ given by $x \mapsto \overline{R}_{T - 1} \circ \cdots \circ \overline{R}_1(x)$, for some $\overline{R}_1, ... \overline{R}_{T - 1} \in \mathfrak{R}_{n, d}(r, r)$. Moreover, letting $[a, b]$ be the image of $[l, u]$ through $f$, for any constant $c$ we immediately have
\begin{equation}\label{eq:max_on_c}
\max_{x \in [l, u]} |1-cf(x)| = \max\{1 - ca, cb - 1\}.
\end{equation}
Let us suppose now that $\frac{a}{b} > \frac{l_t}{u_t}$ then the following holds
\[
\begin{aligned}
\frac{a}{b} > \frac{l_t}{2 - l_t} \quad \Leftrightarrow \quad 1 - l_t > \frac{b - a}{a + b}.
\end{aligned}
\]
Then, since by the previous discussion $l_t = \min_{x \in [l, u]} R_{T - 1} \circ \cdots \circ R_1(x)$, by choosing $\overline{c} = \frac{2}{a + b}$ in \eqref{eq:max_on_c}, the above implies
\[
\max_{x \in [l, u]} |1 - R_{T - 1} \circ \cdots \circ R_1(x)| > \max_{x \in [l, u]} |1 - \overline{c}f(x)|
\]
contradicting the inductive hypothesis. Finally, since $\frac{a}{b} < \frac{l_t}{u_t}$, for any $\overline{R}_T \in \mathfrak{R}_{n, d}(r, r)$, we have
\[
\begin{aligned}
\max_{x \in [l, u]} |1 - \overline{R}_T(f(x))| & \geq \min_{R \in \mathfrak{R}_{n, d}(r, r)} \max_{x \in [a, b]} |1 - R(x)|\\
&\overset{(*)}{=} \min_{R \in \mathfrak{R}_{n, d}(r, r)} \max_{x \in [a/b, 1]} |1 - R(x)|\\
& \geq \min_{R \in \mathfrak{R}_{n, d}(r, r)} \max_{x \in [l_t/u_t, 1]} |1 - R(x)|\\
& = \min_{R \in \mathfrak{R}_{n, d}(r, r)} \max_{x \in [l_t, u_t]} |1 - R(x)|\\
& = \min_{R \in \mathfrak{R}_{n, d}(r, r)} \max_{x \in [l, u]} |1 - R(R_{T - 1} \circ \cdots \circ R_1(x))|\\
& = \max_{x \in [l, u]} |1 - R^*(x)|
\end{aligned}
\]
where $(*)$ is justified by the invariance of $\mathfrak{R}_{n, d}(r, r)$ under rescaling.
\end{proof}

\begin{remark}\label{rem:best_polynomial_approximant}
We emphasize that, since $\mathbb{P}_n(r) \cong \mathfrak{R}_{n, 0}(r, r)$, the same conclusions of Theorem \ref{thm:optimality} holds if we replace $\mathfrak{R}_{n, d}(r, r)$ with $\mathbb{P}_n(r)$.
\end{remark}

Finally, we are ready for the following
\begin{proof}[Proof of Theorem \ref{thm:best_approximation_fractional} - Optimality]
Let us consider $R_t \in \mathfrak{R}(r, r)$ and define $\widetilde{R}_t(X) := R_t(X \mathcal{G}^\dagger)\mathcal{G}$. It is immediate to verify that, if $R_t$ is an optimal approximant of $f \equiv \mathrm{Id}$ according to Theorem \ref{thm:optimality}, then $\widetilde{R}_t$ is a best approximant of $\mathcal{G}$. More generally, considering $R^*$ as in Theorem \ref{thm:optimality}, let us observe that
\[
\begin{aligned}
R^*(x) = \widetilde{R}_T(\widetilde{R}_{T - 1}(\cdots \widetilde{R}_1(x)))\mathcal{G}^\dagger.
\end{aligned}
\]
Then, it holds $O_T(X) = R^*(X\mathcal{G}^\dagger )\mathcal{G}$, which implies that $O_T(G)$ is the best approximant of $\mathcal{G}$ for every $T \in \mathbb{N}$.
\end{proof}

\subsubsection{Approximation Errors}
We will now provide an error estimate for the proposed method.

To start with, let us point out that, calling ${_2F}_1$ the generalized hypergeometric function given by
\[
{_2F}_1(\alpha, \beta, \gamma, \xi) = \sum_{n = 0}^\infty \frac{(\alpha)_n(\beta)_n}{n!(\gamma)_n}\xi^n,
\]
with $(\alpha)_n := \alpha (\alpha + 1) \cdots (\alpha + n - 1)$, by \cite[Chapter 15, 15.1.8]{AbramowitzStegun1964}, it holds
\[
(1 - x)^{-\frac{1}{2r}} = {_2F}_1 \left(\frac{1}{2r}, 1, 1, x\right).
\]
Then, we immediately get the following generalisation of \cite[Theorem 3]{kenney_pade}.
\begin{lemma}\label{lem:approx_ploynomials}
Let $r \in \mathbb{N}$ and consider $P(x) \in \mathbb{P}_n^{even}(r)$, $Q(x) \in \mathbb{P}_d^{even}(r)$ such that $h(x) := \frac{P(x)}{Q(x)}$ is the $[n/d]$ Padé-approximant of $(1 - x)^{-\frac{1}{2r}}$. Then, for every $t \in \mathbb{N}$, setting
\[
x_{t + 1} = x_t h(1 - x_t^{2r}),
\]
we have
\[
|1 - x_t^{2r}| \leq |1 - x_0^{2r}|^{(n + d + 1)^{\top}}.
\]
\end{lemma}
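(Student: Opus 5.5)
Writing $e_t := 1 - x_t^{2r}$, I will show the one-step bound $|e_{t+1}| \le |e_t|^{n+d+1}$. Iterating it gives the claim with exponent $(n+d+1)^t$; the superscript in the statement should be read as the iteration count. The argument is the classical Kenney--Laub one for Padé iterations toward the sign/polar function, adapted to the exponent $2r$.

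\textbf{Step 1: reformulate the iteration.} Since $x_{t+1} = x_t\,h(e_t)$, raising to the power $2r$ gives
\[
1 - e_{t+1} = (1-e_t)\,h(e_t)^{2r}.
\]
Hence everything reduces to studying the scalar function $\phi(e) := 1 - (1-e)\,h(e)^{2r}$. I want $|\phi(e)| \le |e|^{n+d+1}$ for $e$ in the relevant range, namely $e \in [0,1)$ when $x_0 \in (0,1]$.

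\textbf{Step 2: the order of contact.} Since $h$ is the $[n/d]$ Padé approximant of $g(e) = (1-e)^{-1/(2r)} = {}_2F_1(\tfrac{1}{2r},1,1,e)$, we have
\[
h(e) = g(e) + O(e^{n+d+1}).
\]
Moreover $(1-e)\,g(e)^{2r} = 1$. Therefore $\phi(e) = O(e^{n+d+1})$, so $\phi$ vanishes to order $n+d+1$ at $0$. To upgrade this local statement into a global inequality, I will use the standard Padé error representation for hypergeometric functions. For ${}_2F_1(\alpha,1;\gamma;e)$ the Padé remainder has the form
\[
g(e) - h(e) = \frac{e^{n+d+1}\,K(e)}{Q(e)},
\]
where $K$ is again a hypergeometric-type function with positive coefficients; this is the route taken by Kenney--Laub, who work with ${}_2F_1(\tfrac12,1;1;\cdot)$. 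This representation shows that $h(e) \le g(e)$ on $[0,1)$ with the remainder sign-definite, which gives $0 \le \phi(e)$. I will transfer their formulas by replacing $\tfrac12$ with $\tfrac{1}{2r}$. The positivity of the coefficients, together with $0 < \tfrac{1}{2r} < 1$, then yields that
\[
\psi(e) := \frac{\phi(e)}{e^{n+d+1}}
\]
is a power series with nonnegative coefficients; equivalently, $1-\phi$ has the appropriate complete monotonicity.

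\textbf{Step 3: the global bound.} Once $\psi$ has nonnegative coefficients, it is nondecreasing on $[0,1)$. Its limit as $e \to 1^-$ is $\phi(1) = 1 - 0 = 1$, and this limit is finite provided the denominator $Q$ does not vanish on $[0,1]$, which holds by the usual Padé properties for this Stieltjes-type function. Hence $0 \le \phi(e) \le e^{n+d+1}$ on $[0,1]$. This is exactly the one-step bound, and induction finishes the proof.

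\textbf{Main obstacle.} The hard part is Step 2: establishing the sign and monotonicity structure of the Padé remainder for general $r$. Kenney--Laub's proof exploits specific identities for the $\tfrac12$ case. My first attempt would be to use that $g$ is a Stieltjes function, since $(1-e)^{-\alpha}$ with $\alpha \in (0,1)$ has an integral representation against a positive measure on $[1,\infty)$. Its Padé approximants, in the near-diagonal cases $d \le n \le d+1$, then have errors given by Gauss-quadrature-type remainders with a definite sign and nonnegative Taylor coefficients. This is precisely the regime the condition on $n,d$ in Theorem~\ref{thm:best_approximation_fractional} is meant to ensure; in the remaining cases I would restrict the statement accordingly.
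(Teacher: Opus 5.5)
Your Steps 1 and 3 are sound, and they form exactly the Kenney--Laub skeleton the paper relies on. The paper simply asserts the lemma as an immediate generalisation of their Theorem~3 via $(1-x)^{-1/(2r)}={}_2F_1(\tfrac{1}{2r},1;1;x)$. Once $\psi(e)=\phi(e)/e^{n+d+1}$ is known to have nonnegative Taylor coefficients, you get $\psi\le\psi(1)=1$ on $[0,1]$, and the induction closes.

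\textbf{The gap is Step~2.} It carries the whole content of the lemma, and the mechanism you describe does not deliver it.
\begin{itemize}
\item Information about the Pad\'e remainder $g-h$ does not transfer to $\psi$. Since $\phi=(1-e)\bigl(g^{2r}-h^{2r}\bigr)$, and multiplying by $1-e$ replaces Taylor coefficients by their first differences, positive coefficients of $(g-h)/e^{n+d+1}$ do not imply nonnegative coefficients of $\psi$.
\item The Stieltjes/Gauss-quadrature route has the same limitation. It gives a sign-definite remainder, i.e.\ $0\le h\le g$ on $[0,1)$, which yields only the easy lower bound $\phi\ge 0$. The upper bound $\phi(e)\le e^{n+d+1}$ is what actually needs proof.
\end{itemize}

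Nor can you simply replace $\tfrac12$ by $\tfrac1{2r}$ in Kenney--Laub.
\begin{itemize}
\item For $r=1$ and $n\in\{d-1,d\}$, a degree count shows that $Q^2-(1-e)P^2=c\,e^{n+d+1}$ is a single monomial. For Halley's case, $(4-3e)^2-(1-e)(4-e)^2=e^3$. So $\psi=c/Q^2$ has positive coefficients for free, because the zeros of $Q$ are real and exceed $1$.
\item For $r\ge 2$ this collapse fails. Already for $[1/1]$ with $r=2$ one has $h(e)=(8-3e)/(8-5e)$ and
\[
(8-5e)^4-(1-e)(8-3e)^4=e^3\bigl(320-320e+81e^2\bigr),
\]
which has a negative coefficient.
\item Positivity of $\psi=(320-320e+81e^2)/(8-5e)^4$ does hold in this example, but only after expanding $(8-5e)^{-4}$ and comparing terms.
\end{itemize}
For general $2r$ and admissible $(n,d)$, this positivity is a genuinely separate statement. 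It is the positivity question that arises for Pad\'e iterations for the matrix $p$th root and sector function. It needs its own argument, and neither your proposal nor the paper supplies one.

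A minor point: the hypothesis $(n+1)/(d+1)\in\mathbb{N}$ in Theorem~\ref{thm:best_approximation_fractional} gives $n\ge d$ (e.g.\ $n=3$, $d=1$), not $d\le n\le d+1$. The natural range for results of this kind is $n\ge d-1$.
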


Finally, the following approximation result holds (concluding the proof of Theorem \ref{thm:best_approximation_fractional}).
\begin{theorem}\label{thm:approximation_error}
Let $T \geq 0$, $l \in (0, 1)$, $\sigma_{\min}(G) \geq l$ and consider $R^*$ as in Theorem \ref{thm:best_approximation_fractional}. Then, $X_T$ given by \eqref{eq:coupled_freon} satisfies the following estimate bound
\[
\|(GG^\top)^{-\frac{a}{b}}G - O_T\|_2 \leq \frac{|1 - l^b|^{(n + d + 1)^{T}}}{l^{\frac{2a - b}{b}}}.
\]
\end{theorem}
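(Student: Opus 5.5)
The argument reduces everything to a scalar statement about singular values, applies Lemma~\ref{lem:approx_ploynomials} to each one, and then converts the scalar error into a spectral-norm bound. The reduction works because every map in the coupled iteration \eqref{eq:coupled_freon} is a rational function of the Gram matrix $A_t$ applied on the left. Hence all iterates $X_t$ and $A_t$ share the singular vectors of $G = U\operatorname{diag}(\sigma)V^\top$.

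\textbf{Diagonalisation.} By induction on $t$, using that $R_t(A_{t-1})$ is a function of a matrix diagonalised by $U$, I would show $X_t = U\operatorname{diag}(x_t(\sigma_i))V^\top$ for scalar sequences $x_t(\sigma)$. The target $(GG^\top)^{-a/b}G$ has singular values $\sigma^{1-2a/b}$. Unitary invariance of $\|\cdot\|_2$ then gives
\[
\|(GG^\top)^{-\frac{a}{b}}G - O_T\|_2 = \max_i \bigl|\sigma_i^{1-2a/b} - x_T(\sigma_i)\bigr|.
\]
This reduces the claim to a uniform scalar bound over $\sigma\in[l,1]$, where the upper endpoint comes from the Frobenius normalisation $X_0 = G/\|G\|_F$.

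\textbf{Scalar recursion.} Next I would write $x_t(\sigma) = \sigma^{1-2a/b} y_t(\sigma)$, so that $y_t$ is the relative factor that should converge to $1$. The goal is to check that the coupled update induces exactly the recursion of Lemma~\ref{lem:approx_ploynomials}, $y_{t+1} = y_t\,h(1-y_t^{2r})$ with $2r=b$, in the variable $y^{b}$ that the Gram-matrix update $A_t = B^bA_{t-1}$ tracks. The initial value satisfies $|1-y_0^{b}|\le |1-l^b|$, since $y_0^b$ is a fixed power of $\sigma\in[l,1]$. Lemma~\ref{lem:approx_ploynomials} then gives
\[
|1-y_T^{b}| \le |1-l^b|^{(n+d+1)^T}.
\]
For the rational iterates $R_t$ of Theorem~\ref{thm:best_approximation_fractional}, I would use optimality (Theorem~\ref{thm:optimality}): the composite optimal error is no larger than that of the Padé composite. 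So the same bound holds for $O_T$.

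\textbf{Converting to the final bound.} Finally,
\[
|1-y_T| \le |1-y_T^{b}|,
\]
because $y\mapsto y^b$ has derivative at least $1$ near $y=1$ on the relevant side, or one factors $1-y^b = (1-y)\sum_j y^j$ with the sum at least $1$ once $y_T$ is near $1$. Multiplying by $\sigma^{1-2a/b}$ and taking the worst case $\sigma=l$ when $2a/b>1$ produces the factor $l^{-(2a-b)/b}$. For $2a\le b$ this factor is at most $1$ on $[l,1]$, so the bound still holds as stated.

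\textbf{Main obstacle.} The hardest step is the identification in the scalar recursion. One must verify that the coupled updates $X_t = B^aX_{t-1}$, $A_t = B^bA_{t-1}$ correspond exactly to the Padé iteration of Lemma~\ref{lem:approx_ploynomials} in the variable $y^b$, and track which power of $\sigma$ is being normalised. I would first work this out with $T=1$ on a single singular value, fixing exponents so that the fixed point of the iteration is $\sigma^{1-2a/b}$. If the exact matching fails, the fallback is to work directly with $y_t^b$, bound its error, and absorb any extra constant into the stated prefactor.
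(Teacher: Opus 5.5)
Your route is the paper's. Your $y_T$ are the singular values of the paper's $Z_T=(GG^\top)^{a/b-1/2}O_T$, and both arguments compare $R^*$ with the $T$-fold Pad\'e composite via Theorem~\ref{thm:optimality}, apply Lemma~\ref{lem:approx_ploynomials}, use $1-y^b=(1-y)\sum_{j<b}y^j$ with the sum at least $1$ for $y\ge 0$, and pay $l^{-(2a-b)/b}$ at $\sigma=l$.

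The step that fails is the initial value. Since $x_0=\sigma$, you have $y_0=\sigma^{2a/b}$ and hence $y_0^b=\sigma^{2a}$. So $|1-y_0^b|\le|1-l^b|$ on $[l,1]$ holds only when $2a\le b$, while your prefactor step needs $2a\ge b$. As written, the argument closes only in the \Muon case $2a=b$. For $2a>b$, which is the regime where the prefactor matters, it yields only $l^{-(2a-b)/b}|1-l^{2a}|^{(n+d+1)^T}$. The paper's proof passes over the same point by maximising over $x\in[l,1]$, although the variable fed to $R^*$ starts at $\sigma^{2a/b}$. To close the gap you need either the stronger hypothesis $\sigma_{\min}(G)^{2a/b}\ge l$ (a bound on the spectrum of $Z_0$), under which both of your estimates go through, or a different tracked variable.

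Your identification with the coupled system of Algorithm~\ref{alg:freon} is also off. There the Gram eigenvalues are $\alpha_t=y_t^{b/a}$, not $y_t^b$. Your recursion $y_{t+1}=y_t\,h(1-y_t^b)$ is instead the abstract iteration $\widetilde R_t(X)=R_t(X\mathcal G^\dagger)\mathcal G$ of Theorem~\ref{thm:best_approximation_fractional}. In the coupled system the Lemma applies to $w_t=\alpha_t^{1/b}=y_t^{1/a}$, which starts at $w_0^b=\sigma^2$. With Pad\'e steps this gives $|1-w_T^b|\le|1-l^2|^{(n+d+1)^T}\le|1-l^b|^{(n+d+1)^T}$ for even $b$. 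The inequality $|1-w^a|\le|1-w^b|$ for $0<a\le b$, $w>0$, then transfers this to $|1-y_T|$. For the optimal $R^*$, however, the comparison must be redone in $w$, where Theorem~\ref{thm:optimality} controls $\max|1-w_T|$ rather than $|1-w_T^a|$. Your fallback of absorbing extra constants into the prefactor is unavailable, because that prefactor is explicit.

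Two smaller points.
\begin{enumerate}
\item Optimality compares $\max|1-R^*|$ with $\max|1-f|$. You must therefore convert the Pad\'e bound on $|1-f^b|$ into one on $|1-f|$ before invoking it, as the paper does; a bound on $|1-y_T^b|$ does not transfer to $O_T$.
\item Your claim that the case $2a\le b$ is covered is wrong. There $\max_{\sigma\in[l,1]}\sigma^{1-2a/b}=1$, so you only get $|1-l^b|^{(n+d+1)^T}$, which exceeds the stated bound. In fact the statement fails there: at $T=0$, $a=1$, $b=4$, $l=10^{-2}$, a singular value $1/4$ gives error $1/4$, against the bound $l^{1/2}(1-l^4)<1/10$. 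So the statement, like the paper's first inequality, implicitly requires $2a\ge b$.
\end{enumerate}
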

\begin{proof}
First, let us observe that, setting $Z_t := (GG^\top)^{\frac{a}{b} - \frac{1}{2}} O_t$, $t \in [1, T]$, it holds
\[
\begin{aligned}
l^{\frac{2a - b}{b}} \left\|(GG^\top)^{-\frac{a}{b}}G - O_T\right\|_2 & \leq \sigma_{min}(G)^{\frac{2a - b}{b}} \left\|(GG^\top)^{-\frac{a}{b}}G - O_T\right\|_2\\
& \leq \left\|(GG^\top)^{\frac{a}{b} - \frac{1}{2}}\left((GG^\top)^{-\frac{a}{b}}G - O_T\right)\right\|_2\\
& = \|\mathrm{sign}(G) - Z_t\|_2
\end{aligned}
\]
It is fundamental to observe that, by direct computation, we have $X_t \in \mathfrak{R}\left(\frac{b}{2}, \frac{b}{2}\right)$.

Let now $h$ be the $[n/d]$ Padé-approximant of $(1 - x)^{-\frac{1}{b}}$ and $\widetilde{R}(x) = x h(1 - x^{b})$. Moreover, by direct computation, we can immediately check that $h(1 - x^{2r}) = \frac{P(x^{b})}{Q(x^{b})}$ for some $P \in \mathbb{P}_n$, $Q \in \mathbb{P}_d$. Thus, $\widetilde{R} \in \mathfrak{R}_{n, d}(\frac{b}{2}, \frac{b}{2})$, and, by Lemma \ref{lem:p_odd_q_is_closed}, it is well defined
\[
f := \widetilde{R} \; \underbrace{\circ \cdots \circ }_{\text{$T$ times}} \;\widetilde{R} \in \mathfrak{R}(\frac{b}{2}, \frac{b}{2}).
\]
Finally, considering $R^*$ as in Theorem \ref{thm:optimality}, by Lemma \ref{lem:approx_ploynomials}, we have
\[
\begin{aligned}
\|\mathrm{sign}(G) - Z_T\|_2 & \leq \max_{x \in [l, 1]} |1 - R^\star(x)| \leq \max_{x \in [l, 1]} |1 - f(x)|\\
& \leq \max_{x \in [l, 1]} \frac{|1 - f(x)^b|^{(n + 1)^T}}{\sum_{i = 0}^{b - 1} f(x)^i} \leq \max_{x \in [l, 1]} \frac{|1 - x^b|^{(n + d + 1)^T}}{\sum_{i = 0}^{b - 1} f(x)^i}\\
& \leq |1 - l^b|^{(n + d + 1)^T},
\end{aligned}
\]
and we can conclude.
\end{proof}

\begin{remark}\label{rem:polynomial_error_bound}
Let us point out that using a polynomial approximant the above estimate reduces to
\[
\|(GG^\top)^{-\frac{a}{b}}G - O_{T}\|_2 \leq \frac{|1 - l^{b}|^{(n + 1)^{T}}}{l^{\frac{2a - b}{b}}}.
\]
\end{remark}

\subsection{Accuracy of approximations}
\begin{figure*}[htbp]
  \centering
  \begin{tabular}{ccc}
    \includegraphics[width=0.3\textwidth]{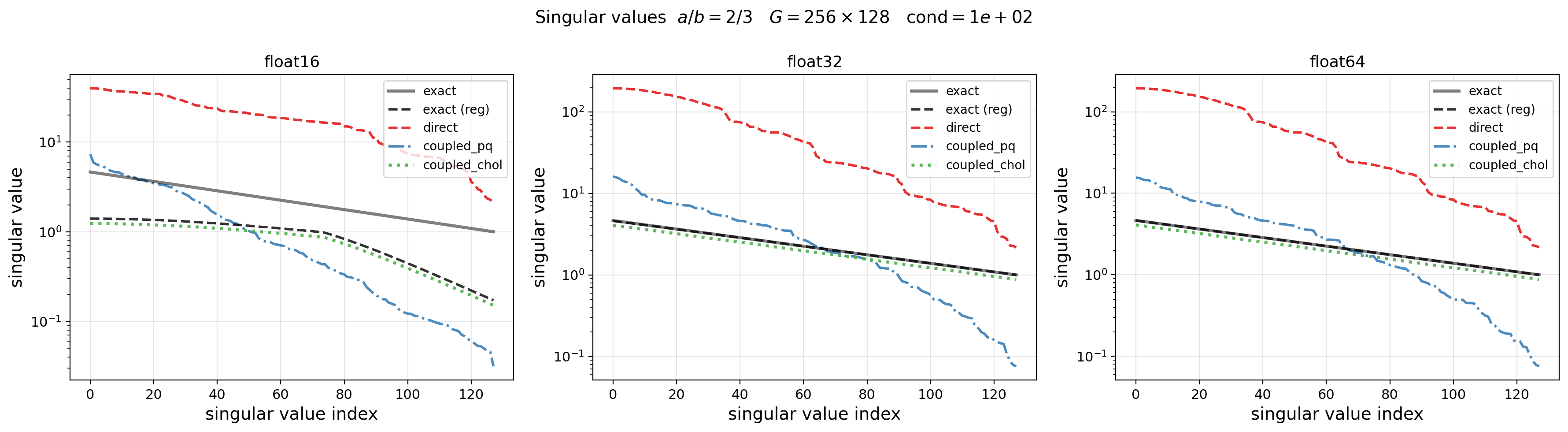} &
    \includegraphics[width=0.3\textwidth]{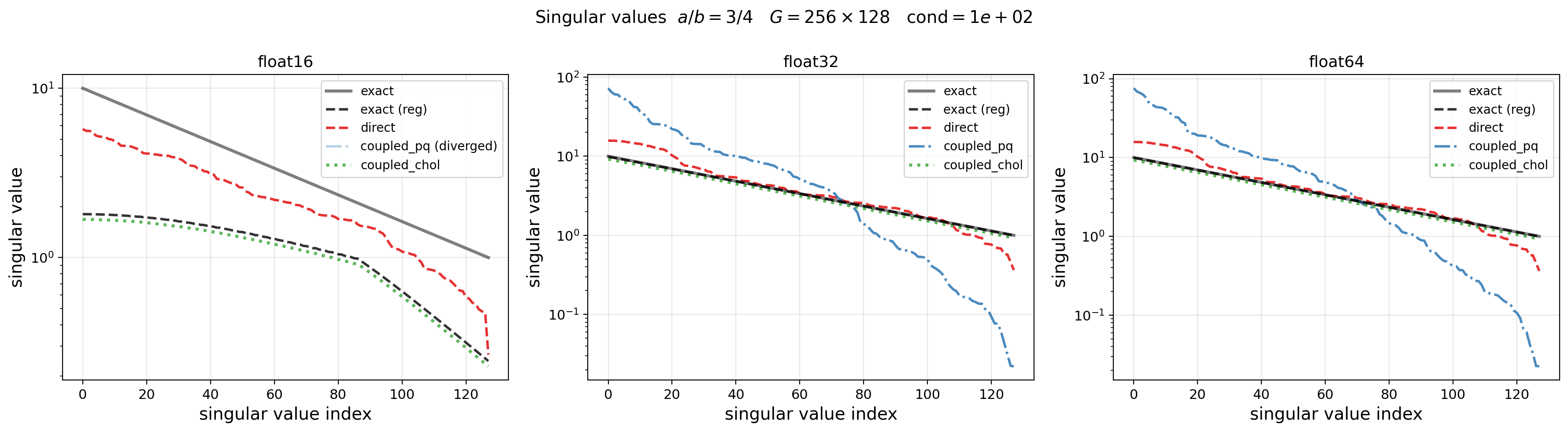} &
    \includegraphics[width=0.3\textwidth]{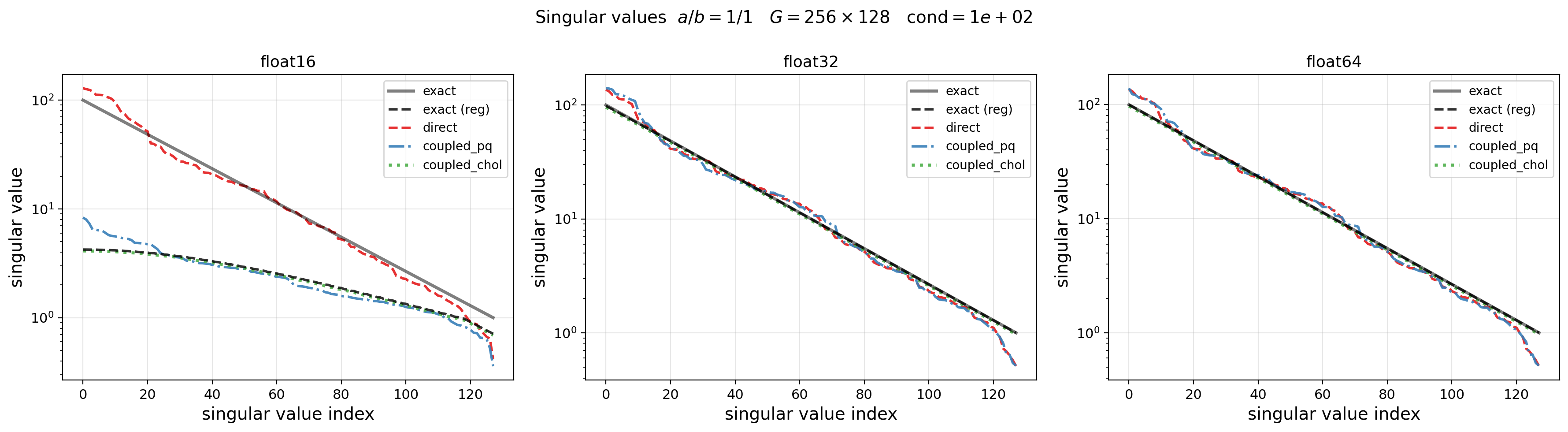} \\[4pt]
    \includegraphics[width=0.3\textwidth]{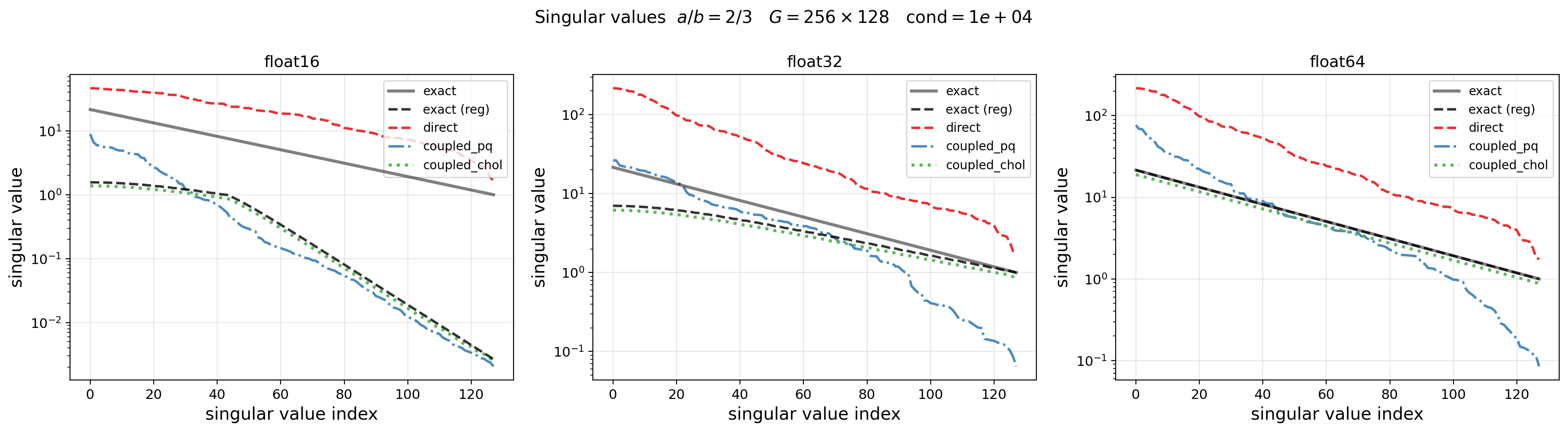} &
    \includegraphics[width=0.3\textwidth]{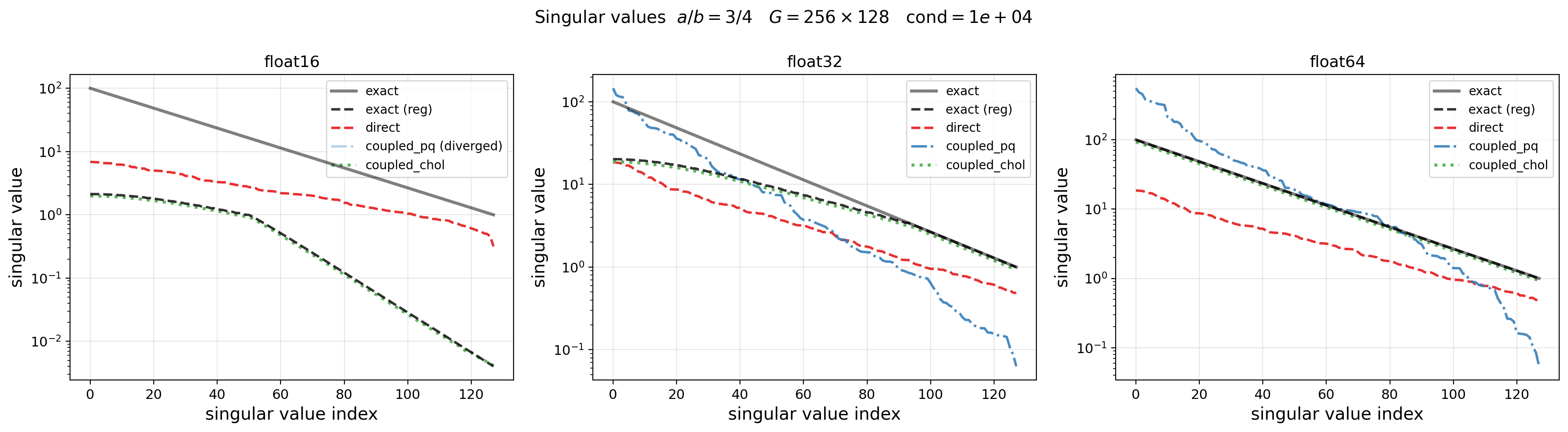} &
    \includegraphics[width=0.3\textwidth]{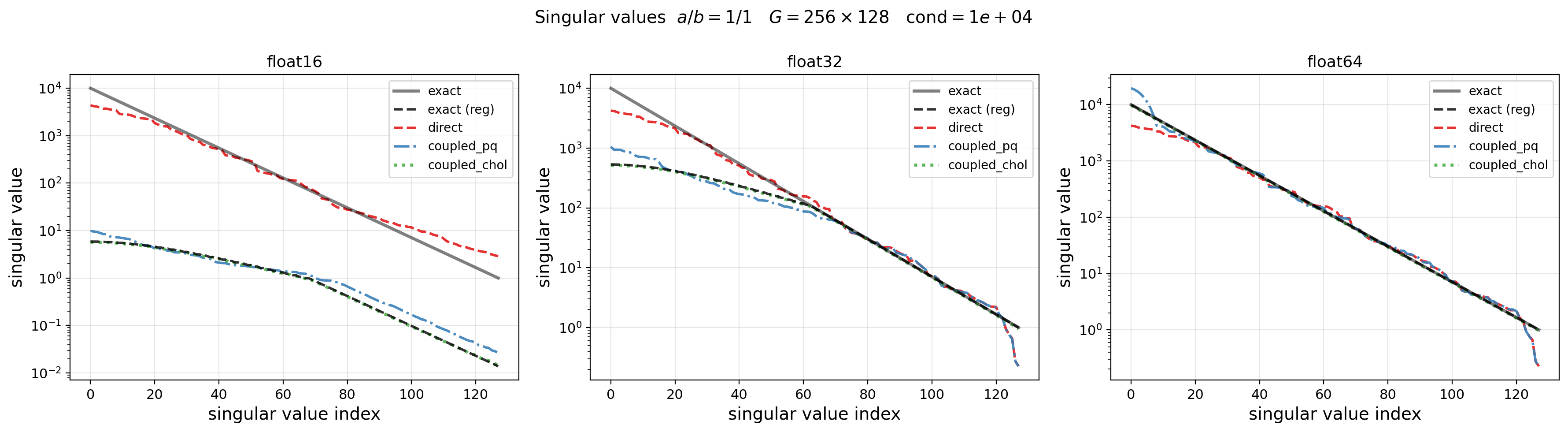} \\[4pt]
    \includegraphics[width=0.3\textwidth]{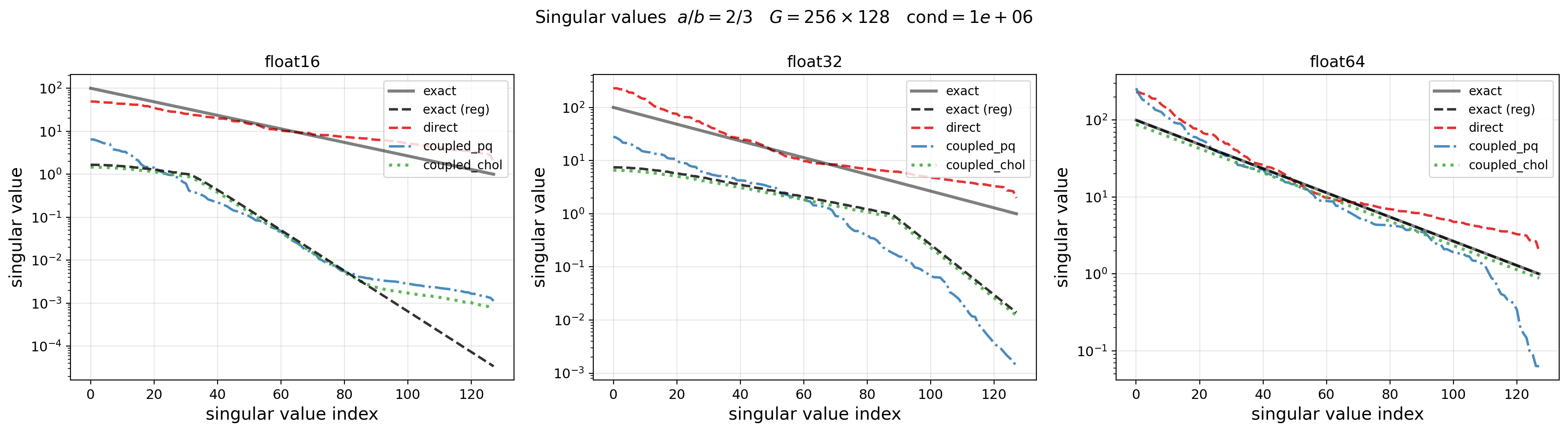} &
    \includegraphics[width=0.3\textwidth]{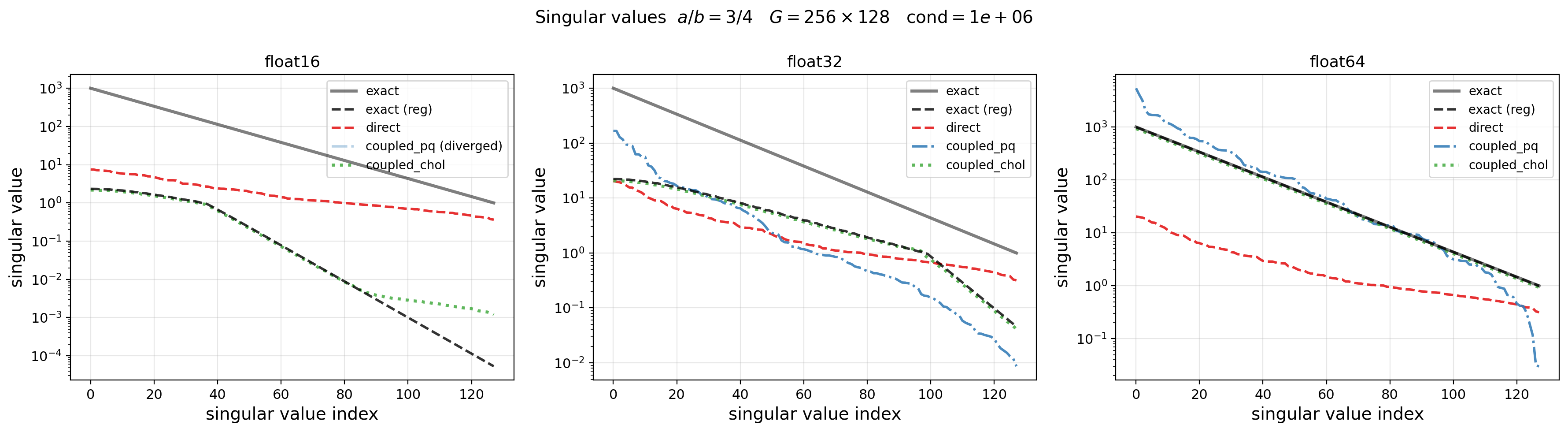} &
    \includegraphics[width=0.3\textwidth]{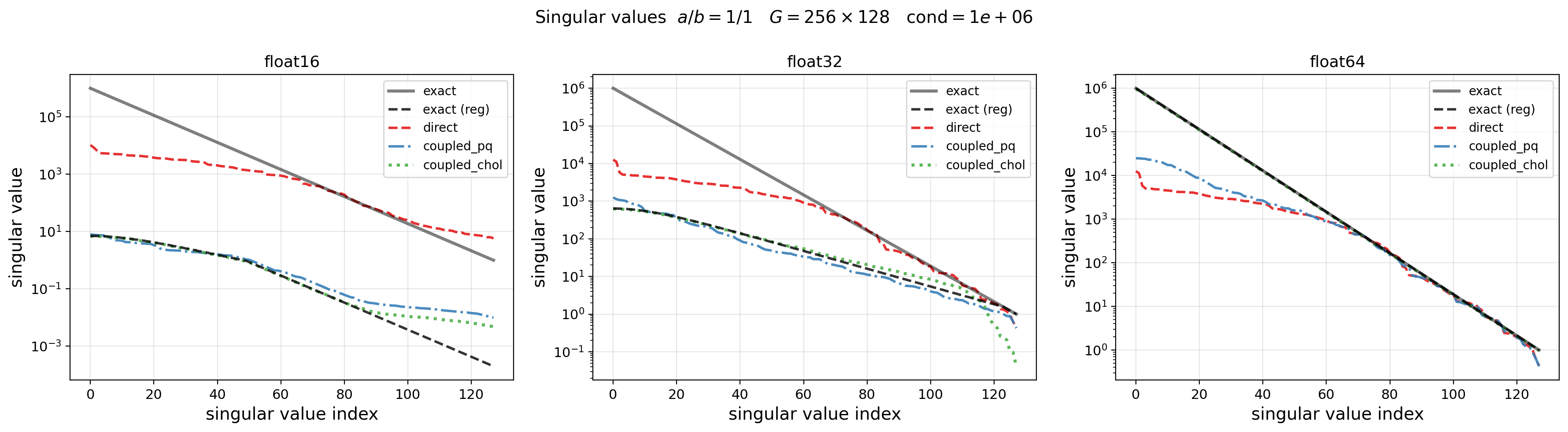} \\
  \end{tabular}
  \caption{Output singular values of three methods for computing $(GG^\top)^{-a/b}G$
  on a synthetic $256\times128$ matrix $G = U\Sigma V^\top$, where the singular
  values of $\Sigma$ are spaced logarithmically between $1$ and $\kappa^{-1}$
  to give condition number $\kappa$.
  Columns correspond to exponents $a/b \in \{2/3,\, 3/4,\, 1\}$;
  rows to condition numbers $\kappa \in \{10^2,\, 10^4,\, 10^6\}$.
  Each panel shows three sub-panels for float16, float32, and float64 (left to right).
  Within each sub-panel the solid black line is the exact target
  $\sigma_i(G)^{1-2a/b}$ (sorted descending by index),
  the dashed black line is the regularized target
  $\sigma_i(G)\,(\sigma_i(G)^2 + \varepsilon)^{-a/b}$
  where $\varepsilon$ is the dtype-dependent regularization threshold scaled
  by the spectral norm of $G$,
  and the coloured curves are the output singular values of
  \texttt{direct} (red, polynomial Newton--Schulz applied directly to $G$),
  \texttt{coupled\_pq} (blue, coupled polynomial iteration tracking $G$ and its Gram matrix),
  and \texttt{coupled\_chol} (green, rational Newton--Schulz via Cholesky tracking).
  All methods use 10 iterations.
  A method that diverges (NaN/Inf output) is shown as a flat phantom line in the legend only.
  \texttt{coupled\_chol} remains stable across all conditions and precisions,
  while \texttt{direct} and \texttt{coupled\_pq} degrade or diverge at high condition
  numbers in low precision.}
  \label{fig:sv_comparison}
\end{figure*}

\subsection{Instability of Polynomial Iterations}\label{app:holder_instability}

\Cref{fig:heatmaps} compares three iterative methods for computing $(GG^\top)^{-a/b}G$ across matrix sizes, condition numbers, and floating-point precisions. Grey cells indicate divergence (NaN or Inf output). Both \texttt{direct} and \texttt{coupled\_pq} exhibit systematic divergence at high condition numbers, particularly in \texttt{float16} and \texttt{bfloat16}, with divergence spreading to \texttt{float32} at extreme conditioning. Regularization delays but does not eliminate this behaviour. In contrast, \texttt{coupled\_chol}, remains numerically stable across all tested conditions and dtypes, achieving near-machine-precision accuracy throughout. This stability comes without any regularization: the Cholesky structure enforces positive definiteness at every step, preventing the eigenvalue collapse that causes the other methods to diverge.
\begin{figure*}[htbp]
\centering
\includegraphics[width=0.32\textwidth]{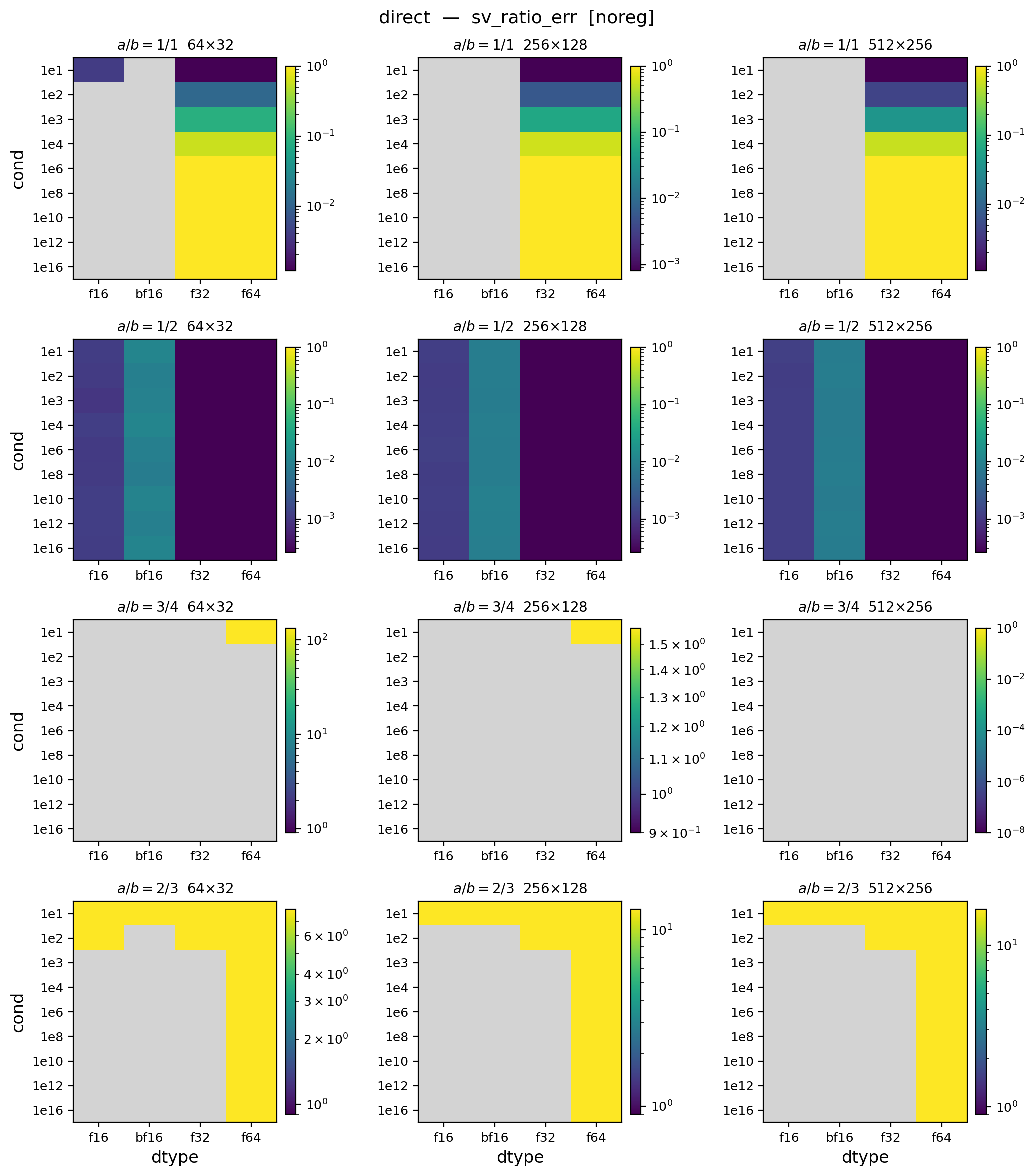}\hfill
\includegraphics[width=0.32\textwidth]{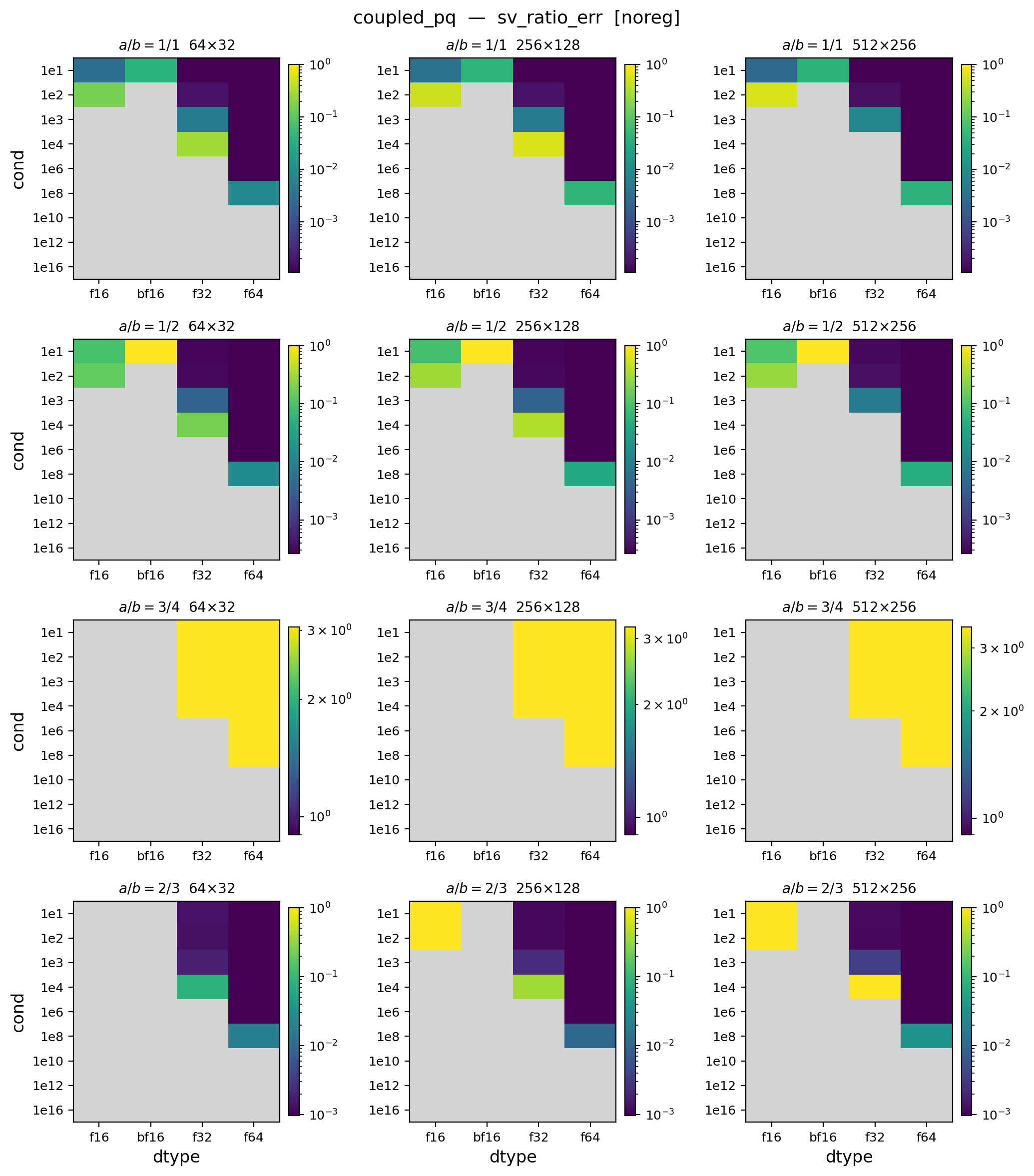}\hfill
\includegraphics[width=0.32\textwidth]{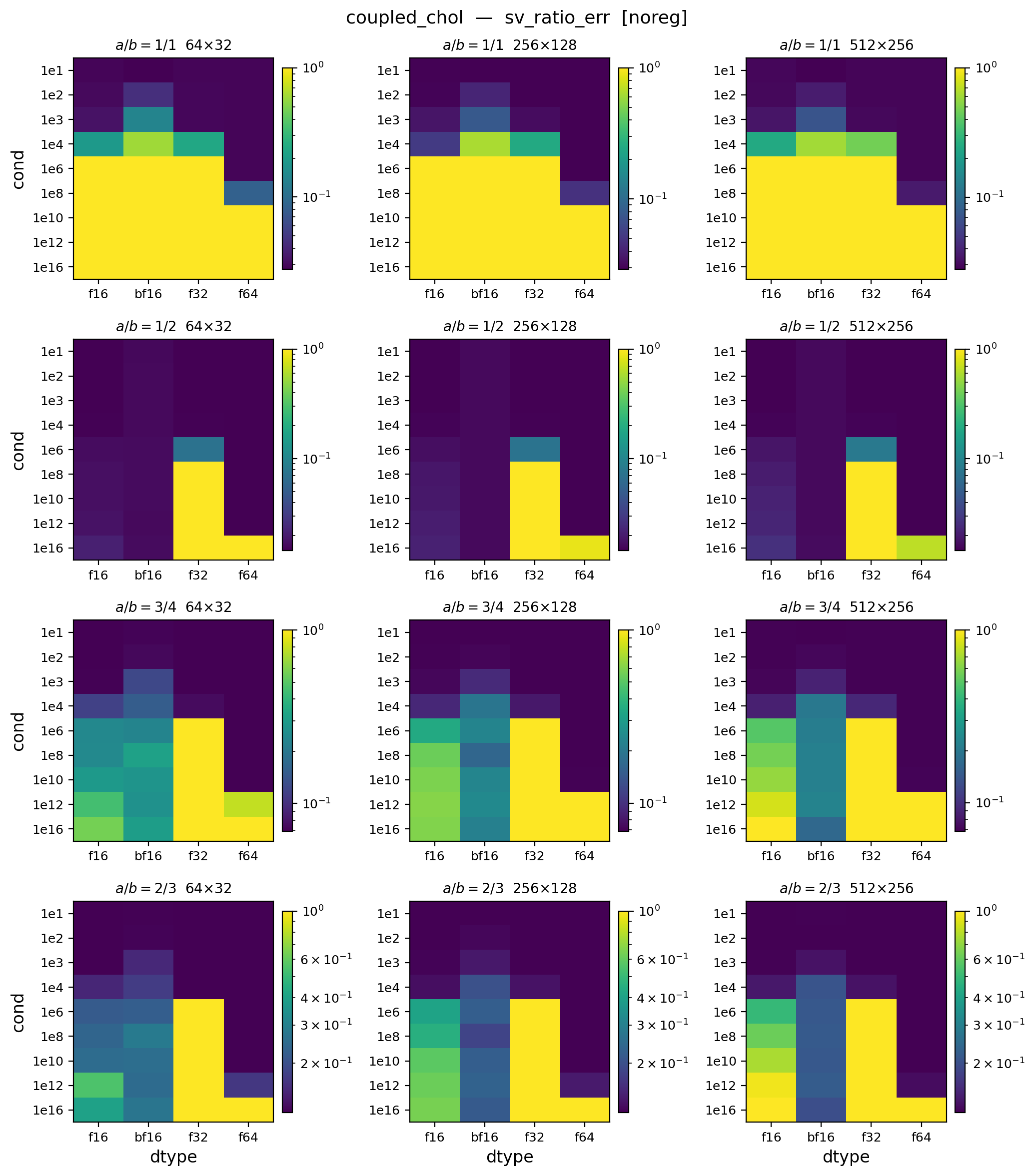}
\caption{Singular-value ratio error $\varepsilon_{\mathrm{sv}} = \max_i|\sigma_i(\hat Z)\cdot\sigma_i(G)^{(2a-b)/b}-1|$ without regularization for three iterative methods computing $(GG^\top)^{-a/b}G$: \texttt{direct} (left), \texttt{coupled\_pq} (centre), and \texttt{coupled\_chol} (right). Each panel is a $4\times3$ grid of heatmaps indexed by exponent $a/b\in{1/1,1/2,3/4,2/3}$ (rows) and matrix size $(m,n)\in{64\times32,256\times128,512\times256}$ (columns); within each cell the $y$-axis encodes condition number $\kappa\in{10^1,10^2,\dots,10^{12},10^{16}}$ (increasing downward) and the $x$-axis encodes floating-point type (f16, bf16, f32, f64, increasing precision left to right). Test matrices are constructed as $G=U\Sigma V^\top$, where $U\in\mathbb{R}^{m\times m}$ and $V\in\mathbb{R}^{n\times n}$ are independent Haar-distributed orthogonal matrices obtained by QR-decomposing Gaussian random matrices in \texttt{float64}, and $\Sigma=\operatorname{diag}(\sigma_1,\dots,\sigma_k)$ with $k=\min(m,n)$ singular values spaced logarithmically from $1$ to $\kappa^{-1}$; each matrix is then cast to the target dtype before the iterative method is applied. The error metric equals zero for a perfect computation and is invariant to the overall scale of $G$; no regularization is applied, so grey cells indicate that the method produced non-finite values for that configuration. \texttt{direct} and \texttt{coupled\_pq} each perform 40 steps of a degree-$(p,q)$ polynomial iteration whose coefficients are precomputed by Zolotarev-optimal minimax approximation on $[\ell,1]$ with $\ell=10^{-16}$, run entirely in the target dtype. \texttt{coupled\_chol} performs 25 steps of a rational Remez iteration with Cholesky-based stabilization. Colour encodes error on a logarithmic scale (viridis); darker shades indicate smaller error. Error grows with condition number and worsens at reduced precision, while \texttt{coupled\_pq} achieves consistently lower error than \texttt{direct} across all settings. Most notably \texttt{coupled\_chol} remains stable at all condition numbers and precisions.}
\label{fig:heatmaps}
\end{figure*}

%
%

%
%

%
%

\subsection{Iterative Methods for $(GG^\top)^{-a/b}\,G$}

Throughout this section, let $G \in \mathbb{R}^{m \times n}$ ($m \le n$) denote the (Frobenius-normalised) gradient matrix. Let $P_t$ be a step-varying degree-2 polynomial with Remez-approximated coefficients, and let $k \in \mathbb{Z}^+$ satisfy $P(x)^k x^{ka/b} \approx 1$ on the fitting interval. All methods target the fixed point $X_\infty = (GG^\top)^{-a/b}\,G$.

\textbf{A critical note on numerical stability:} It is important to state upfront that, as identified by Higham~\cite{higham2008functions}, and numerically illustrated in \Cref{app:holder_instability}, most of the polynomial and coupled iterations detailed below fail to converge in finite-precision arithmetic due to severe numerical stability issues (e.g., condition-number squaring and diverging coupled sequences). In practice, \emph{only the final method presented} (the \textbf{Coupled-Chol Iteration}) exhibits the forward stability required to reliably converge. The preceding methods are documented primarily for their theoretical connections and historical context.

For cost accounting, we distinguish \emph{G-MMs} (full $m \times n$ gradient products, $O(m^2 n)$) from \emph{S-MMs} ($m \times m$ square products, $O(m^3)$) and \emph{QRs} (thin QR of a $2m \times m$ block, $O(m^3)$). Integer powers $A^k$ require $\lceil \log_2 k\rceil$ S-MMs via binary exponentiation.


\subsubsection{Direct Iteration}

The direct method iterates $O_t$ without maintaining separate state for the polynomial argument $A_t$. Instead, $A_t$ is recomputed at each step from the current iterate. Three cases arise depending on $a/b$:

\paragraph{General case ($a/b \notin \{1/2, 1\}$).}
A frozen Gram power $(GG^\top)^{k(a/b-1/2)}$ is precomputed once; each step forms:
\begin{equation}\label{eq:direct_general}
\begin{cases}
  O_{t+1} = R_t(A_t)\,O_t, & O_0 = G, \\[4pt]
  A_t = (O_t O_t^\top)^{k/2}\,(GG^\top)^{k(a/b\,-\,1/2)}.
\end{cases}
\end{equation}

\paragraph{Polar case ($a/b = 1/2$).}
The frozen factor vanishes ($k(a/b-1/2)=0$) and $k/2 = 1$, simplifying the argument to the plain Gram matrix of the current iterate:
\begin{equation}
\begin{cases}
  O_{t+1} = R_t(A_t)\,O_t, & O_0 = G, \\[4pt]
  A_t = O_t O_t^\top.
\end{cases}
\end{equation}

\paragraph{Freon case ($a/b = 1$, $k=1$).}
The general formula \eqref{eq:direct_general} would require an expensive matrix square root $(X_t X_t^\top)^{1/2}$ at every step. Instead, $A_t$ is defined via a mixed product with the \emph{frozen} initial transpose $G^\top$:
\begin{equation}\label{eq:coupled_freon}
\begin{cases}
  O_{t+1} = R_t(A_t)\,O_t, & O_0 = G, \\[4pt]
  A_t = O_t\,G_0^\top.
\end{cases}
\end{equation}
Using $G_0^\top$ avoids both the matrix square root and condition-number squaring, since $\kappa(O_t G^\top) \le \kappa(O_t)\kappa(G)$ compared to $\kappa(O_t O_t^\top) = \kappa(O_t)^2$.

\medskip
In all three cases, convergence to $O_\infty O_\infty^\top = (GG^\top)^{1-2a/b}$ ensures $A_\infty = I$ (the fixed point of $P$).

\paragraph{Classical connection.} For $a/b = 1/2$ and degree-1 $P_t$, the update reduces to the Newton--Schulz iteration for the polar decomposition~\cite{higham2008functions}. The higher power inside the polynomial allows for general $a/b \ne 1/2$.

\paragraph{Complexity.}
\begin{itemize} \itemsep0em
  \item \textit{Init:} 1 G-MM; $\lceil \log_2(k(a/b-1/2))\rceil$ S-MMs (zero if $a/b = 1/2$).
  \item \textit{Per step:} 2 G-MMs; $(1 + \lceil \log_2(k/2)\rceil + \mathbf{1}_{[a/b \ne 1/2]})$ S-MMs.
\end{itemize}

\subsubsection{M-Accumulator Iteration}

Instead of iterating $O_t$ directly, this method maintains a preconditioner $M_t \in \mathbb{R}^{m \times m}$, applied to the frozen gradient $G$ only upon convergence:
\begin{equation}
\begin{cases}
  M_{t+1} = R_t(A_t)\,M_t, & M_0 = I, \\[4pt]
  A_t = M_t^k\,(GG^\top)^{ka/b}.
\end{cases}
\end{equation}
The final result is $O_T = M_T G$. At convergence, $M_\infty = (GG^\top)^{-a/b}$ and $A_\infty = I$.

\paragraph{Classical connection.} For $a/b = 1/2$, this is the uncoupled polynomial analogue of the Denman--Beavers iteration for the matrix square root~\cite{higham2008functions}. This is also the method of \citep{GramNewtonSchulz}.

\paragraph{Complexity.}
\begin{itemize} \itemsep0em
  \item \textit{Init:} 1 G-MM; $\lceil \log_2(ka/b)\rceil$ S-MMs.
  \item \textit{Per step:} 0 G-MMs; $(\lceil \log_2 k\rceil + 3)$ S-MMs.
  \item \textit{Final:} 1 G-MM ($M_T G_0$).
\end{itemize}
M-accumulator is the cheapest per-step method when $n \gg m$, deferring the large-matrix product to the end.

\subsubsection{Coupled Iteration}

The coupled method eliminates the per-step recomputation of $A_t$ by tracking it as its own state. Because $P_t(A_t)$ commutes with $A_t$, the system closes exactly:
\begin{equation}
\begin{cases}
  O_{t+1} = R_t(A_t)\,O_t, & O_0 = G, \\[4pt]
  A_{t+1} = R_t(A_t)^k\,A_t, & A_0 = (GG^\top)^{ka/b}.
\end{cases}
\end{equation}

\paragraph{Classical connection.} Tracking the argument matrix alongside the iterand echoes Schulz-type coupled iterations for matrix inversion~\cite{higham2008functions}.

\paragraph{Complexity.}
\begin{itemize} \itemsep0em
  \item \textit{Init:} 1 G-MM; $\lceil \log_2(ka/b)\rceil$ S-MMs.
  \item \textit{Per step:} 1 G-MM; $(\lceil \log_2 k\rceil + 2)$ S-MMs.
\end{itemize}
Compared to M-accumulator, this adds one G-MM per step but saves one S-MM, making it favorable only when $m \approx n$.

\subsubsection{Coupled-$ab$ Iteration}

A variant of coupled iteration where $A_t$ initializes at the raw Gram matrix $GG^\top$, avoiding the fractional matrix power during initialization:
\begin{equation}
\begin{cases}
  O_{t+1} = R_t(A_t)^a\,O_t, & O_0 = G, \\[4pt]
  A_{t+1} = R_t(A_t)^b\,A_t, & A_0 = GG^\top.
\end{cases}
\end{equation}
The invariant $R(x)^b x \approx 1$ ensures $A_t \to I$ while $O_t \to (GG^\top)^{-a/b} G$.

\paragraph{Complexity.}
\begin{itemize} \itemsep0em
  \item \textit{Init:} 1 G-MM; 0 S-MMs.
  \item \textit{Per step:} 1 G-MM; $(2 + \lceil \log_2 a\rceil + \lceil \log_2 b\rceil)$ S-MMs.
\end{itemize}

\subsubsection{Dual-$ab$ Iteration}

This method maintains a factored representation $A_t = X_t Y_t$ to avoid forming $GG^\top$ explicitly at each step:
\begin{equation}
\begin{cases}
  A_t = O_t Y_t, \quad R_t = R_t(A_t), \\[4pt]
  O_{t+1} = R_t^a\,O_t, & O_0 = G, \\[4pt]
  Y_{t+1} = Y_t\,R_t^{b-a}, & Y_0 = G^\top.
\end{cases}
\end{equation}
A balancing step $O_t \leftarrow \gamma_t O_t$, $Y_t \leftarrow Y_t / \gamma_t$ with $\gamma_t = \sqrt{\|Y_t\|_F / \|X_t\|_F}$ prevents mismatched sequence growth.

\paragraph{Classical connection.} The factored form averts condition-number squaring, mirroring product formulas for evaluating matrix functions of products~\cite{higham2008functions}.

\paragraph{Complexity.}
\begin{itemize} \itemsep0em
  \item \textit{Init:} 0 G-MMs (transpose only).
  \item \textit{Per step:} 3 G-MMs; $(1 + \lceil\log_2 a\rceil + \lceil\log_2(b-a)\rceil)$ S-MMs.
\end{itemize}

\subsubsection{Coupled-Dual Iteration}

This method runs two coupled masters in parallel for exponents $a/b$ and $1 - a/b$. A cross-stabilizer $E_t = I - X_t Y_t \to 0$ corrects the slave iterates directly, avoiding fractional matrix powers inside the loop:
\begin{equation}
\begin{cases}
  R_A = R_t(A_t), \quad R_B = R_t(B_t), \quad E_t = I - O_t Y_t, \\[4pt]
  O_{t+1} = R_A\,O_t + \gamma\,E_t O_t, & O_0 = G, \\[4pt]
  Y_{t+1} = Y_t\,R_B + \gamma\,Y_t E_t, & Y_0 = G^\top, \\[4pt]
  A_{t+1} = R_A^k\,A_t, & A_0 = (GG^\top)^{ka/b}, \\[4pt]
  B_{t+1} = R_B^k\,B_t, & B_0 = (GG^\top)^{k(b-a)/b},
\end{cases}
\end{equation}
where $\gamma > 0$ is a scalar hyperparameter.

\paragraph{Complexity.}
\begin{itemize} \itemsep0em
  \item \textit{Init:} 1 G-MM; $\lceil\log_2(ka/b)\rceil + \lceil\log_2(k(b-a)/b)\rceil$ S-MMs.
  \item \textit{Per step:} 3 G-MMs; $(4 + 2\lceil\log_2 k\rceil)$ S-MMs.
\end{itemize}

\subsubsection{Rational-Chol (QDWH-style) Iteration}

Instead of polynomials, this method applies a rational function $R_t(x) = (\alpha_t + \beta_t x)/(1 + \delta_t x)$ targeting $x^{-1/b}$. It maintains a preconditioner $M_t \to (GG^\top)^{-1/b}$:
\[
  M_{t+1} = R_t(M_t^b\,GG^\top)\,M_t, \quad M_0 = I.
\]
Writing $b = 2r$ and taking a thin QR $LL^\top = GG^\top$, we define $Y_t = M_t^r L$. A block QR avoids forming $(I + \delta_t Y_t Y_t^\top)^{-1}$:
\begin{equation}
\begin{cases}
  K_t = \begin{bmatrix} \sqrt{\delta_t}\,Y_t^\top \\ I_m \end{bmatrix},
  \quad [Q_1;\,Q_2]\,R = K_t \;\text{(thin QR)}, \\[6pt]
  M_{t+1} = \gamma_t\,M_t + (\alpha_t - \gamma_t)\,Q_2\,Q_2^\top M_t,
\end{cases}
\end{equation}
where $\gamma_t = \beta_t / \delta_t$. The final result is $M_T^a\,G$.

\paragraph{Classical connection.} For $a/b = 1/2$, this is the QDWH algorithm for polar decomposition, using Zolotarev approximants via block QRs to avoid condition-number squaring~\cite{zolopd}.

\paragraph{Complexity.}
\begin{itemize} \itemsep0em
  \item \textit{Init:} 1 thin QR of $G^\top$ (equivalent to 1 G-MM).
  \item \textit{Per step:} 1 QR; $(3 + \lceil\log_2 r\rceil)$ S-MMs.
  \item \textit{Final:} $\lceil\log_2 a\rceil$ S-MMs; 1 G-MM ($M_T^a G$).
\end{itemize}

\subsubsection{Coupled-Chol Iteration}

This method merges Cholesky-factor tracking (from Coupled-$ab$) with the block QR technique (from Rational-Chol). It tracks a Cholesky-like factor $L_t \in \mathbb{R}^{m \times m}$ ($L_t L_t^\top \to I$) directly alongside an accumulator $C_t \in \mathbb{R}^{m \times m}$:
\begin{equation}
\begin{cases}
  W_t = R_t(L_t L_t^\top) & \text{(via block QR on } L_t\text{)}, \\[4pt]
  L_{t+1} = W_t^r\,L_t, & L_0 = L \;\text{(thin QR factor of } G^\top\text{)}, \\[4pt]
  C_{t+1} = W_t\,C_t, & C_0 = I.
\end{cases}
\end{equation}
The final result is $C_T^a\,G$. 

\paragraph{Classical connection.} This directly generalizes QDWH. Instead of exponentiating the accumulated preconditioner $M_t^r$, it exponents only the well-conditioned, single-step update $W_t^r$, improving forward-stability.

\paragraph{Complexity.}
\begin{itemize} \itemsep0em
  \item \textit{Init:} 1 thin QR of $G^\top$.
  \item \textit{Per step:} 1 QR; $(3 + \lceil\log_2 r\rceil)$ S-MMs.
  \item \textit{Final:} $\lceil\log_2 a\rceil$ S-MMs; 1 G-MM ($C_T^a G$).
\end{itemize}
Costs match Rational-Chol, but the numerical stability is superior because $W_t \approx I$.

\subsubsection{Remez Visualisations}

\begin{figure*}[htbp]
  \centering
  \begin{tabular}{cc}
    \includegraphics[width=0.49\textwidth]{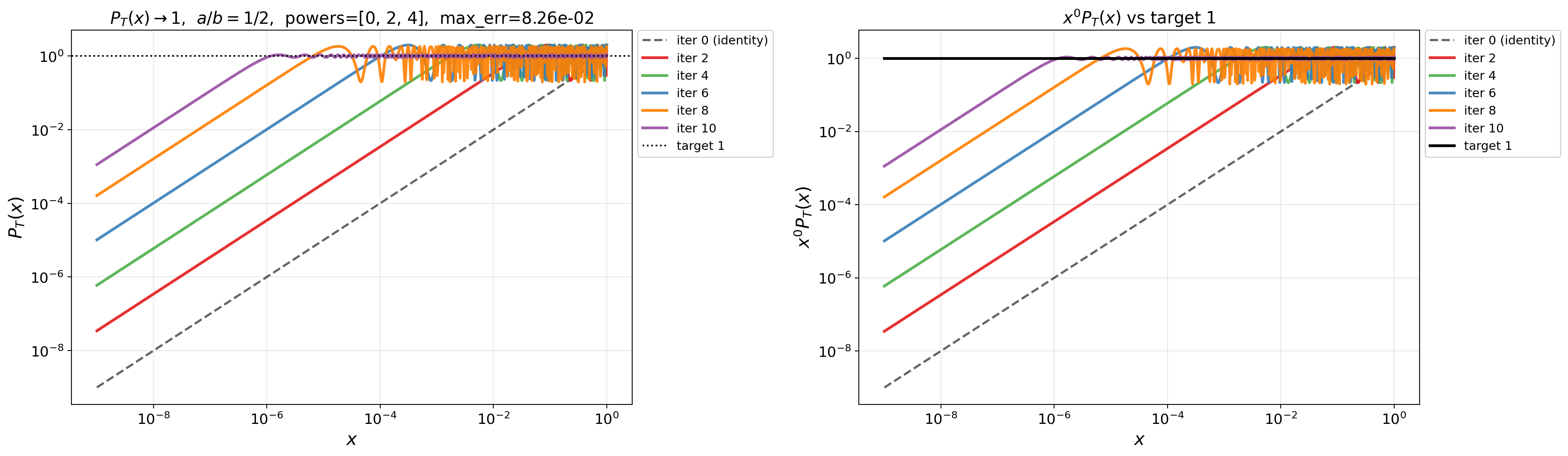} &
    \includegraphics[width=0.49\textwidth]{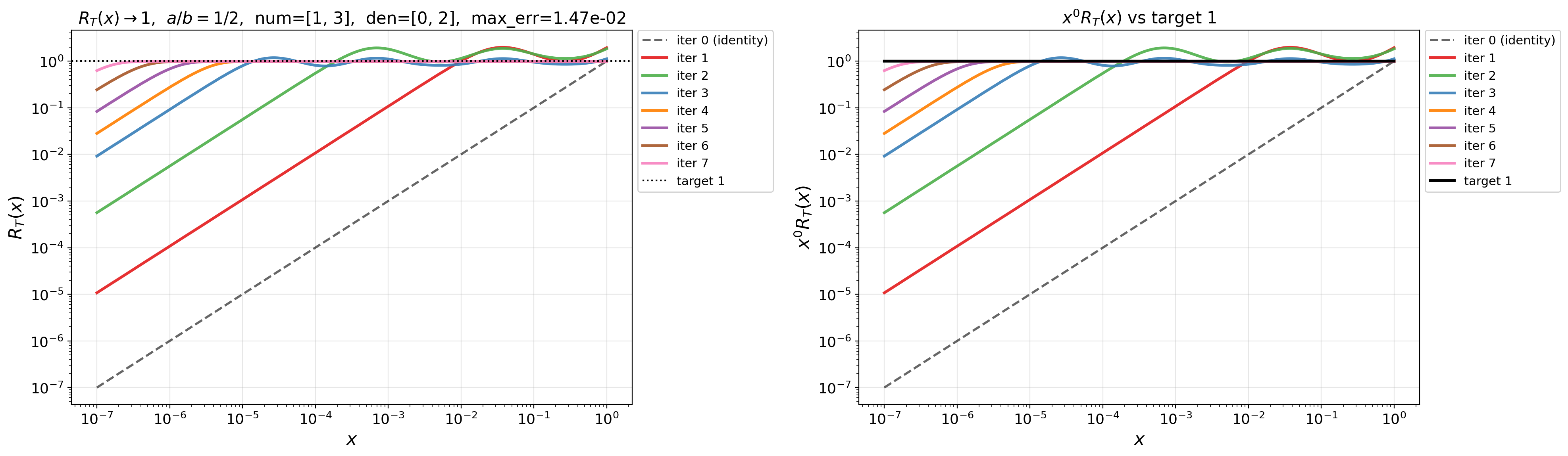} \\[2pt]
    \includegraphics[width=0.49\textwidth]{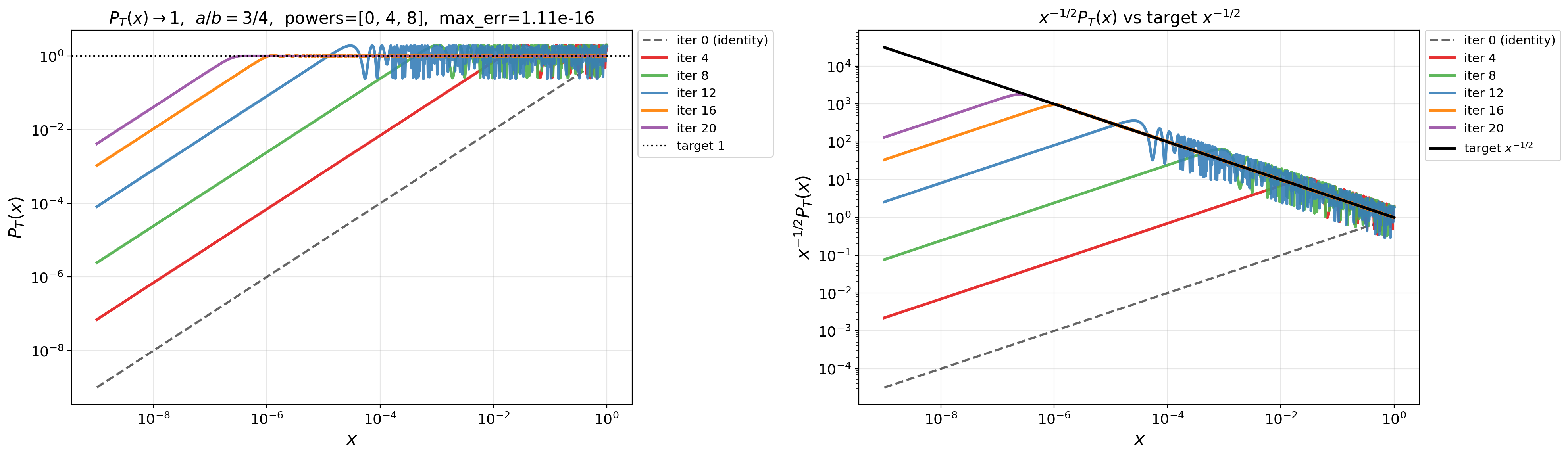} &
    \includegraphics[width=0.49\textwidth]{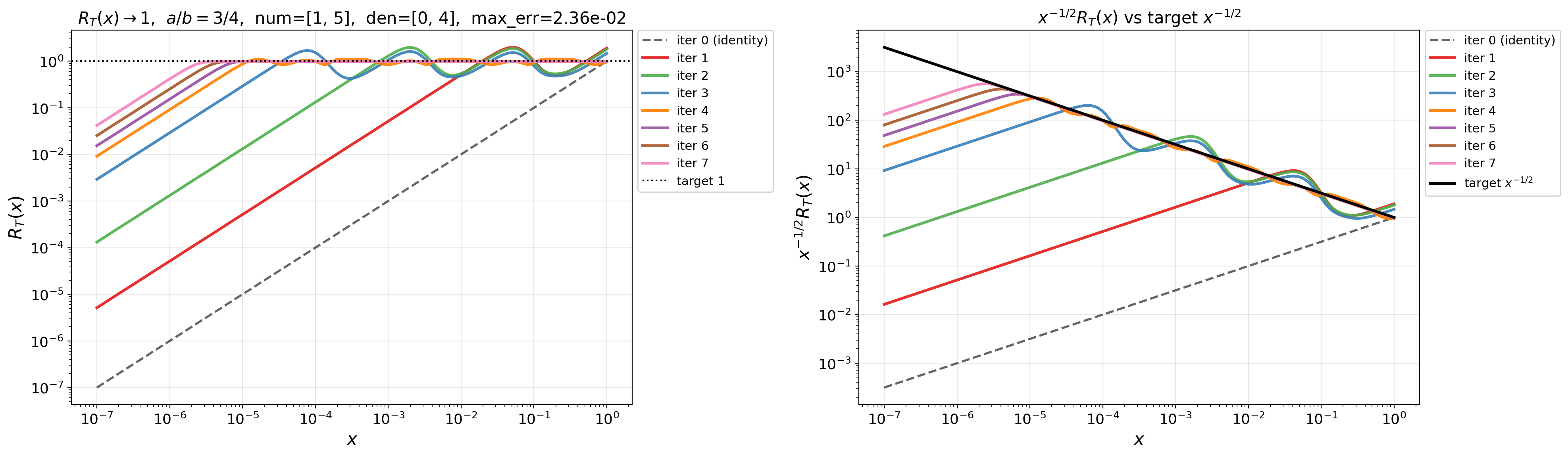} \\[2pt]
    \includegraphics[width=0.49\textwidth]{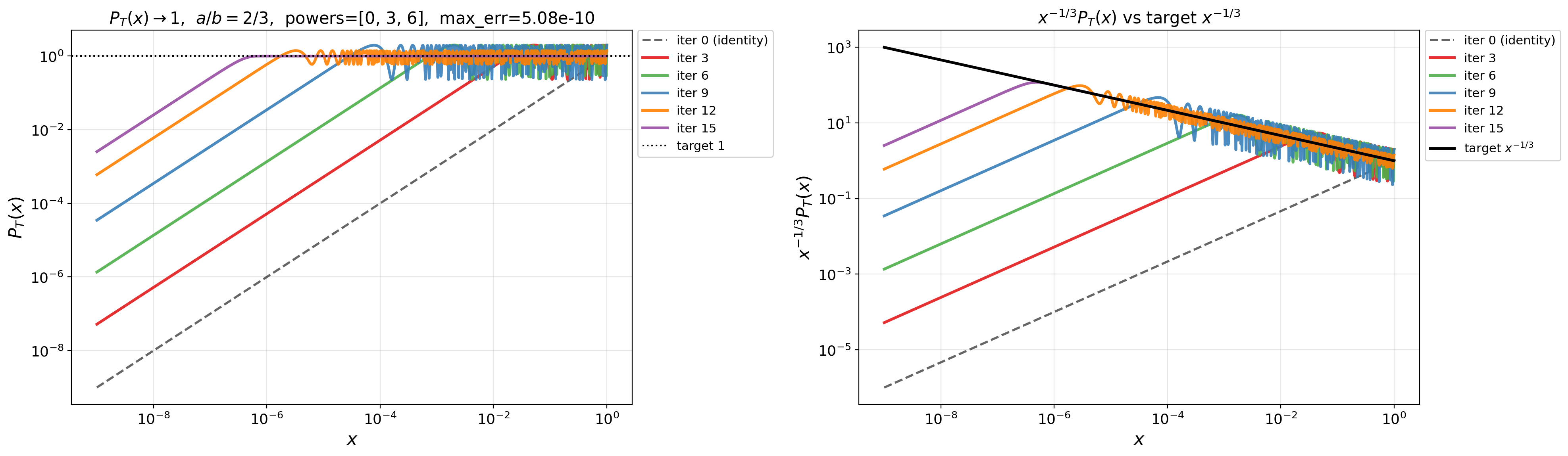} &
    \includegraphics[width=0.49\textwidth]{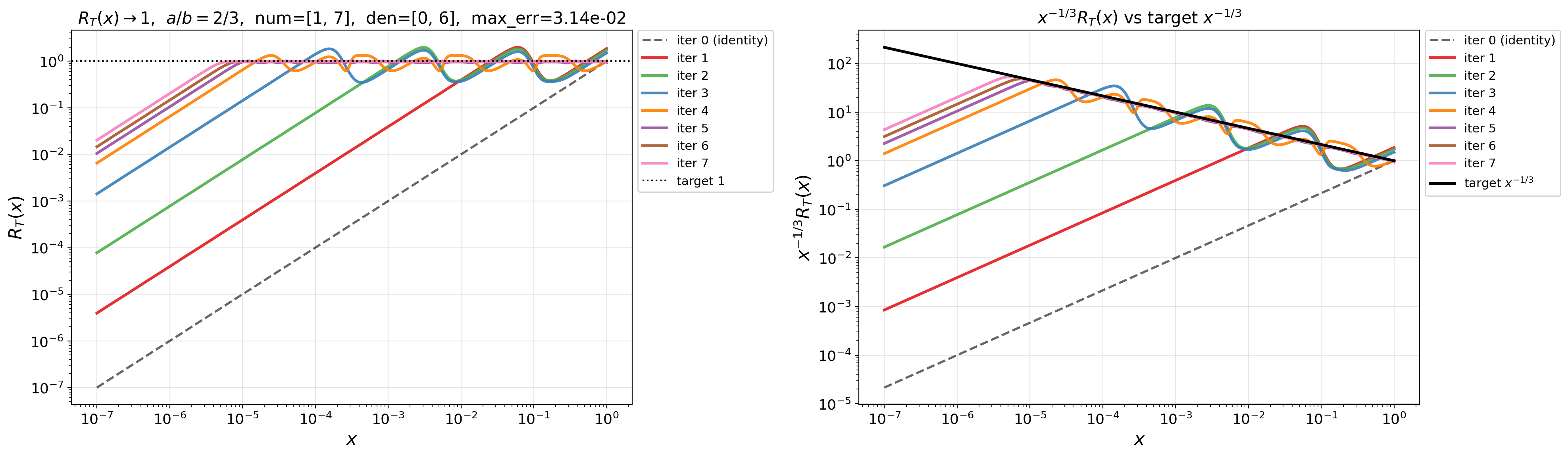} \\[2pt]
    \includegraphics[width=0.49\textwidth]{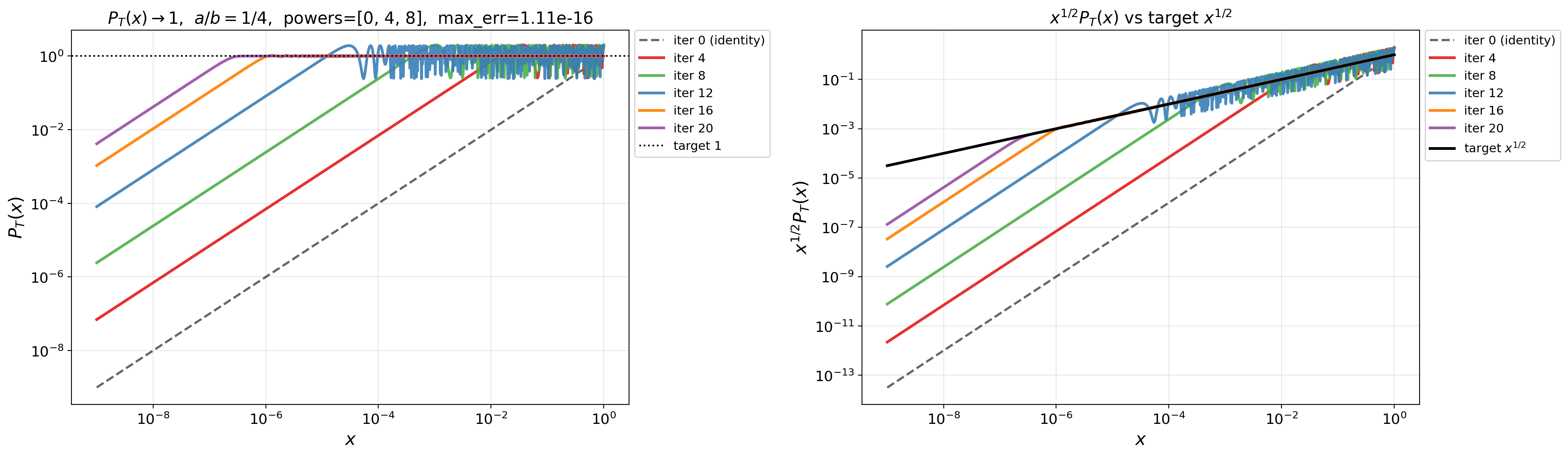} &
    \includegraphics[width=0.49\textwidth]{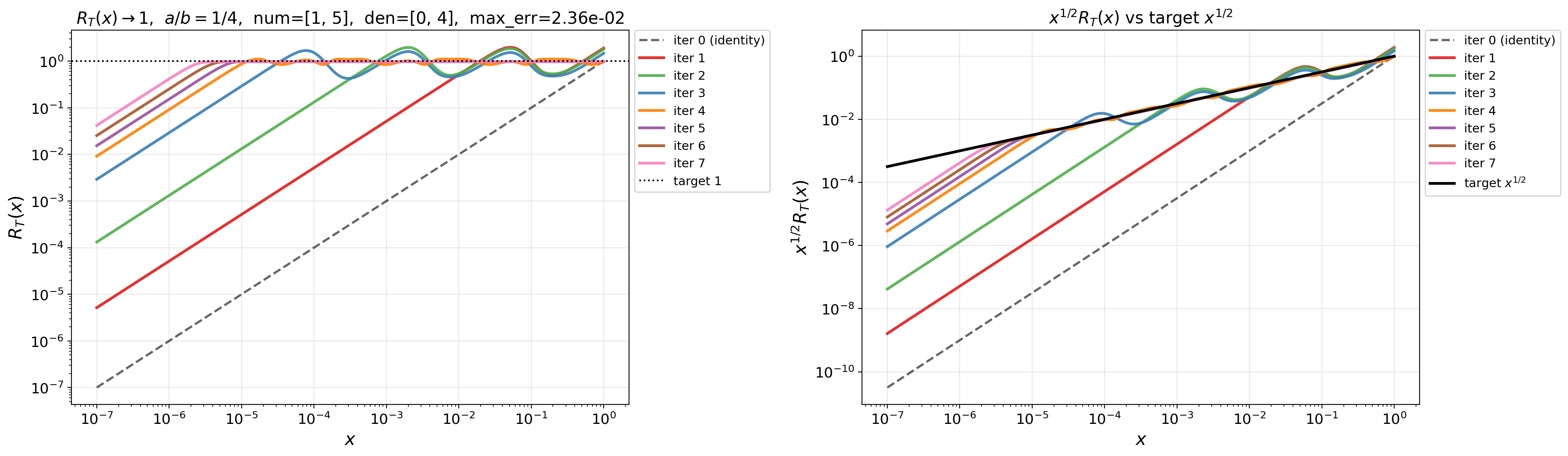} \\[2pt]
    \includegraphics[width=0.49\textwidth]{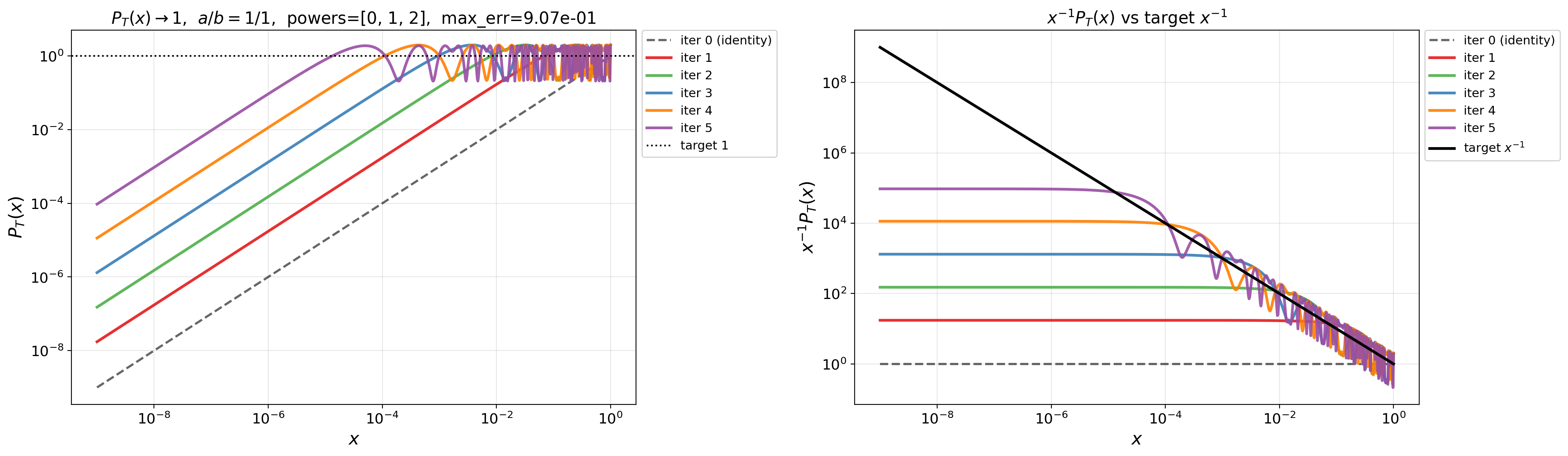} &
    \includegraphics[width=0.49\textwidth]{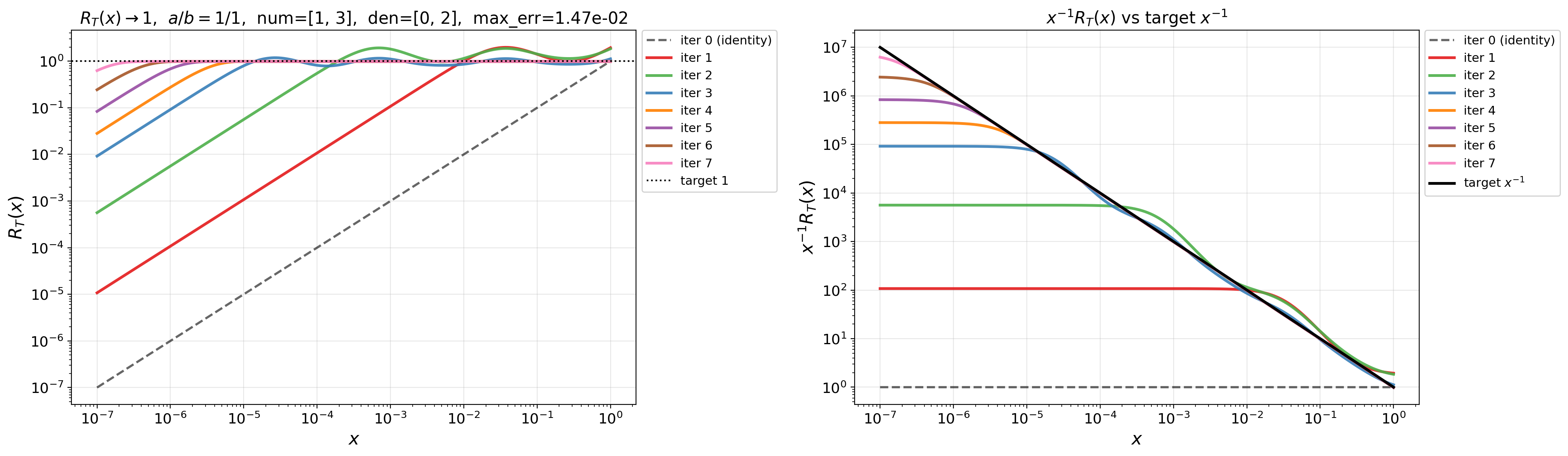} \\
  \end{tabular}
    \caption{Convergence of polynomial (left columns) and rational (right columns) Remez iterations for computing $(GG^\top)^{-a/b}G$, shown for $a/b\in{1/2,3/4,2/3,1/4,1/1}$. Each pair of panels shares an $x$-axis representing a scalar $x\in[\ell,1]$, which stands in for the singular values of the (normalised) input matrix after mapping to the unit interval. \emph{Left panel of each pair}: the raw iterated composition $P_T(x)$ (polynomial) or $R_T(x)$ (rational), which is initialised at the identity and should converge to $1$ on $[\ell,1]$ as the number of steps $T$ increases; curves for successive values of $T$ are overlaid, progressing from light to dark. \emph{Right panel of each pair}: the rescaled quantity $x^{1-2a/b}P_T(x)$ or $x^{1-2a/b}R_T(x)$ plotted against the target $x^{1-2a/b}$; since the singular values of the exact output $(GG^\top)^{-a/b}G$ are $\sigma_i^{1-2a/b}$, convergence of the composition to $1$ is equivalent to convergence of the rescaled curve to the target. Polynomial coefficients are obtained obtained by the Remez algorithm on $[\ell,1]$ with $\ell=10^{-16}$; rational coefficients are obtained by the Remez algorithm with $\ell=10^{-5}$. For rational approximations, exponents $a/b$ with odd denominator (e.g.\ $2/3$), the rational iteration internally doubles to $4/6$ to ensure the companion-form construction remains valid.}
  \label{fig:compositions}
\end{figure*}

\subsection{Picking Parameters for Freon Iterations: Cushion and l}

The rational Newton--Schulz iteration requires two hyperparameters: a cushion factor $\delta$ that stabilises the Remez denominator coefficients, and a lower spectral bound $l$ that determines how wide a range the iteration must cover. Both must be set as functions of $b$ to avoid either numerical blow-up or wasted iterations. Given the iterations wildly oscillate until $l$ gets mapped close to $1$, which are then stabilized $l$ is chosen such that the 5th iteration stabilizes. Cushion primarily controls the size of the resulting coefficients out of the fitting, with smaller cushions resulting in ever larger values of $\gamma$. For this reason we introduce an artificial numerics based bound of $10^5$. From empirical sweeps (\Cref{fig:ns-hyperparams}) we derive closed-form schedules $\delta(b) = (1.84 \times 10^{-8})^{1/b}$ and $l(b) = (10^{-11})^{2/b}$, which we use throughout without further tuning.

\begin{figure}[t]
\centering
\includegraphics[width=0.48\linewidth]{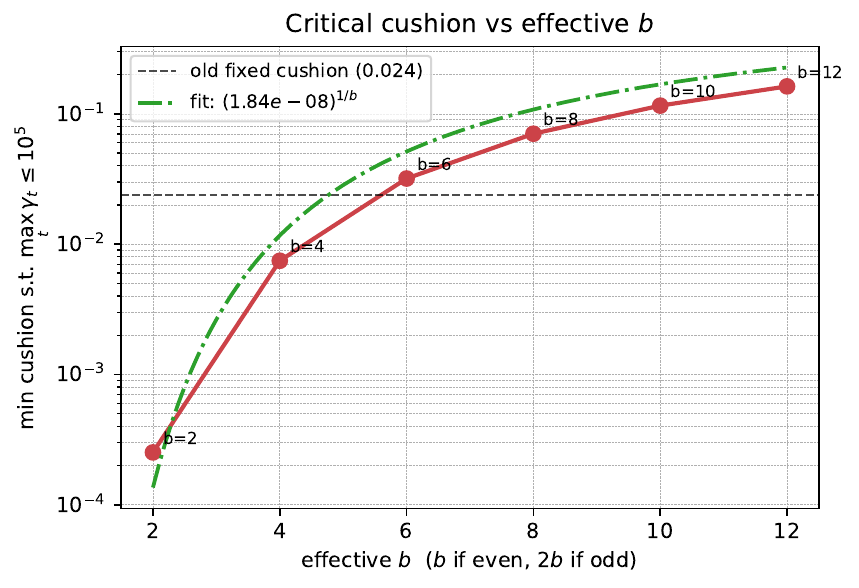}
\hfill
\includegraphics[width=0.48\linewidth]{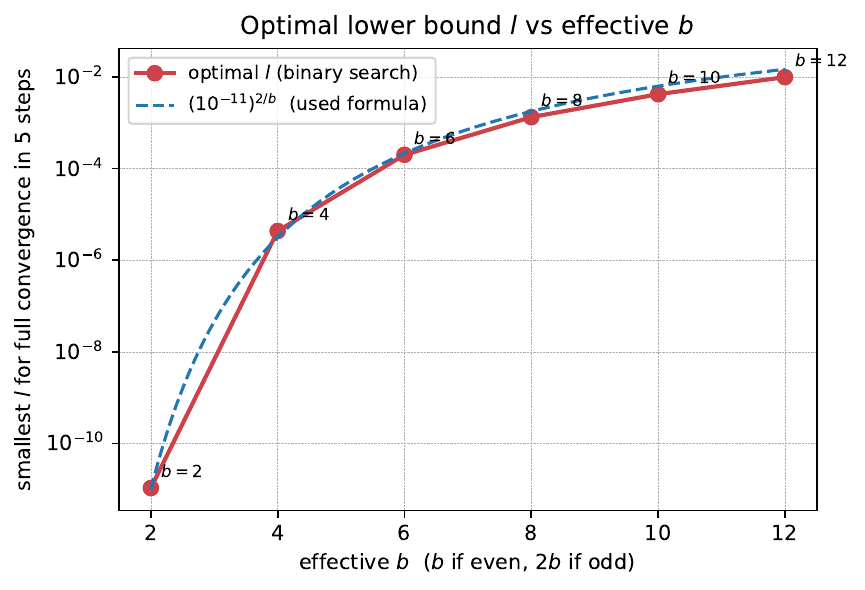}
\caption{
    Hyperparameter selection for the rational Newton--Schulz iteration.
    \textbf{(Left)} Minimum cushion factor required to keep the denominator
    coefficients bounded ($\max_t \gamma_t \leq 10^5$) as a function of
    effective~$q$ (where odd $q$ is replaced by $2q$).
    The fitted formula $(1.84 \times 10^{-8})^{1/q}$ safely exceeds
    the critical threshold for all $q$ and replaces the old fixed value of $0.024$.
    \textbf{(Right)} Smallest lower bound~$l$ such that the iteration
    fully converges within five Newton--Schulz steps, i.e.\ the final
    interval width reaches its asymptotic fixed point.
    The formula $(10^{-11})^{2/q}$ matches the empirical optimum across all
    tested values of~$q$ and ensures all steps contribute to convergence.
}
\label{fig:ns-hyperparams}
\end{figure}


\begin{algorithm}[t]
\caption{Coupled QDWH Cholesky Iteration for $(G G^\top)^{-a/b} G$ }
\label{alg:coupled_chol}
\SetAlgoLined
\DontPrintSemicolon
\KwIn{Gradient $G \in \mathbb{R}^{k \times n}$ ($k \le n$), powers $a, b \in \mathbb{N}$, rational coefficients $\{(\alpha_t, \beta_t, \gamma_t)\}_{t=1}^N$, number of steps $N$, regularization $\epsilon$.}
\KwOut{Approximation to $(G G^\top)^{-a/b} G$.}

\tcp{Ensure $b$ is even to yield an integer $r$ for the block QR step}
\If{$b \pmod 2 \neq 0$}{
    $a \gets 2a, \quad b \gets 2b$
}
$r \gets b / 2$\;

\tcp{Normalize gradient to improve numerical stability}
$\nu \gets \|G\|_F + \epsilon$\;
$G \gets G / \nu$\;

\tcp{Initialize $L$ via thin QR of augmented gradient}
$Q_0, R_0 \gets \text{QR}\left(\begin{bmatrix} G^\top \\ \sqrt{\epsilon} I_k \end{bmatrix}\right)$\;
$L \gets R_0^\top$ \tcp*{Yields $L L^\top = G G^\top + \epsilon I$}
$L_0 \gets L$ \tcp*{Saved for log-determinant; $\log\det(GG^\top) = 2\sum_i \log [L_0]_{ii} + 2k\log\nu$}
$C \gets I_k$ \tcp*{Accumulator}

\For{$t = 1$ \KwTo $N$}{
    \tcp{Compute $W = \alpha_t I + \beta_t A_t(I + \gamma_t A_t)^{-1}$ where $A_t = LL^\top$}
    $\rho_t \gets \beta_t / \gamma_t$\;

    \tcp{Construct Block Matrix $K$ optimally to avoid numeric blowup}
    \eIf{$\gamma \le 1.0$}{
        $K \gets \begin{bmatrix} \sqrt{\gamma_t} L^\top \\ I_k \end{bmatrix}$\;
    }{
        $K \gets \begin{bmatrix} L^\top \\ \frac{1}{\sqrt{\gamma_t}} I_k \end{bmatrix}$\;
    }

    $Q_K, R_K \gets \text{QR}(K)$\;
    $Q_2 \gets \text{bottom } k \times k \text{ block of } Q_K$\;
    $V \gets Q_2 Q_2^\top$ \tcp*{Identity: $V = (I + \gamma_t L L^\top)^{-1}$}
    $W \gets \rho_t I + (\alpha_t - \rho_t) V$\;

    $W \gets \frac{1}{2}(W + W^\top)$ \tcp*{Ensure numerical symmetry}

    \tcp{Coupled state updates}
    $L \gets W^r L$ \tcp*{Equivalent to: $L_{t+1}L_{t+1}^\top = W^b A_t$}
    $C \gets W C$ \tcp*{Accumulate product $\prod W_t$}
}

\tcp{Rescale and apply to gradient}
\KwRet{$\nu^{1 - 2a/b} C^a G$, $L_0$}
\end{algorithm}

\section{Further Optimizer Details}
\subsection{Kaon}\label{ap:Kaon}
\begin{figure}[h]
    \centering
    \begin{subfigure}[b]{\textwidth}
        \centering
        \includegraphics[width=\textwidth]{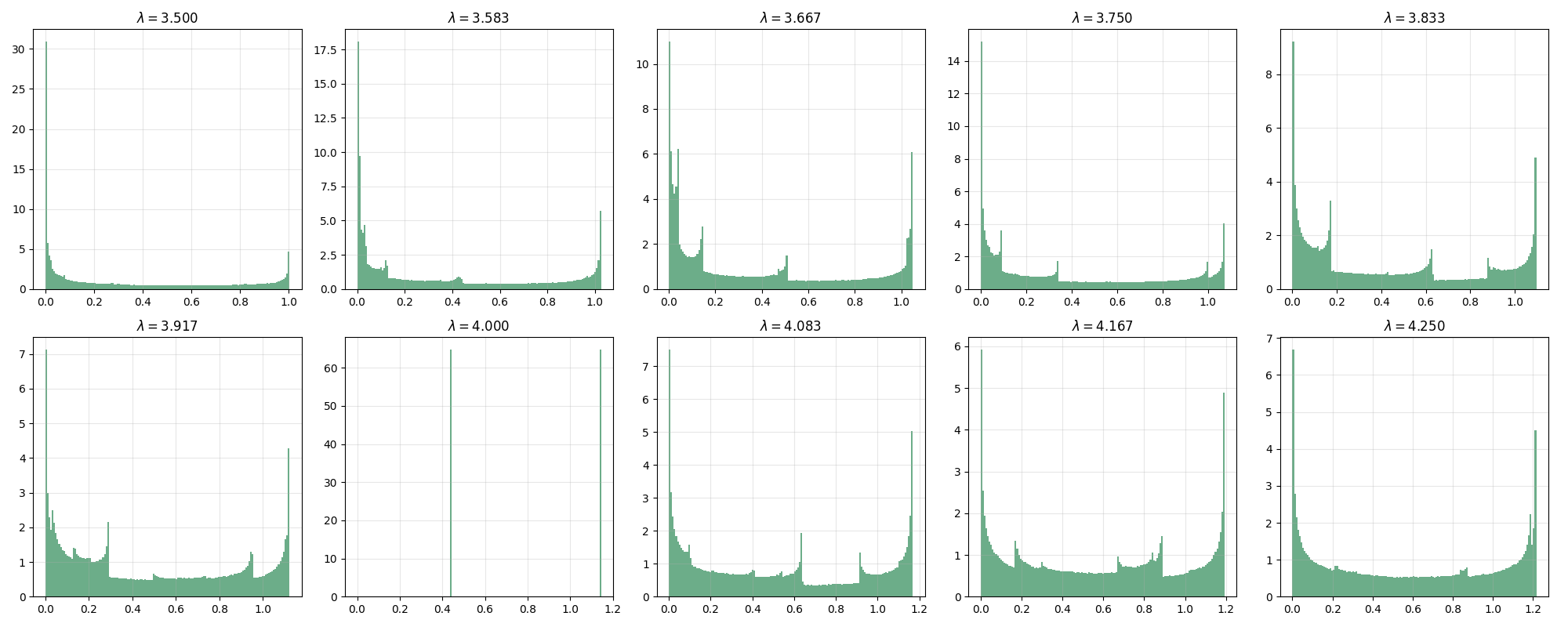}
        \caption{Empirical stationary distributions of the Kaon map
            $x_{n+1} = \lambda\,x_n(1-x_n^2)^2$ for ten values of $\lambda$
            uniformly spaced in $[3.5, 4.25]$.
            Each histogram is obtained by evolving $5{,}000$ particles
            for 500 burn-in iterations followed by 200 collection steps.}
        \label{fig:kaon_pdf_grid}
    \end{subfigure}
    \vspace{1em}
    \begin{subfigure}[b]{0.5\textwidth}
        \centering
        \includegraphics[width=\textwidth]{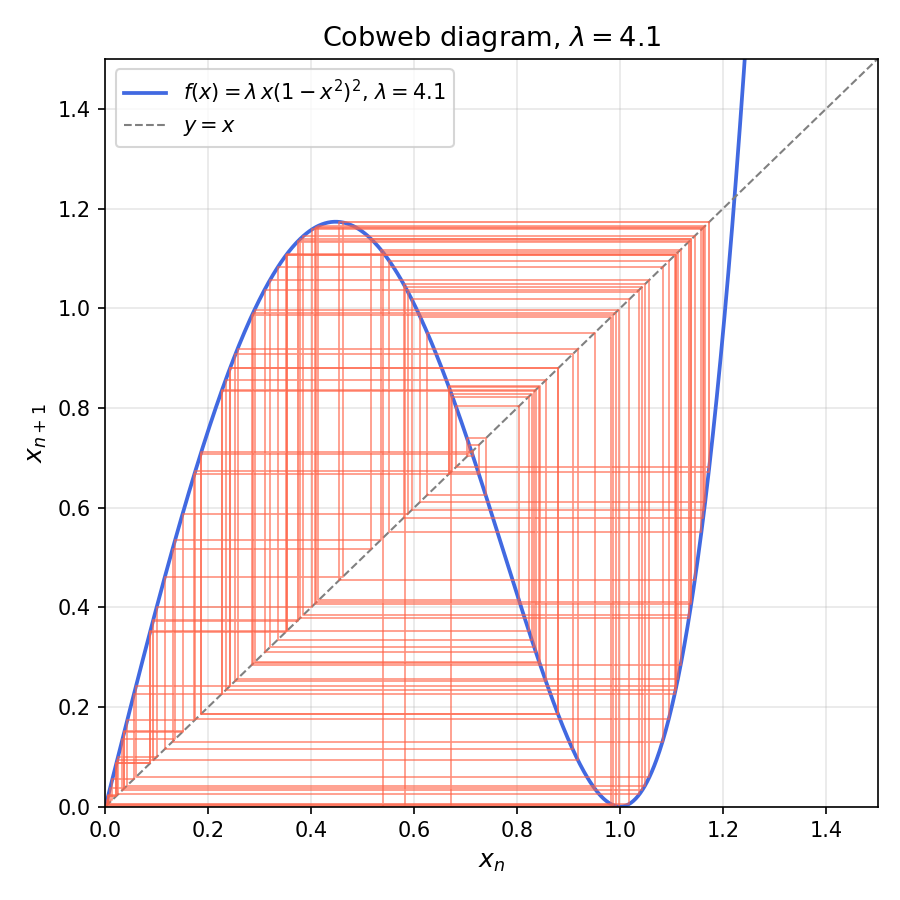}
        \caption{Cobweb diagram of the Kaon map at $\lambda=4.1$,
            showing three trajectories of 40 iterations each
            from independent random initial conditions in $(0.1,\,0.9)$.
            The map curve $y=f(x)$ (blue) and the diagonal $y=x$ (dashed)
            are overlaid; the chaotic wandering of the iterates is evident.}
        \label{fig:kaon_cobweb}
    \end{subfigure}
    \caption{Dynamics of the Kaon polynomial map
        $f(x) = \lambda\,x(1-x^2)^2$ used in the Kaon optimiser
        as a Newton-Schulz-style spectral normalisation step.}
    \label{fig:kaon_dynamics}
\end{figure}

\paragraph{Stationary Distribution}
The Kaon optimiser applies the polynomial map $f(x) = \lambda x(1-x^2)^2$ iteratively to the singular values of the gradient matrix as a cheap alternative to the exact polar factorisation used by Muon. Unlike Newton--Schulz iterations, which converge the singular values to $\pm 1$, the Kaon map at $\lambda=4.1$ operates in a chaotic regime: rather than converging, the iterates explore a broad stationary distribution supported on $(0,1.175)$. \Cref{fig:kaon_pdf_grid} displays the empirical stationary distributions for ten values of $\lambda$ spanning $[3.5,4.25]$, estimated by running $5{,}000$ particles through 500 burn-in followed by 200 collection steps of the map. The distributions shift and spread as $\lambda$ increases, with the support and shape varying considerably across the range. Figure~\ref{fig:kaon_cobweb} shows a cobweb diagram at $\lambda=4.1$, confirming the chaotic character of the dynamics: trajectories starting from different initial conditions in $(0.1,0.9)$ neither converge to a fixed point nor settle into a periodic orbit, but instead wander erratically through the attractor. This stochasticity in the effective preconditioner is an intrinsic feature of Kaon rather than a deficiency.

\paragraph{Cost of Kaon}
\Kaon uses the same number of steps $T$ as \Muon's Newton--Schulz iteration and matches its matrix-multiplication cost. Each step of both methods forms $XX^\top$, squares one $r\times r$ matrix, and multiplies the resulting degree-two matrix polynomial by $X$, costing $T(4r^2s + 2r^3)$ FLOPs to leading order.

\section{Random Feature Asymptotics}\label{app:LimitingQuantities}
In this section we show how to compute our two quantities in this asymptotic framework.

\begin{definition}
    The $i$th normalized trace is defined by
    \begin{equation*}
        \tau_i = \frac{1}{n}\tr(C^i)
    \end{equation*}
\end{definition}

\begin{theorem}
    Assume that $A = C^{1/2}Z$ with $C$ symmetric nonnegative definite with $\lim \sup \|C\|_{op} < \infty$, and $Z$ a matrix with entries which are iid Gaussians.
    
    Let $H = \frac{1}{b} AA^{\top}$. Then there exists universal polynomials $p_k$ such that we have almost surely
    \begin{equation*}
        \|V^{\top}H^kV - V^{\top} (p_k(C))V\|_{op} \rightarrow 0
    \end{equation*}

    The first three such polynomials are
    \begin{align*}
        p_1(C) &= C \\
        p_2(C) &= C^2 + \delta \tau_1 C \\
        p_3(C) &= C^3 + \delta \tau_1 C^2 + (\delta \tau_2 + \delta^2 \tau_1^2)C
    \end{align*}
\end{theorem}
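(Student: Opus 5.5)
The plan is a moment-method argument. First reduce to fixed-size quadratic forms, then compute their expectations by Wick's formula, and finally upgrade convergence in mean to almost sure convergence by concentration and Borel--Cantelli. Since $V$ collects the right singular vectors of $G\in\R^{o\times n}$ with $o$ fixed, $V^\top H^k V$ is an $o\times o$ matrix. Operator-norm convergence is therefore equivalent to entrywise convergence, and it suffices to show that for deterministic unit vectors $v,w$ (columns of $V$) we have $v^\top H^k w - v^\top p_k(C) w \to 0$ almost surely. Write $\tilde v = C^{1/2}v$ and $\tilde w = C^{1/2}w$. Then
\[
v^\top H^k w = b^{-k}\, \tilde v^\top Z Z^\top C Z Z^\top C \cdots C Z Z^\top \tilde w ,
\]
which is a polynomial of degree $2k$ in the i.i.d.\ Gaussian entries of $Z$. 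All vectors and matrices involved have norms bounded by powers of $\limsup\|C\|_{op}$.

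\textbf{Step 1 (expectation).} I would expand $\E[v^\top H^k w]$ by Wick's formula over pairings of the $2k$ Gaussian factors. Each pairing yields a product of normalized traces $\tau_j$ times a bilinear form $\tilde v^\top C^{m}\tilde w = v^\top C^{m+1} w$, with a power of $n/b\to\delta$ determined by the number of closed cycles. For $k=2$ this gives
\[
\E[v^\top H^2 w] = v^\top\bigl(C^2 + \tfrac{n}{b}\tau_1 C + \tfrac1b C^2\bigr)w ,
\]
which recovers $p_2$ up to $O(1/b)$. In general, the pairings of maximal order are the non-crossing ones, and they produce exactly the free multiplicative convolution of $C$ with the Marchenko--Pastur law in its ``anisotropic'' form. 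Organizing these by non-crossing partitions gives the universal polynomials $p_k$, and $p_3$ can be checked by direct enumeration. Every crossing pairing loses at least one factor of $n$ or $b$. Since the number of pairings depends only on $k$ and each term is bounded by $\|C\|_{op}^{k}$ times bounded powers of $\delta$, the crossing pairings contribute $O(1/n)$ in total.

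\textbf{Step 2 (concentration).} Next I would show that $\operatorname{Var}(v^\top H^k w) = O(1/n)$, and more precisely that the fourth central moment is $O(n^{-2})$. Two routes are available. One is Gaussian hypercontractivity for polynomials of bounded degree, which bounds the $p$-th central moment by $(p-1)^{kp}$ times $\operatorname{Var}^{p/2}$. The other is Gaussian Poincaré combined with a gradient bound: the gradient of $Z\mapsto v^\top H^k w$ has Frobenius norm $O(n^{-1/2})$ on the event $\{\|ZZ^\top/b\|_{op}\le K\}$, and by Bai--Yin this event has overwhelming probability. With either route, Markov's inequality at fourth order gives deviations of size $n^{-1/2+\epsilon}$ with summable probabilities, and Borel--Cantelli then yields almost sure convergence. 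Combining this with Step 1 proves the theorem.

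\textbf{Main obstacle.} I expect the hard part to be the combinatorial bookkeeping in Step 1. One must show uniformly in $n$ that the leading pairings are exactly the non-crossing ones and that they assemble into polynomials $p_k$ in $C$ whose coefficients are built from $\delta$ and the $\tau_j$. A secondary difficulty is that the bound on the subleading pairings must be in operator norm and not merely in trace. My first attempt would be a genus expansion in which the two deterministic endpoints $\tilde v,\tilde w$ are treated as an ``open'' boundary. If that proves unwieldy, I would fall back on the anisotropic deterministic equivalent for the resolvent $(H-z)^{-1}$ and extract the moments as Taylor coefficients at $z=\infty$.
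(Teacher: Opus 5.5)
Your route is genuinely different from the paper's and, apart from the $p_3$ issue below, sound. The paper does no combinatorics. It writes $\langle x,H^k y\rangle$ as a contour integral of the resolvent over a circle of radius $R>\|H\|_{op}$, with $R$ chosen via the Marchenko--Pastur edge. It then substitutes Couillet--Liao's anisotropic deterministic equivalent $-z^{-1}(I+\tilde m_p(z)C)^{-1}$, inverts the self-consistent equation for $\tilde m_p$ as a Laurent series in $z^{-1}$, and reads off $p_k$ as the $z^{-k}$ coefficient of $(I+\tilde m_p C)^{-1}$. This is exactly your fallback. It gives a mechanical derivation of every $p_k$ and inherits almost-sure convergence from the cited theorem. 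The cost is a heavy external result plus an unstated step: uniform convergence in $z$ on the contour, which is needed to pass the limit through the integral. Your Wick-plus-concentration argument is self-contained and quantitative, giving $O(1/n)$ bias and $O(n^{-1/2})$ fluctuations. For the existence of universal $p_k$ you also need less than a full identification of the leading pairings. The Euler-characteristic bound, (number of free column indices $\mu$) $+$ (number $s$ of closed row-index cycles) $\le k$ for every pairing, already shows that each Wick term has the form $(n/b)^{s}\,b^{-e}\prod_j\tau_j\, v^\top C^{m}w$ with $e\ge 0$. Then $p_k$ is the sum of the $e=0$ terms. Your $k=2$ expectation is correct.

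The one thing that will not go through is your claim that $p_3$ ``can be checked by direct enumeration'': the enumeration contradicts the stated polynomial. Write $ZZ^\top=\sum_\mu z_\mu z_\mu^\top$. In $b^{-3}\sum_{\mu,\nu,\rho}\tilde v^\top z_\mu z_\mu^\top C z_\nu z_\nu^\top C z_\rho z_\rho^\top\tilde w$, the two non-crossing patterns $\mu=\nu\ne\rho$ and $\nu=\rho\ne\mu$ each contribute $\frac{n}{b}\tau_1\, v^\top C^2 w$. So the correct limit is $p_3(C)=C^3+2\delta\tau_1C^2+(\delta\tau_2+\delta^2\tau_1^2)C$. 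Three independent checks confirm this:
\begin{itemize}
\item The paper's own resolvent expansion gives the same once carried out, since $\tilde m_p^2C^2$ contributes $2\delta\tau_1 z^{-3}C^2$.
\item The case $C=I$ must reproduce the Marchenko--Pastur third moment $1+3\delta+\delta^2$, not $1+2\delta+\delta^2$.
\item The paper's later formulas, $\Lambda^3+2\delta\Lambda^2+\cdots$ and $D_i=(1+3\delta+\delta^2)/(1+\delta)$, use the factor $2$.
\end{itemize}
So the statement has a typo, and you should prove the corrected $p_3$.

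There is also a minor slip in Step 2. A fourth central moment of order $n^{-2}$ only gives $\Pr(|F-\mathbb{E}F|>n^{-1/2+\epsilon})=O(n^{-4\epsilon})$, which is not summable for small $\epsilon$. Use a fixed threshold $\eta>0$ instead, giving probabilities $O(\eta^{-4}n^{-2})$, or use the higher moments supplied by hypercontractivity.
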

\begin{proof}
    Choose an orthonormal basis $e_i$ of $\mathbb{R}^r$, as $V$ is an $n \times r$ matrix. It is sufficient to bound this operator norm by bounding the inner product over all vectors forming the orthonormal basis. We thus fix $x = Ve_i$ and $y = Ve_j$ and compute the inner product $\langle x, Hy\rangle$. By Cauchy's Integral Formula, we have
    \begin{equation*}
        \langle x, H^k y\rangle = \frac{1}{2\pi i} \oint_\Gamma z^k \langle x, (W - zI)^{-1}y\rangle dz
    \end{equation*}
    where $\Gamma$ is a contour which encloses the spectrum of $H$. First we explicitly choose the contour for almost every $H$. Note that
    \begin{equation*}
        H = \frac{1}{b} C^{1/2}ZZ^{\top}C^{1/2}
    \end{equation*}
    where $Z$ has iid standard Gaussian columns. Therefore
    \begin{equation*}
        \|H\|_{op} \leq \|C\|_{op} \|\frac{1}{b} ZZ^{\top}\|_{op} = K\|\frac{1}{b}ZZ^{\top}\|_{op}
    \end{equation*}
    By standard Marchenko-Pasteur theory, the spectrum of $\frac{1}{b}ZZ^{\top}$ is almost surely eventually contained in the Marchenko-Pasteur bulk $[(1 - \sqrt\delta)^2, (1+\sqrt\delta)^2]$. Therefore almost surely
    \begin{equation*}
        \|H\|_{op} \leq K((1+\sqrt\delta)^2 + o(1))
    \end{equation*}
    Choose $R > \|H\|_{op}$. Take the contour $\Gamma$ to be the circle of radius $R$.

    Now we use \cite[Theorem 2.6]{Couillet_Liao_2022}. Then we have, similar to Equation 2.46 of loc. cit, 
    \begin{equation*}
        \langle x, H^ky\rangle = \frac{1}{2\pi i} \oint_{\Gamma} z^{k-1} \langle x, (I + \tilde{m}_p(z)C)^{-1}y\rangle dz + o(1)
    \end{equation*}
    where $\tilde{m}_p(z)$ solves the equation
    \begin{equation}\label{eq:mConsistencyEq}
        z = -\frac{1}{\tilde{m}_p(z)} + \frac{\delta}{n} \tr(C(I + \tilde{m}_p(z)C)^{-1})
    \end{equation}
    By expanding $\tilde{m}_p(z)$ in a power series in $z^{-1}$ we obtain a Laurent series. Combining this with the expansion for $(I + \tilde{m}_p(z)C)^{-1}$ gives a Laurent series in $z^{-1}$ with coefficients which are polynomials in $C$. By standard complex analysis, the integral simply selects one of these coefficients. This gives us the result.

    Now we explicitly compute the first three polynomials. Expanding the second term of \Cref{eq:mConsistencyEq} we have
    \begin{equation*}
        \frac{1}{n}\tr(C(I + \tilde{m}_p(z)C)^{-1}) = \tau_1 - \tilde{m}_p(z) \tau_2 + \tilde{m}_p(z)^2 \tau_3 + O(\tilde{m}_p(z)^3)
    \end{equation*}

    This allows us to solve
    \begin{equation*}
        \tilde{m}_p(z) = -\frac{1}{z} - \frac{\delta\tau_1}{z^2} - \frac{\delta\tau_2 + \delta^2 \tau_1^2}{z^3} + O(z^{-4})
    \end{equation*}

    For large $z$, $\tilde{m}_p(z)$ is small hence we may expand
    \begin{equation*}
        (I + \tilde{m}_p(z)C)^{-1} = I - \tilde{m}_p(z)C + (\tilde{m}_p(z)C)^2 - (\tilde{m}_p(z)C)^3 + O(C^4)
    \end{equation*}
    Then by plugging in the expansion for $\tilde{m}_p(z)$ we obtain exactly the polynomials written above.
\end{proof}

\begin{remark}
    We can extend this same theorem, with the same polynomials to $Z$ having entries independent mean zero, unit variances random variables, with a minor condition on the tails, but to keep with the main text we do not expand on this condition in detail.
\end{remark}

    




In order to actually evaluate the limit in this theorem we need to assume that for $i = 1,2,3$ the following limit exists
\begin{equation*}
    V^{\top}C^iV \rightarrow \Gamma_i
\end{equation*}

Then by the above theorem we have that for $i = 1, 2, 3$, and the polynomials as defined above
\begin{equation*}
    V^{\top}H^iV \rightarrow p_i(\Gamma_i, \dots, \Gamma_1)
\end{equation*}
We define for $i = 2,3$, $\widehat{\Gamma}_i = p_i(\Gamma_i, \dots, \Gamma_1)$.

Then almost surely
\begin{equation}
    \gamma_{\sgd} \rightarrow \frac{\tr(\Sigma^2\Gamma_1)}{\tr(\Sigma^2\widehat{\Gamma}_2)}
\end{equation}
\begin{equation}
    \Phi_{\sgd} \rightarrow \frac{\tr(\Sigma^2\widehat{\Gamma})^2}{\tr(\Sigma^2\widehat{\Gamma}_3)}
\end{equation}

    

For Muon we have
\begin{equation*}
    D = \operatorname{polar}(GH) = (GH^2G^{\top})^{-1/2}GH
\end{equation*}
Define $T^i = V^{\top}H^iV$, so that almost surely $T^1 \rightarrow \Gamma_1$ and for $i = 2,3$ almost surely $T^i \rightarrow \widehat{G}_i$. Then 
\begin{align}
    \gamma_{\Muon} &= \frac{\tr((GH^2G^{\top})^{-1/2}GHG^{\top})}{\tr((GH^2G^{\top})^{1/2})} = \frac{\tr((\Sigma T_2 \Sigma)^{-1/2}\Sigma T_1 \Sigma)}{\tr((\Sigma T_2 \Sigma)^{1/2})} \\ &\rightarrow \frac{\tr((\Sigma\widehat{\Gamma}_2\Sigma)^{-1/2}\Sigma\Gamma_1\Sigma)}{\tr((\Sigma\widehat{\Gamma}_2\Sigma)^{1/2})}
\end{align}
\begin{align}
    \Phi_{\muon} &= \frac{\tr((GH^2G^{\top})^{1/2})^2}{\tr((GH^2G^{\top})^{-1}GH^3G^{\top})} = \frac{\tr((\Sigma T_2 \Sigma)^{1/2})^2}{\tr((\Sigma T_2 \Sigma)^{-1}\Sigma T_3 \Sigma)} \\ &\rightarrow \frac{\tr((\Sigma\widehat{\Gamma}_2\Sigma)^{1/2})^2}{\tr((\Sigma\widehat{\Gamma}_2\Sigma)^{-1}(\Sigma\widehat{\Gamma}_3\Sigma)} 
\end{align}

\subsection{The Diagonal Case}
If we assume that $C$ is always aligned with the right singular vector basis of $R$, i.e. that $V^{\top}CV = \Lambda$ for some diagonal matrix $\Lambda$ then we can say a lot more about these various formulas.

Indeed we obtain,
\begin{equation*}
    \Gamma_i = \lim V^{\top} C^i V = \Lambda^i
\end{equation*}

We obtain thusly
\begin{equation}
    \gamma_{\sgd} \rightarrow \frac{\tr(\Sigma^2 \Lambda)}{\tr(\Sigma^2(\Lambda^2 + \delta  \Lambda))}
\end{equation}
\begin{equation}
    \Phi_{\sgd} \rightarrow \frac{\tr(\Sigma^2(\Lambda^2 + \delta \Lambda))^2}{\tr(\Sigma^2(\Lambda^3 + 2\delta \Lambda^2 + (\delta+\delta^2)\Lambda))}
\end{equation}
\begin{equation}
    \gamma_{\Muon} \rightarrow \frac{\tr(\Sigma (\Lambda^2 + \delta \Lambda)^{-1/2} \Lambda)}{\tr(\Sigma(\Lambda^2 + \delta\Lambda)^{1/2})}
\end{equation}
\begin{equation}
    \Phi_{\muon} \rightarrow \frac{\tr(\Sigma(\Lambda^2 + \delta\Lambda)^{1/2})^2}{\tr((\Lambda^2 + \delta \Lambda)^{-1}(\Lambda^3 + 2\delta \Lambda^2 + (\delta + \delta^2)\Lambda))}
\end{equation}

Define $x_i = \sigma_i \sqrt{\lambda_i(\lambda_i + \delta)}$ and $h_i = \frac{1}{\lambda_i + \delta}$. Then we rewrite the alignments as
\begin{equation}
    \gamma_{\muon} = \frac{\sum x_i h_i}{\sum x_i}
\end{equation}
\begin{equation}
    \gamma_{\sgd} = \frac{\sum x_i^2 h_i}{\sum x_i^2}
\end{equation}

For $\Phi$ define
\begin{equation*}
    D_i = \frac{\lambda_i^3 + 2\delta\lambda_i^2 + (\delta + \delta^2)\lambda_i}{\lambda_i^2 + \delta\lambda_i}
\end{equation*}
Then we get
\begin{equation}
    \Phi_{\muon} = \frac{(\sum x_i)^2}{\sum D_i}
\end{equation}
\begin{equation}
    \Phi_{\sgd} = \frac{(\sum x_i^2)^2}{\sum x_i^2 D_i}
\end{equation}

\propalignment
\begin{proof}
    Because we have that the $\lambda_i$ are in non-increasing order, and the $\sigma_i$ are singular values hence are in non-increasing order, we obtain that the $x_i$ are also in non-increasing order. Finally, the $h_i$ are in non-decreasing order as $h_i$ is decreasing in $\lambda_i$.

    Then
    \begin{equation*}
        \gamma_{\muon} - \gamma_{\sgd} = \sum_{i,j} x_i x_j^2 h_i - \sum_{i, j}x_i x_j^2 h_j = \sum_{i < j} x_ix_j(x_j - x_i)(h_i - h_j) \geq 0
    \end{equation*}
    Equality holds if and only if one of the two sequence $x_i$ or $h_i$ are not strictly increasing or decreasing. This is equivalent to one of the sequence $\sigma_i$ and $\lambda_i$ being not strictly increasing.
\end{proof}

\propIsotropic
\begin{proof}
    We simply compute that
    \begin{equation*}
        D_i = \frac{1 + 3\delta + \delta^2}{1 + \delta}
    \end{equation*}
    Therefore
    \begin{equation*}
        \Phi_{\sgd} = \frac{(1+\delta)^2}{1+3\delta+\delta^2}\sum \sigma_i^2
    \end{equation*}
    \begin{equation*}
        \Phi_{\muon} = \frac{(1+\delta)^2}{1 + 3\delta + \delta^2}\frac{(\sum \sigma_i)^2}{d}
    \end{equation*}
    Then Cauchy-Schwartz gives $\Phi_{\muon} \leq \Phi_{\sgd}$.
\end{proof}

\thmdescent
\begin{proof}
    For $\delta \rightarrow \infty$ we have
    \begin{equation*}
        \lim_{\delta \rightarrow \infty} \gamma_{\muon}^2\Phi_{\Muon} = \frac{1}{\delta^2} \frac{(\sum \sigma_i \lambda_i^{1/2})^2}{d}
    \end{equation*}
    \begin{equation*}
        \lim_{\delta \rightarrow \infty} \gamma_{\sgd}^2\Phi_{\sgd} = \sum \sigma_i^2 \lambda_i
    \end{equation*}
    Then by Cauchy-Schwartz, we have
    \begin{equation*}
        \lim_{\delta \rightarrow \infty} \gamma_{\muon}^2\Phi_{\muon} \leq \lim_{\delta \rightarrow \infty} \gamma_{\sgd}^2\Phi_{\sgd}
    \end{equation*}
\end{proof}

\section{Random Feature Experiments}
\subsection{Detailed Setup}\label{ap:rf_details}
For the Random Feature experiments considered in this paper, we utilize the setup and code from \citet{davis2026spectralgradientupdateshelp}, provided in \href{https://github.com/damek/specgd/}{https://github.com/damek/specgd/}. The specific setup is as follows: We consider a random feature regression problem with objective
$ f(W) = \frac{1}{N\sqrt{d}}\|WA - Y\|_F^2$,
where $W \in \mathbb{R}^{o \times d}$ is the weight matrix,
$A \in \mathbb{R}^{d \times N}$ is a fixed random feature matrix,
and $Y = W^\star A$ for a randomly initialised ground-truth $W^\star$.
Since the loss is exactly quadratic in $W$, the gradient is
$G = \nabla_W  f = \frac{2}{N\sqrt{d}}(W - W^\star)AA^\top$
and the Hessian action along any direction $D$ is
$\langle D, \nabla^2 f[D]\rangle = \frac{2}{N\sqrt{d}}\|DA\|_F^2$.
Consequently, the exact optimal step size for any update direction $D$ is
\begin{equation}
    \eta^\star(D) = \frac{\langle G, D\rangle}{\frac{1}{\sqrt{d}}\|DA\|_F^2}.
\end{equation}

We run two experiments with this setup.
In the \textbf{ReLU} experiment, $A$ is obtained by applying a ReLU activation
to a random linear map of $N=400$ inputs, yielding $W \in \mathbb{R}^{120\times100}$,
$A \in \mathbb{R}^{100\times400}$, stable rank $\mathrm{st}(A)\approx 3.06$,
$\|A\|_{\mathrm{op}}\approx 80.87$, and $\|A\|_F\approx 141.40$;
training runs for $1000$ iterations.
In the \textbf{SwiGLU} experiment the same dimensions are used but the activation
is SwiGLU, giving $\mathrm{st}(A)\approx 29.96$, $\|A\|_{\mathrm{op}}\approx 22.15$,
and $\|A\|_F\approx 121.22$; training runs for $400$ iterations.
Both experiments use base learning rate $\eta=10^{-2}$, double precision (\texttt{float64}), and fixed random seed.
The Optimal~C methods select $c$ and $\eta$ jointly at each step by maximising
the quadratic lower bound $\Delta f(c,\eta) = -\eta\,a(c) + \eta^2 b(c)$
over a grid, where $a(c)=\langle G, D_c\rangle$ and $b(c)=\frac{1}{\sqrt{d}}\|D_c A\|_F^2$
with $D_c = (GG^\top)^{-c}G$. \Freon is implemented using SVD, while \Kaon is implemented with SVD with random uniform noise.

\subsection{Exponent Approximation for Optimal Schatten Norm}\label{ap:rf_approx}
We are in the random feature model setting and assume to have $G \in \mathbb{R}^{m \times n}$ be gradient with a compact Singular Value Decomposition $G = U S V^\top$, where $S = \text{diag}(\sigma_1, \dots, \sigma_r)$. We seek a spectrally scaled matrix $D = U S^a V^\top$ that maximizes optimal decrease $\Phi_k$.

The objective is to maximize decrease at optimal step size, for $\|DA\|_F^2 = \sum k_i \sigma_i^{2a}$, where $k_i = \|A^\top v_i\|^2$ represents the projection energy of an activation matrix $A$:
\begin{equation}
    \max_{a} g(a) = \frac{\left( \sum_{i=1}^r \sigma_i^{a+1} \right)^2}{\sum_{i=1}^r k_i \sigma_i^{2a}}
\end{equation}
Setting the derivative of $\log g(a)$ to zero yields the stationarity condition for optimality:
\begin{equation}
    \frac{\sum_{i=1}^r \sigma_i^{a+1} \log \sigma_i}{\sum_{i=1}^r \sigma_i^{a+1}} = \frac{\sum_{i=1}^r k_i \sigma_i^{2a} \log \sigma_i}{\sum_{i=1}^r k_i \sigma_i^{2a}}
\end{equation}

To approximate an optimal $a$ value, we will assume the singular values of $G$ and the penalty weights follow a power-law decay:
\begin{equation}
    \sigma_i \propto i^{-\alpha} \quad (\alpha > 0), \qquad k_i \propto i^{-\beta}
\end{equation}
In this case, equality is achieved by equating exponents of LHS and RHS, i.e. $\alpha(a+1) = 2a\alpha + \beta$, meaning that
\[
a = 1 - \frac{\beta}{\alpha}
\]

\subsection{Optimal C methods}
Both adaptive-exponent methods take the update direction
$D_c = (GG^\top)^{-c}G = U\,\mathrm{diag}(\tilde{\sigma}^{1-2c})\,V^\top$,
where $G = U\Sigma V^\top$ is the SVD of the gradient and
$\tilde{\sigma}_i = \sigma_i/\mu_c$ are the singular values normalised by the
power mean $\mu_c = \bigl(\tfrac{1}{r}\sum_i \sigma_i^{2(1-c)}\bigr)^{1/(2(1-c))}$.
The optimal step size $\eta^\star(c) = n\,a(c)/b(c)$ is used exactly at every
step, where $a(c) = \sum_i \sigma_i\,\tilde{\sigma}_i^{1-2c}$ and
$b(c) = \|D_c A\|_F^2/\sqrt{d}$.

\paragraph{Optimal C [Greedy]} determines $c$ by exhaustive search: at each
iteration a grid of 41 values $c \in [-0.5, 1.5]$ is evaluated and the value
minimising the quadratic predicted loss decrease
$\Delta f^\star(c) = -n\,a(c)^2/(2\,b(c))$ is selected.

\paragraph{Optimal C [Scaling]} instead infers $c$ analytically from the spectral
structure of the gradient and the feature matrix based on \Cref{ap:rf_approx}.
Fitting $\log\sigma_i \approx -\alpha\log i$ and $\log k_i \approx -\beta\log i$
by least squares, where $k_i = \|v_i^\top A\|^2$ is the squared projection of
the $i$-th right singular vector onto the feature matrix, the theoretically
optimal exponent is
\begin{equation}
    c^\star = \frac{\beta}{2\alpha},
\end{equation}
which is clamped to $[-0.5, 1.5]$ and used with step size $\eta^\star(c^\star)$.

\begin{figure}[t]
  \centering
  \includegraphics[width=\linewidth]{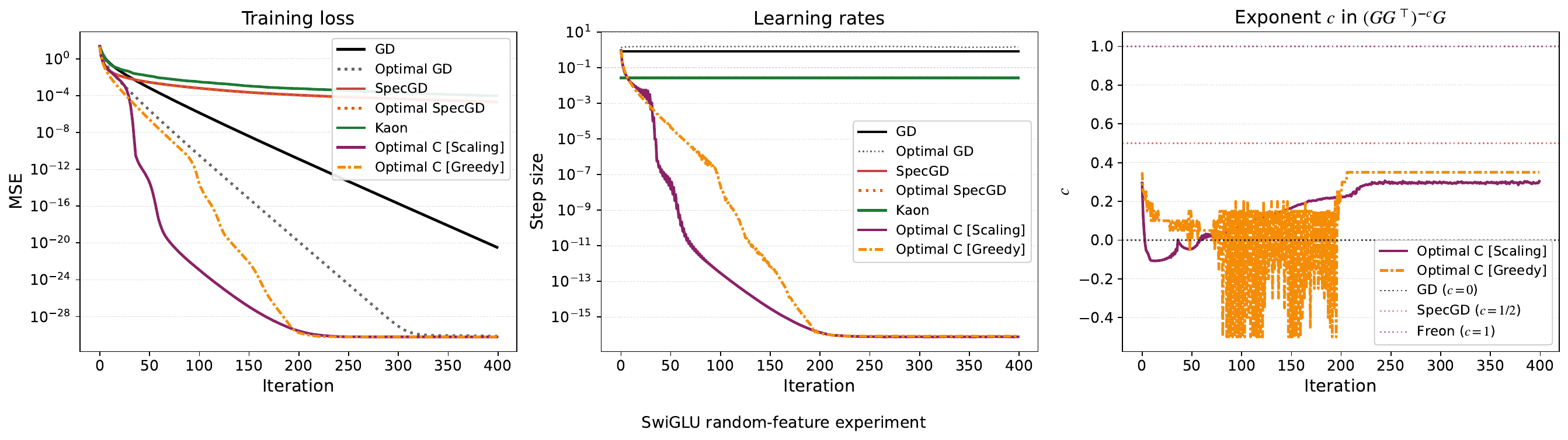}
  \caption{%
    \textbf{Random-feature regression with SwiGLU activations.}
    Same setup as Figure~\ref{fig:rf-relu} but with a SwiGLU random-feature
    matrix $A \in \mathbb{R}^{100 \times 400}$
    ($\mathrm{st}(A) \approx 29.96$, substantially higher than the ReLU
    case), trained for 400 iterations.
    \emph{Left:} training loss.
    \emph{Centre:} effective step sizes; Optimal GD exhibits strongly
    oscillatory step sizes throughout training.
    \emph{Right:} exponent $c$ in $(GG^\top)^{-c}G$ for the two
    Optimal~C methods.
    The higher stable rank of $A$ under SwiGLU changes the curvature
    landscape relative to the ReLU setting, and both Optimal~C methods
    converge to a smaller value of $c$ (closer to GD) than in the
    ReLU experiment.
  }
  \label{fig:rf-swiglu}
\end{figure}

\section{GPT2 Additional Experiments}

\begin{figure}[t]
  \centering
  \includegraphics[width=\textwidth]{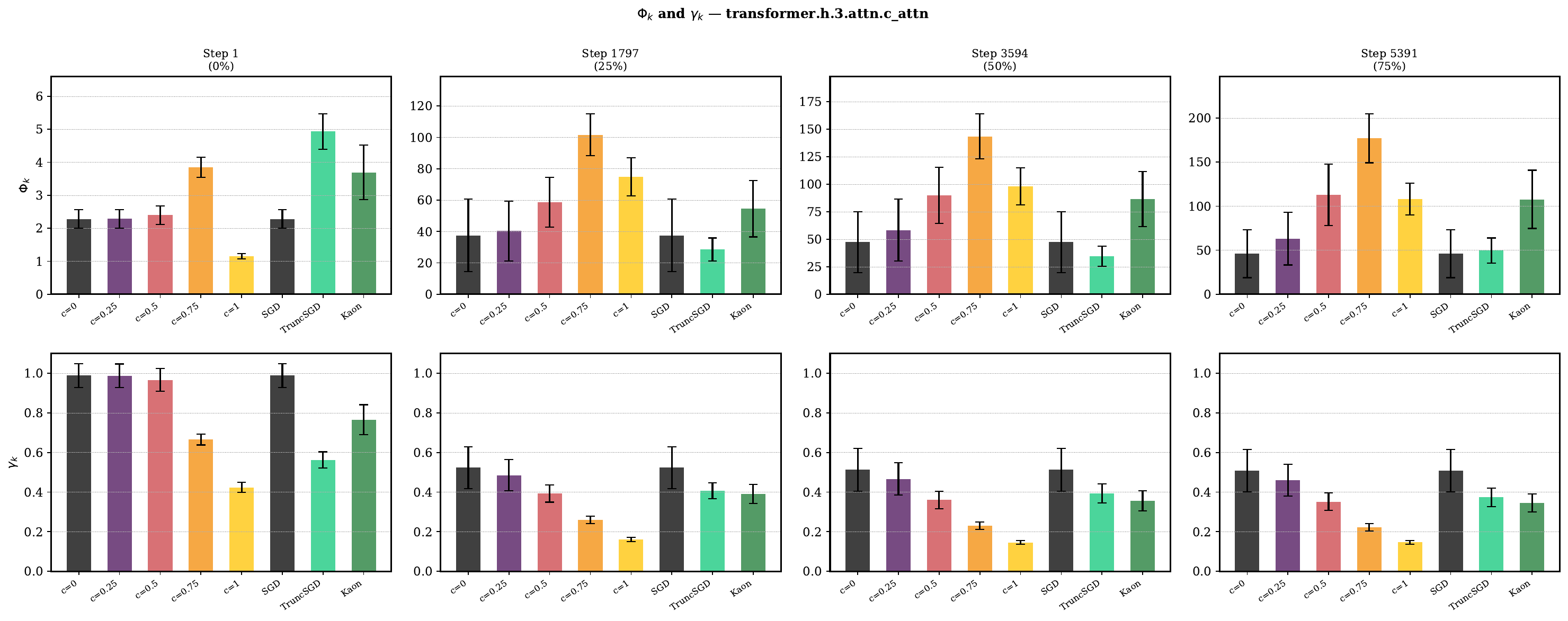}
\caption{%
Per-layer curvature alignment quantities $\Phi_k$ (top row, linear scale) and $\gamma_k$ (bottom row, linear scale) for the weight matrix \texttt{h.3.attn.c\_attn} at four training checkpoints (steps 1, 1797, 3594, 5391). Each bar shows the mean $\pm$ one standard deviation over validation batches, separately for each candidate update direction $D_k$.
The directions include \Freon's power-mean normalised descent direction at interpolation levels $c \in {0, 0.25, 0.5, 0.75, 1}$ (where $c=0$ matches \sgd, and $c=1$ is the full \Freon\ direction), \tsgd\ (batch gradient with the top 1\% of singular values zeroed), and \Kaon.
Formally, for a given direction $D_k$ and batch gradient $g_b$, we calculate $\langle g_b, D_k \rangle^2$, the directional curvature $\lambda_k = \langle D_k, \mathcal{F}_{\mathrm{val}}[ D_k ]\rangle$ where $\mathcal{F}_{\mathrm{val}}$ is the Gauss--Newton (GGN) matrix estimated over all validation batches via Jacobian--vector products, and the batch gradient alignment $\gamma_k = \langle G_k, D_k \rangle / \langle g_b, D_k \rangle$ where $G_k$ is the gradient averaged over all validation batches. A large $\Phi_k$ indicates that the direction is strongly aligned with the batch gradient and lies in a low-curvature region, supporting a larger step, while $\gamma_k \approx 1$ indicates that the batch gradient faithfully represents the global gradient in that direction.
}
  \label{fig:phi_gamma_h3_attn}
\end{figure}

\begin{figure}[h]
    \centering
    \includegraphics[width=0.65\textwidth]{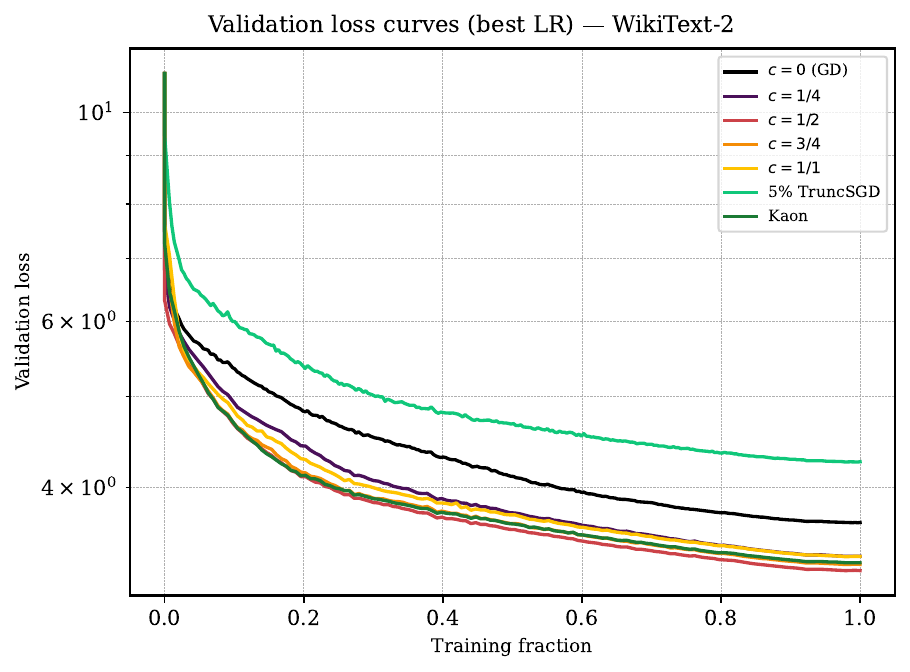}
    \caption{Validation loss as a function of training progress for each optimiser
        at its best learning rate on WikiText-2.
        Curves shown for Freon with exponent $c \in \{0,\tfrac{1}{4},\tfrac{1}{2},\tfrac{3}{4},1\}$,
        5\% TruncatedSGD, and Kaon.}
    \label{fig:val_curves_best_overview}
\end{figure}

\begin{figure}[htbp]
    \centering
    \includegraphics[width=\textwidth]{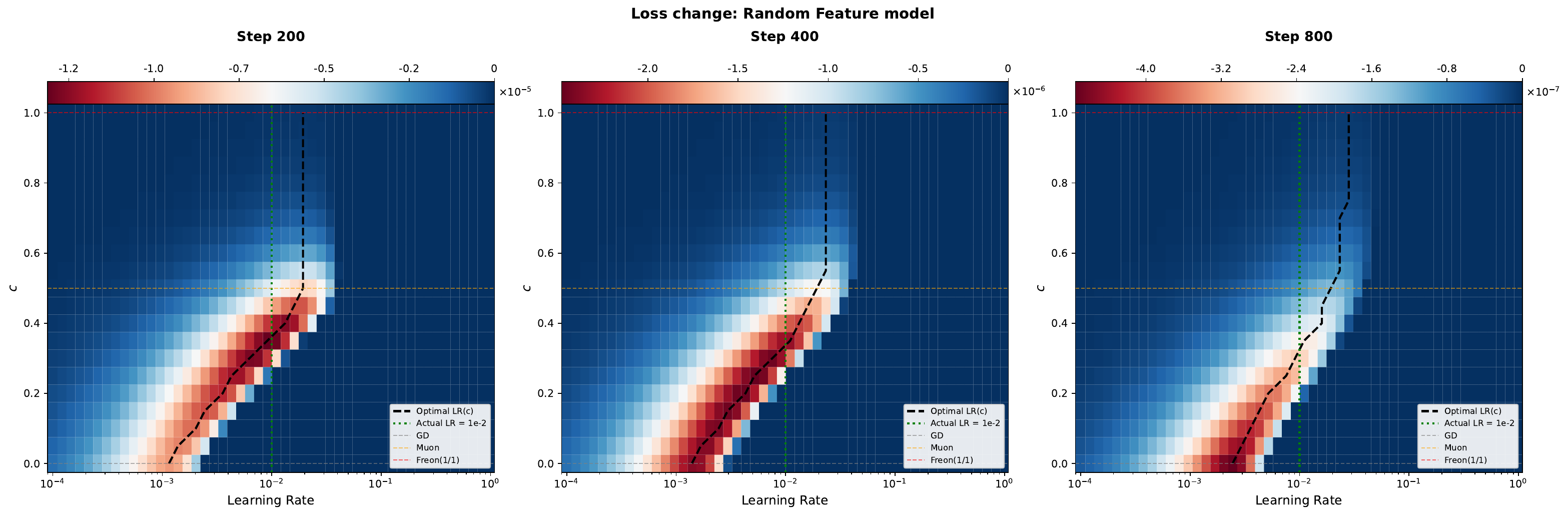}
    \caption{Loss change $\Delta  f$ over the joint $(c, \eta)$ grid for the quadratic model, evaluated at three training iterations of the random feature experiment. Each panel uses the gradient computed at that iteration to evaluate $\Delta  f(c, \eta) = -\eta\, a(c) + \eta^2 b(c)$, where $a(c) = \langle G, D_c \rangle$ and $b(c) = \frac{1}{2n}\|D_c A\|_F^2$. The dashed black curve marks the optimal learning rate per $c$; the green dotted line marks the actual training learning rate $\eta = 0.01$.}
    \label{fig:pq_lr_heatmap_rf}
\end{figure}

\begin{equation}\label{ap:p_tracking}
\Delta_p = \tfrac{\|G\|^2_{\operatorname{dual}_1(p)}}{\|A\|^2_{{\operatorname{dual}_2(p)}}},
\end{equation}
for $\tfrac{1}{p} + \tfrac{1}{\operatorname{dual} _1p} = 1$
and $\tfrac{1}{p} + \tfrac{1}{\operatorname{dual}_2p} = \frac12$.

\begin{figure}[tp]
    \centering
    \includegraphics[width=\textwidth]{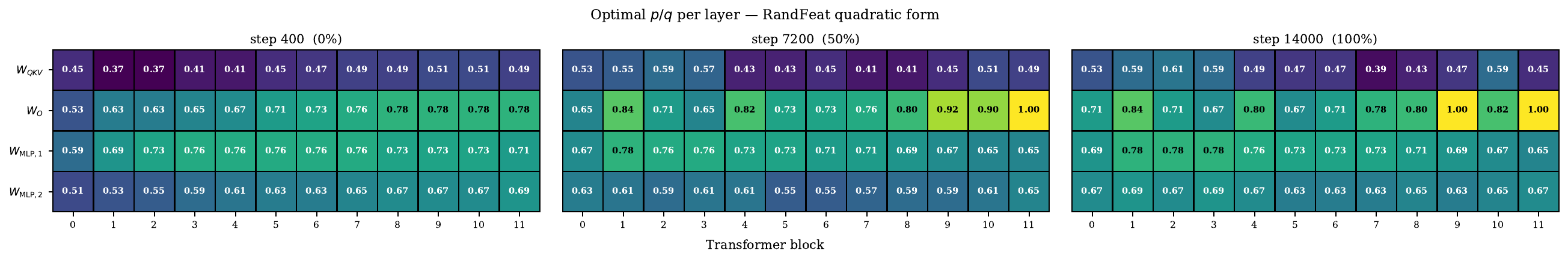}
    \caption{\textbf{Layer-wise optimal $c$ at three training snapshots (RandFeat quadratic model).}
    Each cell shows the value of $c \in [0,1]$ that minimises the predicted one-step
    loss decrease for that layer under a random-feature approximation to the 
    curvature. Panels correspond to the beginning, middle, and end of training.
    Rows correspond to the four weight matrices per transformer block
    ($W_{QKV}$, $W_O$, $W_{\mathrm{MLP},1}$, $W_{\mathrm{MLP},2}$);
    columns index the 12 transformer blocks.
    $c=0$ recovers gradient descent, $c=\tfrac{1}{2}$ recovers Muon,
    and $c=1$ recovers the pseudoinverse-like endpoint.}
    \label{fig:freon_layerwise_heatmap_rf}
\end{figure}

\begin{figure*}[tp]
    \centering
    \includegraphics[width=\textwidth]{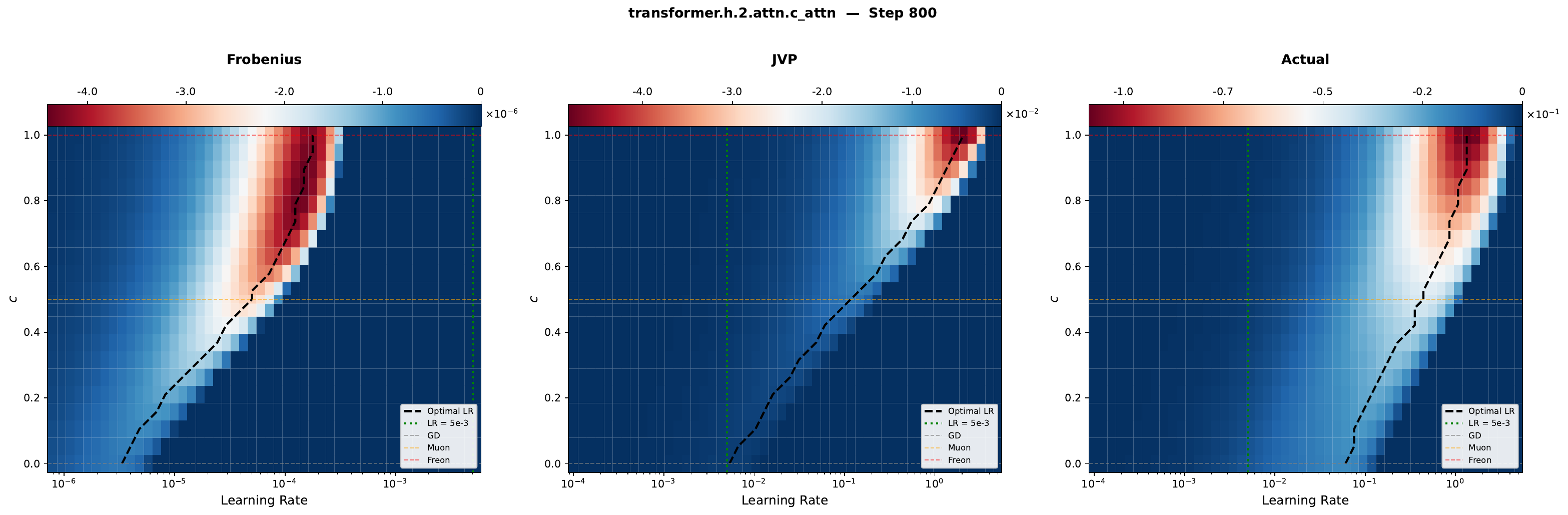}
    \caption{\textbf{Per-step loss decrease vs.\ learning rate and $c$ under three local quadratic models} (layer \texttt{h.2.attn.c\_attn}, step 800).
    \emph{Left (Frobenius):} The naive random feature model predicts $c\approx 2/3$ as optimal.
    \emph{Centre (JVP):} Under the GGN quadratic approximation on a single batch, $c=1$ achieves the largest per-step decrease -- similar to the Newton step being optimal for a fixed-batch quadratic.
    \emph{Right (Actual):} Evaluating on the actual single-batch loss agrees with the GGN picture, motivating the inter-batch SNR analysis of \Cref{sec:asymptotics}. 
    All heatmaps are clamped at zero (only improvements shown); the dashed black curve marks the per-$c$ optimal learning rate and the green dotted line the configured learning rate.}
    \label{fig:ggn_optimal_c}
\end{figure*}

\begin{figure}[htbp]
    \centering
    \includegraphics[width=\textwidth]{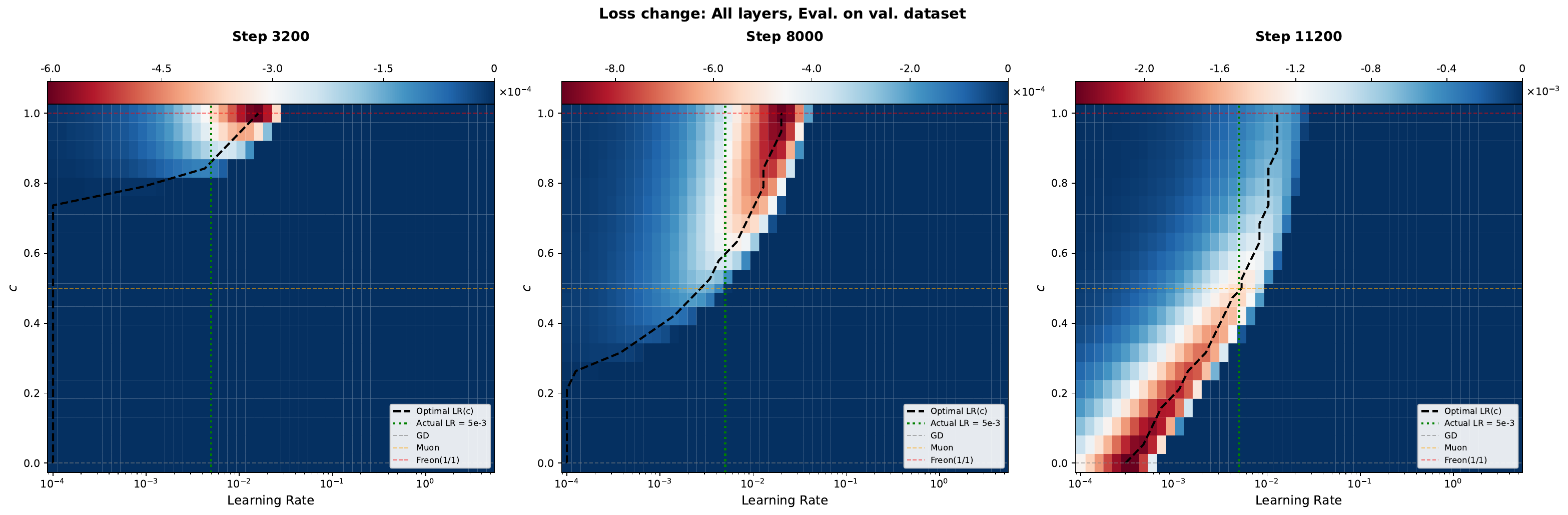}
    \caption{Validation loss change $\Delta f(c, \eta) = f(\theta - \eta, D_c) - f(\theta)$, evaluated on the full validation set after applying a single synchronised update to all model layers, over a joint grid of direction interpolation $c \in [0,1]$ (y-axis) and learning rate $\eta$ (x-axis, logarithmic). Each panel corresponds to a different training checkpoint, i.e. the gradient $G_k$ and momentum state used to construct the update direction are taken from that training step.
    For a given $c$, the per-layer update direction $D_c$ is obtained by applying the Freon power-mean preconditioner with exponent $c$ to the layer's gradient: $c=0$ recovers the vanilla gradient-descent direction ($D = G / |G|_F$) and $c=1$ is the fully preconditioned \Freon direction; intermediate values interpolate smoothly between the two. The same $c$ and $\eta$ are applied to every weight matrix simultaneously, so $\Delta f$ measures the \emph{joint} effect of a global update rather than a per-layer optimum. The heatmap is clamped at zero so that only loss-decreasing regions (negative $\Delta f$) are shown; the colour axis is scaled independently per panel. The dashed black curve traces $\eta^*(c) = \arg\min_\eta \Delta f(c, \eta)$, i.e.\ the empirically optimal learning rate for each direction interpolation $c$. The green dotted vertical line marks the actual learning rate used during training, allowing direct comparison of where training operates relative to the optimal ridge.
}
    \label{fig:pq_lr_heatmap_2}
\end{figure}

\begin{figure}[tp]
    \centering
    \includegraphics[width=\textwidth]{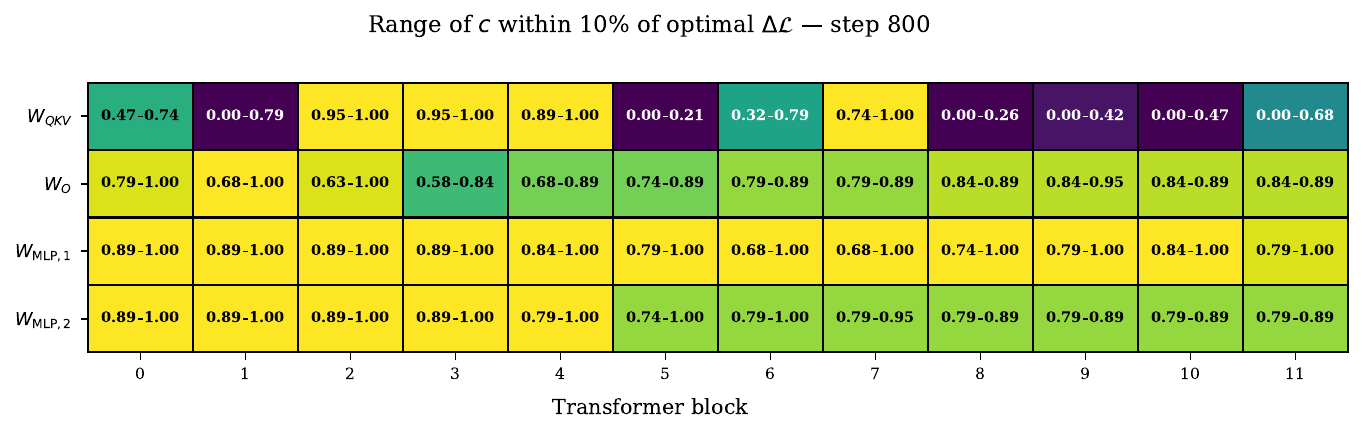}
    \caption{\textbf{Layer-wise range of near-optimal $c$ at step 800.}
    Each cell shows the range of values $c \in [0,1]$ whose best-over-$\eta$
    one-step loss decrease is within 10\% of the global optimum for that layer,
    found by a grid search over learning rates and $c$ using real forward passes
    on the full validation set (actual mode).
    Cell colour indicates the optimal $c$.
    Rows correspond to the four weight matrices per transformer block
    ($W_{QKV}$, $W_O$, $W_{\mathrm{MLP},1}$, $W_{\mathrm{MLP},2}$);
    columns index the 12 transformer blocks.
    $c=0$ recovers gradient descent, $c=\tfrac{1}{2}$ recovers Muon,
    and $c=1$ recovers the pseudoinverse-like endpoint.}
    \label{fig:freon_layerwise_range_actual}
\end{figure}



\begin{figure}[t]
  \centering
  \includegraphics[width=.6\textwidth]{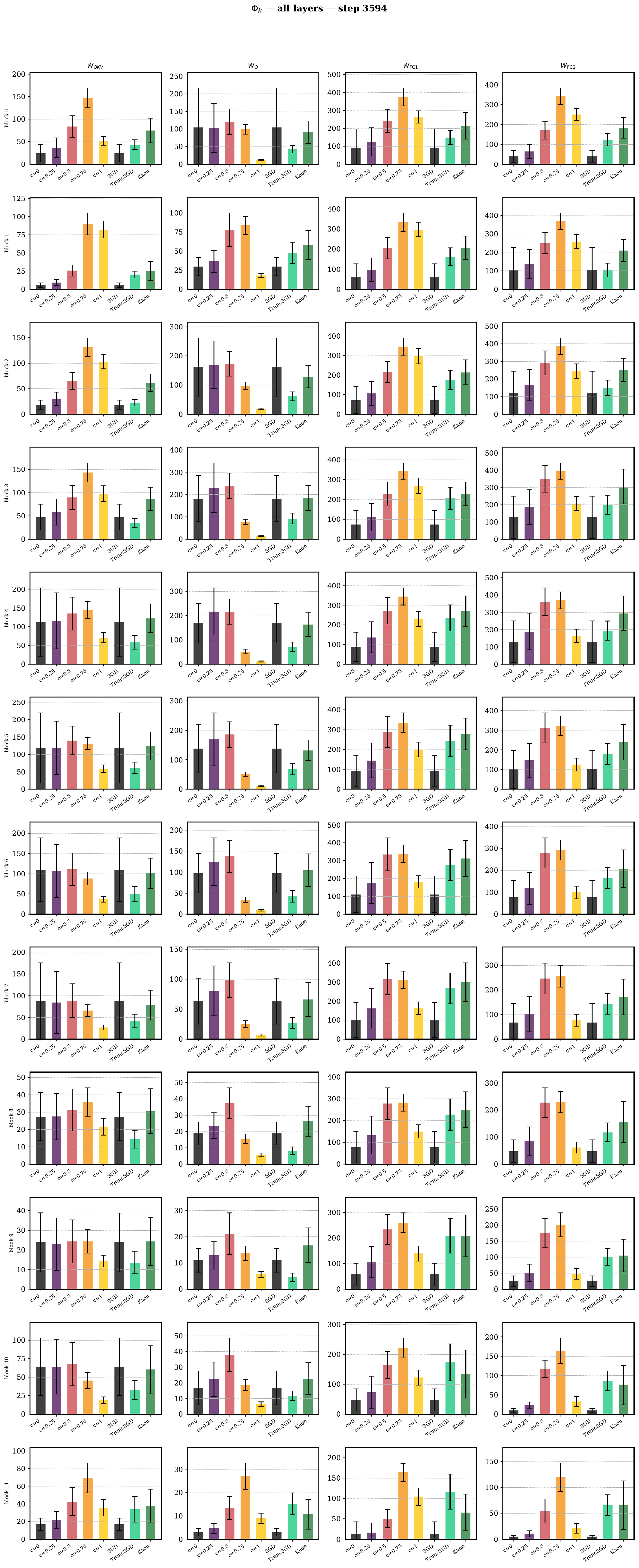}
  \caption{%
  $\Phi_k$ at training step 3594 ($\approx\!25\%$ of training)
  for all 48 weight matrices, arranged as 12 transformer blocks (rows)
  $\times$ 4 sublayer types (columns: $W_{QKV}$, $W_O$, $W_{\mathrm{FC1}}$, $W_{\mathrm{FC2}}$).
  Each bar shows the mean $\pm$ std over validation batches for a given update direction.
}

  \label{fig:grid_phi}
\end{figure}

\begin{figure}[t]
  \centering
  \includegraphics[width=.6\textwidth]{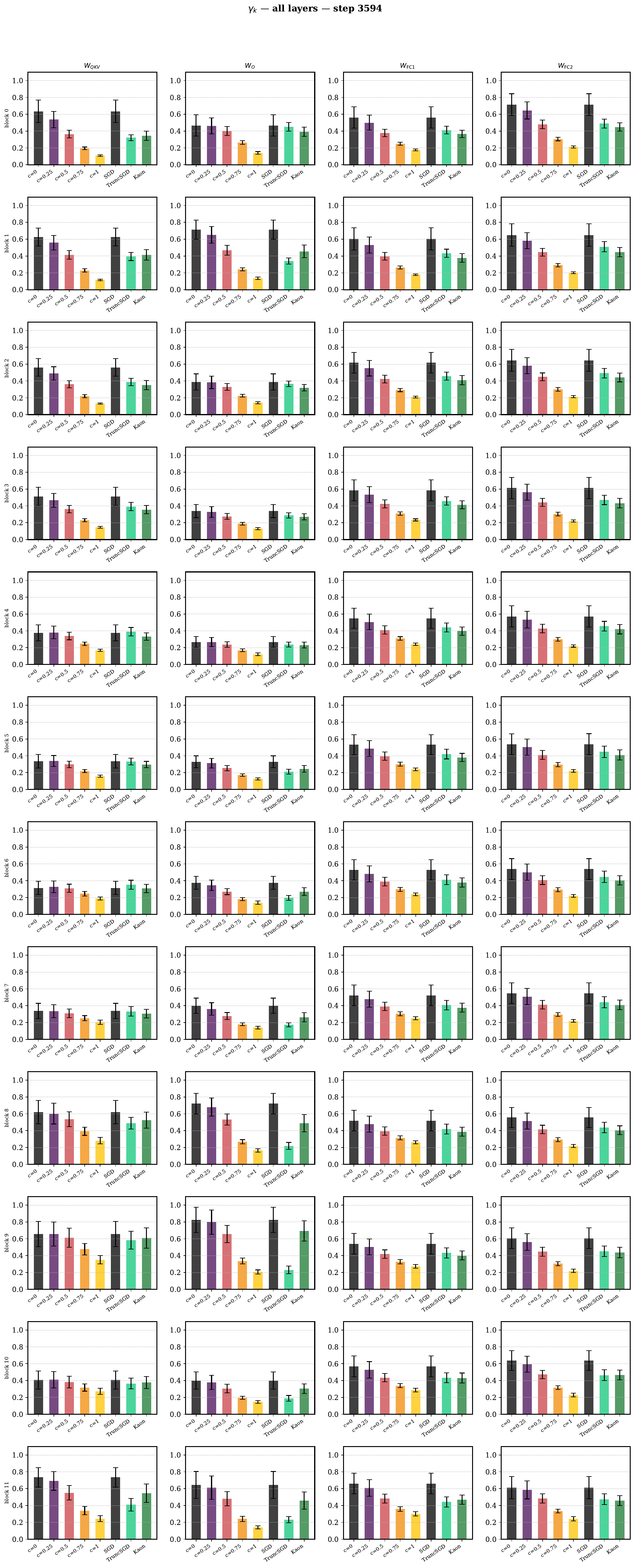}
    \caption{%
      Batch-gradient alignment $\gamma_k = \langle G, D\rangle / \langle g_b, D\rangle$
      at training step 3594 for all 48 weight matrices (same layout as
      Fig.~\ref{fig:grid_phi}).
      A value of $\gamma_k = 1$ indicates the direction $D$ aligns perfectly with the
      full gradient; values below $1$ reflect mini-batch noise or directional
      mismatch.
    }
  \label{fig:grid_gamma}
\end{figure}

\begin{figure}[htbp]
    \centering
    \includegraphics[width=.6\textwidth]{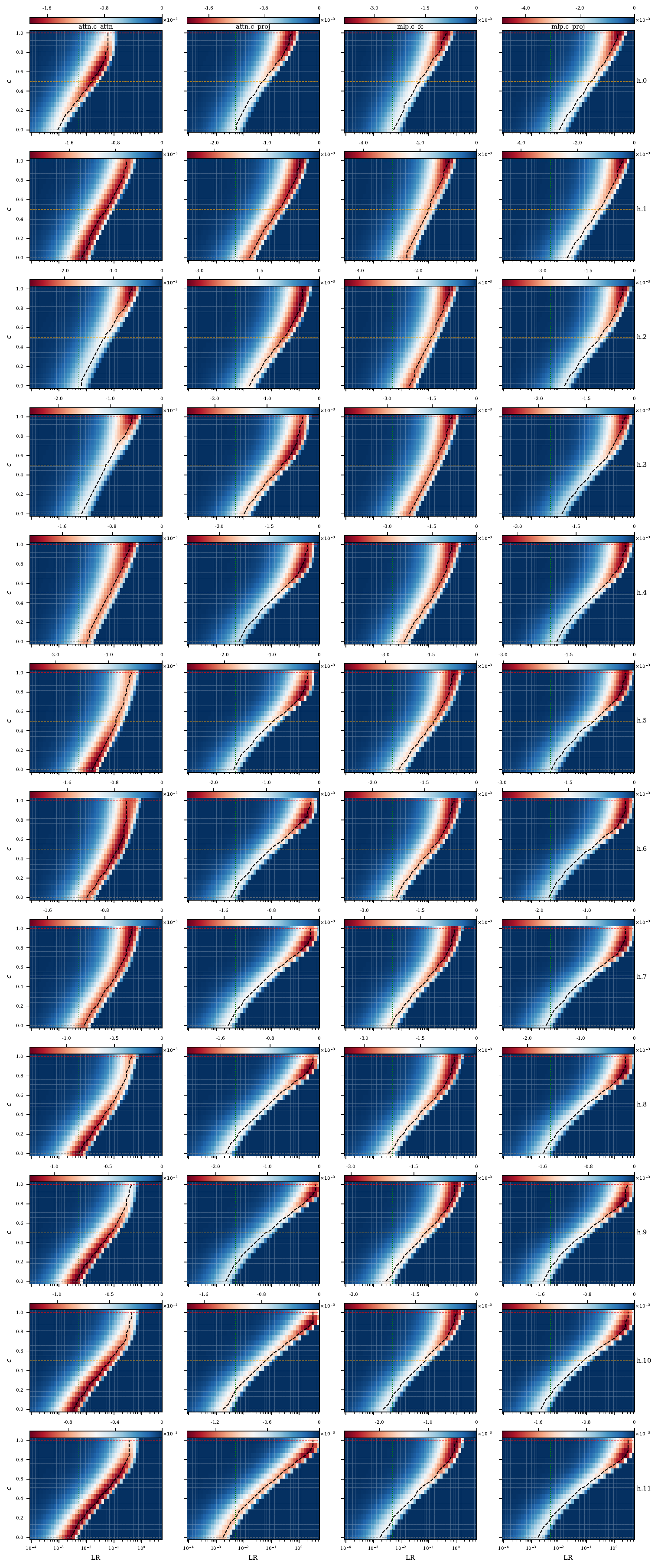}
    \caption{Loss change $\Delta  f$ over the joint $(c, \eta)$ grid at training step 800, evaluated per layer on the \textbf{full validation set}. Rows correspond to transformer blocks $h.0$–$h.11$; columns correspond to the four weight matrices within each block. The dashed black curve marks the optimal learning rate per $c$; the green dotted line marks the actual training learning rate.}
    \label{fig:pq_lr_heatmap_perlayer_val}
\end{figure}

\begin{figure}[htbp]
    \centering
    \includegraphics[width=.6\textwidth]{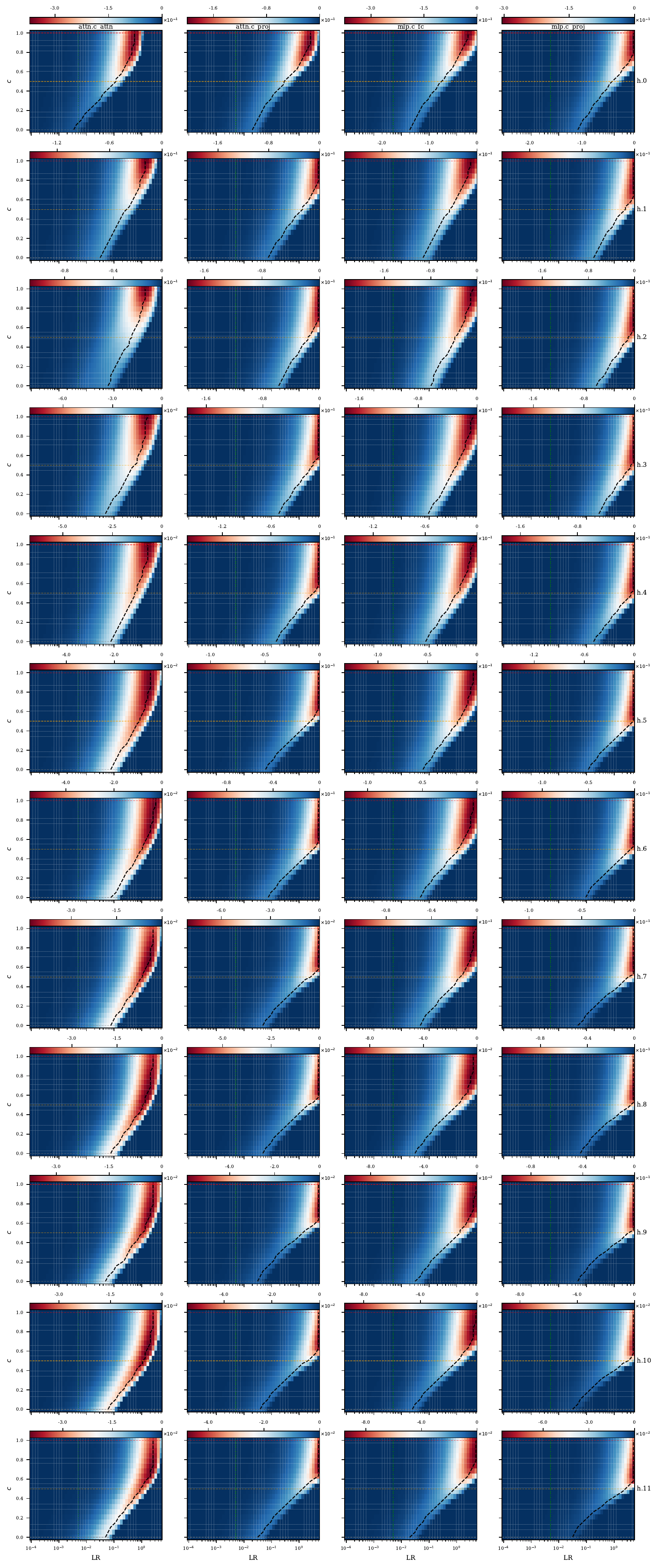}
    \caption{Loss change $\Delta  f$ over the joint $(c, \eta)$ grid at training step 800, evaluated per layer on the \textbf{same training batch as gradient}. Rows correspond to transformer blocks $h.0$–$h.11$; columns correspond to the four weight matrices within each block. The dashed black curve marks the optimal learning rate per $c$; the green dotted line marks the actual training learning rate.}
    \label{fig:pq_lr_heatmap_perlayer_train}
\end{figure}


\end{document}